\documentclass[letterpaper,10pt]{article}
\usepackage{geometry}
\usepackage{amsmath,amsthm}

\usepackage[american]{babel}
\usepackage[utf8]{inputenc}
\usepackage{times}
\usepackage{lmodern}
\usepackage{microtype}

\usepackage{amsmath}
\usepackage{amsthm}
\usepackage{amssymb}
\usepackage{amsfonts}
\usepackage{mathtools}
\usepackage{bm}
\usepackage{bbm}
\usepackage{dsfont}
\usepackage{mathrsfs}
\usepackage{empheq}

\usepackage{graphicx}
\usepackage[space]{grffile}
\usepackage[small]{caption}
\usepackage{subcaption}
\usepackage{booktabs}
\usepackage{tabularx}
\usepackage{makecell}
\usepackage{adjustbox}
\usepackage{nicefrac}
\usepackage{float}
\usepackage{placeins}

\usepackage{xcolor}
\usepackage{color}
\usepackage{soul}
\usepackage{textcomp}
\usepackage{url}
\usepackage[colorlinks=true,citecolor=blue]{hyperref}
\usepackage{cleveref}

\usepackage{xstring}
\usepackage{xparse}

\usepackage{tikz}
\usetikzlibrary{shadows,arrows.meta,positioning,backgrounds,fit,chains,scopes}
\usepackage{pgfplots}
\pgfplotsset{compat=1.18}

\usepackage{algorithm}
\usepackage{algorithmic}

\usepackage{enumitem}

\usepackage[maxnames=10,backref=true,style=alphabetic]{biblatex}
\newtheorem{theorem}{Theorem}
\newtheorem{lemma}[theorem]{Lemma}
\newtheorem{remark}{Remark}

\newtheorem{proposition}[theorem]{Proposition}
\newtheorem{corollary}[theorem]{Corollary}

\theoremstyle{definition}
\newtheorem{definition}[theorem]{Definition}
\newtheorem{assumption}[theorem]{Assumption}

\newcommand{\citet}[1]{\citeauthor*{#1}~\cite{#1}}
\DeclareMathOperator{\diag}{diag}

\newcommand{\kp}{\mathsf P^{s,a}}
\newcommand{\kpn}{\widehat{\mathsf P}_n^{s,a}}
\newcommand{\kpt}{\mathsf P^{s,a}_\mathrm{t}}
\newcommand{\kps}{\mathsf P^{s,a}_\mathrm{s}}
\newcommand{\kpp}{\mathsf P^{'}}
\newcommand{\kppn}{\widehat{\mathsf P}_{n}}

\newcommand{\kph}{\widehat{\mathsf{P}}^{s,a}}
\newcommand{\kphs}{\widetilde{\mathsf{P}}_n^{s,a}}
\newcommand{\Phn}{\widehat{\mathcal{P}}_{n}}
\newcommand{\Pt}{\mathcal P_{\mathrm t} \vphantom{\Phn}}
\newcommand{\BTV}[2]{\mathbb{B}_{\mathrm{TV}}(#1,#2)} 
\newcommand{\Rect}{\bigotimes_{s,a}}                               

\title{Robust Transfer Learning with Side Information}

\begin{document}

\author{Akram S. Awad\(^1\)\\ \texttt{Akram.Awad@ucf.edu}
\and Shihab Ahmed\(^1\) \\ \texttt{Shihab.Ahmed@ucf.edu}
\and Yue Wang\(^{1,2}\)\\ \texttt{Yue.Wang@ucf.edu} \and George K. Atia\(^{1,2}\) \\ \texttt{George.Atia@ucf.edu}\\ \\ \(^{1}\) Department of Electrical and Computer Engineering, University of Central Florida, USA
\\ \(^{2}\) Department of Computer Science, University of Central Florida, USA
}
\date{~}
\maketitle
\begin{abstract}
Robust Markov Decision Processes (MDPs) address environmental shift through distributionally robust optimization (DRO) by finding an optimal worst-case policy within an uncertainty set of transition kernels. However, standard DRO approaches require enlarging the uncertainty set under large shifts, which leads to overly conservative and pessimistic policies. 
 In this paper, we propose a framework for transfer under environment shift that derives a robust target-domain policy via \emph{estimate-centered} uncertainty sets, constructed through constrained estimation that integrates limited target samples with side information about the source-target dynamics. The side information includes bounds on feature moments, distributional distances, and density ratios, yielding improved kernel estimates and tighter uncertainty sets.
 The side information includes bounds on feature moments, distributional distances, and density ratios, yielding improved kernel estimates and tighter uncertainty sets.
  Error bounds and convergence results are established for both robust and non-robust value functions. Moreover, we provide a finite-sample guarantee on the learned robust policy and analyze the robust sub-optimality gap. Under mild low-dimensional structure on the transition model, the side information reduces this gap and improves sample efficiency. We assess the performance of our approach across OpenAI Gym environments and classic control problems, consistently demonstrating superior target-domain performance over state-of-the-art robust and non-robust baselines.
\end{abstract}

\section{Introduction}

Transfer reinforcement learning (RL) seeks to leverage knowledge from a source environment to accelerate and stabilize learning in a related target environment. Such a scheme is essential in real-world applications where collecting sufficient data in the target environment is costly, dangerous, or otherwise infeasible, and where policies must instead be transferred from simulation or a related operational setting.

However, the difference between the two environments,  often referred to as the \emph{sim-to-real gap} or \emph{environmental mismatch}, arise from modeling errors in simulation, unmodeled disturbances, adversarial perturbations, or non-stationary conditions, and can result in severe performance degradation when directly deploying trained policies. This underscores the need for principled transfer mechanisms that reduce adaptation time and minimize unduly data collection in the target domain.

A popular approach for transfer under model mismatch is the framework of \emph{robust MDPs} and \emph{robust RL} \cite{bagnell2001solving,nilim2004robustness,iyengar2005robust}, which constructs an uncertainty set of transition kernels centered on the source environment. By optimizing the worst-case performance, robust RL yields policies that guarantee a lower bound on return when the target kernel lies within the set, enhancing robustness when deployed to out-of-distribution environments. This enhancement makes robust RL an attractive approach to transfer. However, when the two environments are substantially different, the uncertainty set must be expanded to cover the target one, often leading to overly conservative solutions. Such pessimism can cause robust policies to underperform in the target domain.

To mitigate over-conservatism, alternative transfer methods such as multi-task learning, domain randomization, and model-free domain adaptation have been explored (see Sec. \ref{section: related work}). Multi-task RL seeks to learn representations that generalize across related tasks, while domain randomization exposes agents to diverse simulated conditions to promote robustness. Model-free adaptation methods reweight or adjust source samples to approximate the target distribution. Yet, these methods often fail when the target domain diverges sharply from the training conditions, as they do not explicitly account for the structure of uncertainty in the transition dynamics.

In this work, we develop a framework for robust transfer that explicitly leverages \emph{side information}, i.e., prior knowledge about the relationship between the source and target environments. Such information may take the form of distance constraints, density ratios, or moment conditions that capture statistical or structural similarities between domains. By integrating side information with limited target samples, we construct improved estimates of the target transition kernel, yielding policies that are closer to optimal for the target and less conservative than standard robust RL. In effect, side information reduces the sim-to-real gap by anchoring the uncertainty set around an estimated target model rather than the source, thereby decreasing the adaptation time or the data required to achieve reliable performance.

We consider both the \emph{non-robust setting}, which learns the optimal policy under the estimated target model, and the \emph{robust setting}, which guards against residual model uncertainty around this estimate. Our main contributions are:
\begin{enumerate}[label=(\arabic*), leftmargin=*, itemsep=0pt, topsep=2pt, parsep=0pt, partopsep=0pt]
    \item We develop a side-information--based framework for estimating target transition kernels and learning robust policies, integrating structural constraints into the estimation process. 
    \item We derive error bounds and establish convergence results for robust and non-robust value functions, providing asymptotic guarantees in terms of total variation distance and quality of side information. 
    \item We provide finite-sample guarantees on the robust optimality gap under mild assumptions on the transition kernel subspace, demonstrating that side information reduces suboptimality and improves sample efficiency. 
    \item Our approach is validated through experiments on OpenAI Gym and classic control tasks, showing consistent improvements over state-of-the-art baselines in both robust and non-robust settings. 
\end{enumerate}

\section{Preliminaries and Problem Setup}\label{section: pre and problem}

\noindent\textbf{Markov Decision Process (MDP).}
An MDP is denoted by the tuple $ \mathcal{M} =(\mathcal{S},\mathcal{A}, \mathsf P, r, \gamma)$, whose main components are the state space \(\mathcal{S}\), the Action space $\mathcal{A}$, the discount factor $\gamma\in (0,1)$, the reward function $r: \mathcal{S} \times \mathcal{A}\to\mathbb{R}$, the transition kernels $\mathsf P=\{\kp\in \Delta(\mathcal{S}), a\in\mathcal{A}, s\in\mathcal{S} \}$\footnote{$\Delta(\mathcal{S})$ denotes the probability simplex over $\mathcal{S}$.}, where $\kp$ is the distribution over the state space $\mathcal{S}$ of the next state conditioned on state $s$ and action $a$. Hence, $\mathsf P^{s,a} (s')$ denotes the probability of transitioning to state $s'$ if the agent is in state $s$ and takes action $a$.
At each time step $t\geq 0$, the environment transitions to a state $s_{t+1}$ according to the transition probability $\mathsf P^{s_t,a_t} (s_{t+1})$ and yields a reward $r(s_{t},a_{t})$ for the agent. \par

A stationary policy $\pi: \mathcal{S}\rightarrow\Delta(\mathcal{A}) $ is a distribution over the set of actions $\mathcal{A}$ for a given state $s$, which determines the probability of selecting a given action at a certain state. The value function of a stationary policy $\pi$ at state $s$ is defined as the expected discounted cumulative reward if the agent starts from state $s$ and takes actions according to policy $\pi$, i.e., 
\begin{align}
    V^\pi_\mathsf P(s)=\mathbb{E}_{\pi,\mathsf P}\left[\sum_{t=0}^{\infty}\gamma^t r(s_{t},a_{t})\big| S_0=s\right]. 
\end{align}
The goal of the agent is to learn a stationary policy to maximize the expected cumulative reward, i.e.,
  $ \pi^* = \arg\max_{\pi}V^{\pi}_\mathsf P.$

\noindent\textbf{Robust MDP.}
A robust MDP is defined by the tuple $(\mathcal{S},\mathcal{A}, \mathcal{P}, r, \gamma)$, where the transition kernel is not fixed but comes from an uncertainty set $\mathcal{P}$. The environment can transit to the next state according to an arbitrary transition kernel belonging to the uncertainty set.  
In this work, we consider the $(s,a)$-rectangular uncertainty set \cite{nilim2004robustness,iyengar2005robust},  
i.e., $\mathcal{P}=\bigotimes_{s,a} \mathcal{P}_{s}^{a}$, where $ \mathcal{P}_{s}^{a} \subseteq \Delta(\mathcal{S})$ and $\bigotimes_{s, a}$ is the Cartesian product over all state--action pairs. 
The robust discounted value function of policy $\pi$ at state $s$ is defined as
\begin{align}\label{eq:Vdef}
    V^\pi_\mathcal{P}(s)\triangleq \min_{\eta\in\bigotimes_{t\geq 0} \mathcal{P}} \mathbb{E}_{\pi,\eta}\left[\sum_{t=0}^{\infty}\gamma^t   r(s_t,a_t)|S_0=s\right],
\end{align}

where \(\eta=(\mathsf P_0,  \mathsf P_1, \dots)\). This accounts for the worst-case performance over the uncertainty set of transition kernels. The goal is to optimize the worst-case performance by maximizing the robust value function, i.e.,
\begin{equation}\label{eqn: robust_value}
   \pi^* = \arg\max_{\pi}V^\pi_\mathcal{P}.
\end{equation}
Equation \eqref{eqn: robust_value} finds the optimal policy that maximizes the worst-case value function over the uncertainty set of distributions $\mathcal{P}$. 

\paragraph{\textbf{Problem setup.}} We consider an environment shift scenario, where an agent is trained in a source environment and subsequently deployed in a related, but distinct, target environment. This setting is formalized using two MDPs: the source domain MDP $\mathcal{M}_\text{s} = (\mathcal{S}, \mathcal{A}, \mathsf{P}_\text{s}, r, \gamma)$ and the target domain MDP $\mathcal{M}_\text{t} = (\mathcal{S}, \mathcal{A}, \mathsf{P}_\text{t}, r, \gamma)$, which only differ in their transition dynamics, i.e., $\mathsf{P}_\text{s} \neq \mathsf{P}_\text{t}$. Both $\mathcal{S}$ and $\mathcal{A}$ are assumed to be finite sets of sizes $S$ and $A$, respectively. The agent is provided with a fixed offline dataset $\mathcal{D} = \{(s_i, a_i, s'_i)\}_{i=1}^N$, consisting of $N$ transitions. Each tuple is generated according to some distribution $\mu$ over state-action pairs, i.e., $(s_i, a_i) \sim \mu$, and the next state $s'_i$ is sampled from the nominal target transition kernel, $s'_i \sim \mathsf{P}_\text{t}^{s_i,a_i}$. In addition, we assume the availability of side information $\Phi(\mathsf{P}_\text{s}, \mathsf{P}_\text{t})$, which captures structural or statistical relationships between the source and target transition kernels, defined in Section \ref{section:  Main Approach}. 

Our objective is to learn policies for the target domain using limited offline samples from the target MDP together with side information, under (i) a non-robust setting and (ii) a robust setting that accounts for transition uncertainty.

\section{Main Approach}
\label{section:  Main Approach}
A key challenge is that the target environment is observable only through a \emph{limited offline sample}. A natural baseline is offline RL \cite{uehara2021pessimistic,wang2024achieving}, but reliable performance typically requires abundant, high-quality data; with scarce samples, offline RL often yields suboptimal policies due to coverage gaps and extrapolation error \cite{levine2020offline}. Robust MDPs address distribution shift by optimizing over an uncertainty set \emph{centered at the source} transition kernel \cite{wang2023robust,wiesemann2013robust,tamar2014scaling,lim2019kernel,xu2010distributionally}. However, when the source--target shift is large, the uncertainty set must be enlarged to include the target dynamics, inducing excessive pessimism and degraded performance in the target domain.

Our key idea is to transfer knowledge from source to target by estimating the target transition kernel using limited offline target data together with side information that encodes relationships between the domains (e.g., bounds on moments, distributional distances, or density ratios). We then optimize policies \emph{around this estimated target kernel} rather than around the source. Intuitively, if the estimate is closer to the true target dynamics than the source is, uncertainty sets centered at the estimate require smaller radii to cover the target, reducing conservatism while retaining robustness.
We study both (i) a \emph{non-robust} regime that learns the optimal policy for the estimated target model, and (ii) a \emph{robust} regime that guards against residual model uncertainty around that estimate.
\smallbreak
\noindent\textbf{Uncertainty set construction}. For radius $R\!\ge\!0$, we define the $(s,a)$-rectangular set centered at \(\mathsf P\) as
\begin{align}
\mathcal P(\mathsf P,R) \;\triangleq\; \Rect\, \BTV{\kp}{R}\:,
\end{align}

where $\BTV{p}{R}=\{q\in\Delta(\mathcal S):\|q-p\|_{\mathrm{TV}}\le R\}$ denotes the TV ball of radius $R$ centered at $p$. Hence, $\mathcal P(\mathsf P,R)$ is the Cartesian product of per-$(s,a)$ TV-balls centered at $\kp$. The non-robust regime is the special case $R=0$.

\smallbreak
\noindent\textbf{Model-based pipeline.}
Our approach is model-based and proceeds in three steps:
\begin{enumerate}[label=(\arabic*),leftmargin=*,labelsep=0.3em,itemsep=0pt,parsep=0pt,topsep=0pt,partopsep=0pt]
\item \textbf{Estimate target dynamics.} Using limited target samples and side information, compute a constrained estimator $\widehat{\mathsf P}=\{\kph\in \Delta(\mathcal{S}), a\in\mathcal{A}, s\in\mathcal{S}\}$ of the target kernel $\mathsf P_{\mathrm t}$ (Sec.~\ref{subsection: estimation with side info}).
\item \textbf{Optimize a policy.} 
\begin{align*}
\text{(Non-robust)}\quad & \pi^\star \in \arg\max_{\pi} \; V_{\widehat{\mathsf P}}^{\pi},\\[-0.25em]
\text{(Robust)}\quad & \pi^\star \in \arg\max_{\pi}\;\min_{Q\,\in\,\mathcal P(\widehat{\mathsf P},R')}\; V_{Q}^{\pi}.
\end{align*}
where $\mathcal{P}(\widehat{\mathsf P},R')$ is defined w.r.t $\widehat{\mathsf P}$.

\item \textbf{Evaluate on target.} Assess $V^{\pi^\star}$ in the target domain (non-robust) or its worst-case value over a target-domain uncertainty set (robust).
\end{enumerate}


When $\widehat{\mathsf P}$ is closer (in TV) to $\mathsf P_{\mathrm t}$ than the source $\mathsf P_{\mathrm s}$ is, the radius needed to cover the target is smaller, yielding tighter worst-case evaluations and less pessimistic policies while preserving robustness guarantees. Our theory (Sec.~\ref{sec:analysis}) quantifies this via bounds that scale with $\max_{(s,a)}\|\widehat{\mathsf P}^{s,a}-\mathsf P_{\mathrm t}^{s,a}\|_{\mathrm{TV}}$.

 Throughout this paper, we denote the target- and source-centered uncertainty sets by \(\mathcal{P}(\mathsf P_{\mathrm t},R)\) and \( \mathcal{P}(\mathsf P_{\mathrm s},R)\), respectively, and omit the dependence of the uncertainty set on the center when clear, e.g., \(\mathcal{P}_{\mathrm t}(R) = \mathcal{P}(\mathsf P_{\mathrm t},R)\),  \(\mathcal{P}_{\mathrm s}(R) = \mathcal{P}(\mathsf P_{\mathrm s},R)\), and \(\widehat{\mathcal{P}}(R) = \mathcal{P}(\widehat{\mathsf P},R)\).

Figure~\ref{fig: DRO_and_DA} overviews the setting. Panel~\ref{fig: Env_shift} shows a large environment shift between $\mathsf P_{\mathrm s}$ and $\mathsf P_{\mathrm t}$.
Panel~\ref{fig: Over_cons} illustrates the over-conservative strategy that centers an uncertainty set $\mathcal P_{\mathrm s}  (R_{\mathrm s})$ of radius $R_{\mathrm s}\!\ge\!R$ at $\mathsf P_{\mathrm s}$. 
Panel~\ref{fig: our approach} depicts our method: we center $\widehat{\mathcal{P}}(R')$ at the estimate $\widehat{\mathsf P}$, requiring a smaller $R'$ and improving target performance. 
In the non-robust case, both training and evaluation use singleton sets ($R=R'=0$).

\begin{figure*}
  \centering

  \begin{subfigure}{0.22\textwidth}
    \includegraphics[width=\linewidth]{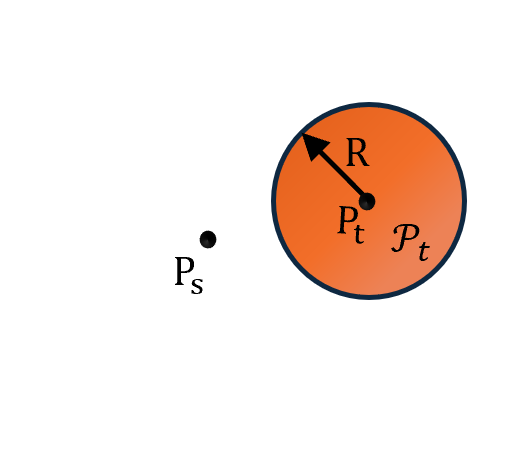}
    \caption{}
    \label{fig: Env_shift}
  \end{subfigure}
 \hspace{1cm} 
  \begin{subfigure}{0.22\textwidth}
    \includegraphics[width=\linewidth]{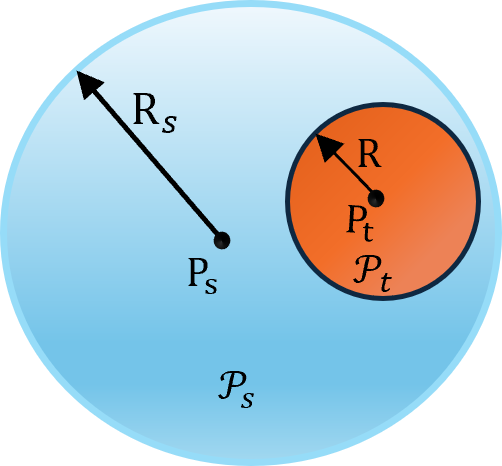}
    \caption{ }
    \label{fig: Over_cons}
  \end{subfigure}
  \hspace{1cm} 
  \begin{subfigure}{0.22\textwidth}
    \includegraphics[width=\linewidth]{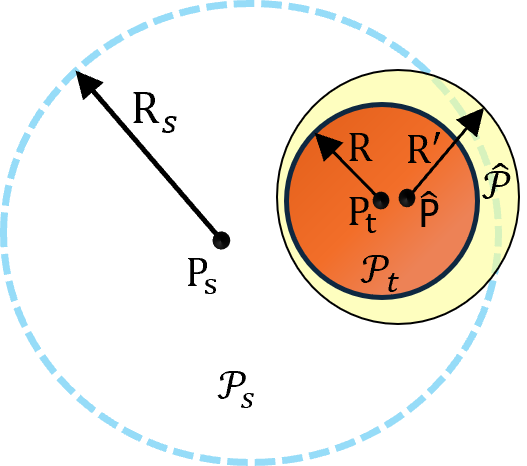}
    \caption{}
    \label{fig: our approach}
  \end{subfigure}

 \caption{\small{\textbf{(a) Environment shift:} The source and target domain environments are relatively distant. \textbf{(b) Over-conservative case:} the source uncertainty set's radius is enlarged to include the target domain, which leads to an overly conservative policy.} \textbf{(c) Our approach}: We construct the uncertainty set around the estimated target dynamics, which are closer to the true target dynamics and therefore the set requires a smaller radius.} 
  \label{fig: DRO_and_DA}
\end{figure*}

\subsection{Information-Based Estimation} 
\label{subsection: estimation with side info}

We propose an estimator for the target transition kernel that integrates limited offline target data with \emph{side information} about the relationship between source and target dynamics. Given the dataset  $\mathcal{D}=\{(s_i,a_i,s'_i)\}_{i=1}^N$
from the target MDP, let $N(s,a)=\sum_{i=1}^N \mathbf{1}\{(s_i,a_i)=(s,a)\}$ and $N_{s,a}(s')=\sum_{i=1}^N \mathbf{1}\{(s_i,a_i,s'_i)=(s,a,s')\},$ where $\mathbf{1}\{.\}$ denotes the indicator function. For each $(s,a)$, we estimate the next-state distribution $\widehat{\mathsf P}^{s,a}\in\Delta(\mathcal S)$ as
\begin{equation}
\label{eqn: s,a -est}
\widehat{\mathsf P}^{s,a}\triangleq
\begin{cases}
\begin{aligned}[t]
&\arg\max_{q\in\Delta(\mathcal S)} && \hspace{-4mm} \sum_{s'\in\mathcal S} N_{s,a}(s') \log q(s')\\
&\text{subject to}                 && \hspace{-2mm} \Phi\bigl(q,\mathsf P^{s,a}_{\mathrm s}\bigr),
\end{aligned}
& \hspace{-1mm} N(s,a)>0,\\
q_0^{s,a}, & \hspace{-2mm} N(s,a)=0.
\end{cases}
\end{equation}

where $\Phi(\cdot,\mathsf P^{s,a}_{\mathrm s})$ encodes side information tying the target to the source, and $q_0^{s,a}$ is a prior-informed default when $(s,a)$ is unseen. In our experiments we set $q_0^{s,a}=\mathsf P^{s,a}_{\mathrm s}$ or uniform $q_0^{s,a}=\mathbf{1}/S$.  
%
We refer to~\eqref{eqn: s,a -est} as the \emph{Information-Based Estimator (IBE)}.

\noindent\textbf{Forms of side information.}
We assume that, for each $(s,a)$, the source--target relationship satisfies $\Phi\!\big(\kpt,\kps\big)$. In addition to a no-side-information baseline (\textbf{Vanilla IBE}: unconstrained MLE on offline target transition counts), we instantiate $\Phi$ in four ways, yielding the variants summarized in Table~\ref{tab: estimation methods}. The constants ($d_{s,a},\beta_{s,a},B_{s,a}$) below may be per-$(s,a)$ or global. 

\begin{enumerate}[label=(\arabic*),leftmargin=0pt,itemindent=1.5em,
itemsep=0pt,parsep=0pt,topsep=0pt,partopsep=0pt]
\item \textbf{Distance IBE.}
Bound the discrepancy to the source \(\mathrm{dist}\!\big(q,\kps\big)\le d_{s,a}\),
where $\mathrm{dist}$ is either total variation $\|q-\kps\|_{\mathrm{TV}}$ or Wasserstein-1 $W_1(q,\kps)$ with a specified ground cost (Appendix~\ref{Appendix: def.}), reflecting the assumption $\mathrm{dist}\!\big(\kpt,\kps\big)\le d_{s,a}$. In robotics and control, such bounds arise naturally from known limits on physical parameter variations (e.g., friction coefficients, actuator gains, or payload mass estimated via calibration) and can be derived formally from Lipschitz continuity of the dynamics in the parameters (Appendix~\ref{Appendix: side_info}). In finite state spaces, the radius
 $d_{s,a}$ can also be estimated from source and target samples via minimax-optimal divergence estimators \cite{jiao2018minimax,nguyen2010estimating}.

\item \textbf{Moment IBE.}
Constrain feature moments. Let $\phi:\mathcal S\to\mathcal H$, mapping the state space to a Hilbert space, and $\mu(\mathsf P)=\mathbb E_{s'\sim \mathsf P}[\phi(s')]$. Impose \(|\mu(q)-\mu(\kps)|\le \beta_{s,a}\),
matching the assumption $|\mu(\kpt)-\mu(\kps)|\le \beta_{s,a}$. 
This captures coarse aggregate information when full transition distributions are unavailable--- for instance, known bounds on average velocity or energy dissipation in control systems---and is standard in empirical likelihood and conditional moment restriction frameworks \cite{kremer2022functional}.

\item \textbf{Density IBE.}
Assume absolute continuity and a bounded density ratio on the relevant support: \(0\le \frac{\kpt(s')}{\kps(s')}\le B_{s,a}\quad (\text{whenever }\kps(s')>0)\),
and enforce the element-wise constraint \(0\le q(s')\le B_{s,a}\,\kps(s'),~ \forall s'\in\mathcal S\). The cap $B_{s,a}$ encodes practical limits on importance reweighting under distribution shift, preventing excessive variance from extreme weights, and can be estimated from paired source--target samples via ratio estimation methods~\cite{kanamori2009least,nguyen2010estimating}. If support mismatch is possible, we pre-smooth $\kps$ with small pseudocounts.

\item \textbf{Low-Dimensional Structure (LDS) IBE.}
For each $(s,a)$, let $\{\kp_\theta\}_{\theta\in\Theta\subset\mathbb R^{\mathsf d}}$ be a parametric family (e.g., softmax). Write
$\kps=\kp_{\theta_{\mathrm s}}$ and $\kpt=\kp_{\theta_{\mathrm t}}$.
Let $\mathcal I_{\mathrm{shared}}\subseteq\{1,\dots,\mathsf d\}$ be the coordinates shared by source and target, and define the affine subspace \(\Theta_0 \;\triangleq\; \bigl\{\theta\in\Theta:\ \theta(\mathcal I_{\mathrm{shared}})=\theta_{\mathrm s}(\mathcal I_{\mathrm{shared}})\bigr\},\)
whose intrinsic dimension is $\mathsf d_0 = \mathsf d - |\mathcal I_{\mathrm{shared}}| \ll \mathsf d$. Such shared coordinates arise naturally when source and target differ in only a subset of physical parameters (for instance, when kinematics are preserved across domains but actuator gains or payload vary) so that the intrinsic degrees of freedom governing the shift are low-dimensional. Estimate the remaining $\mathsf d_0$ free coordinates by constrained MLE (CMLE) on offline target counts: 
\[
\widehat\theta_{\mathrm t}\in
\arg\max_{\theta\in\Theta_0}\ \sum_{s'\in\mathcal S} N_{s,a}(s')\log \kp_\theta(s'),
~~
\widehat{\mathsf{P}}^{s,a}=\kp_{\widehat\theta_{\mathrm t}}\!.
\]
If $N(s,a)=0$, use the uniform-initialized default \(1/S\).
\end{enumerate}

\textsc{LDS-IBE} explicitly adopts the softmax parameterization and enforces a low-dimensional structure on the transition kernels. As discussed later in Sec.~\ref{section: opt_gap}, this structural prior facilitates finite-sample guarantees and a suboptimality-gap analysis.

We note that stronger priors (tighter $\Phi$) shrink the feasible set in~\eqref{eqn: s,a -est}, yielding more concentrated estimators and improved lower bounds on variance, which we formalize using constrained Cramér–Rao bounds in Appendix \ref{Appendix: Cramer-Rao}. 

\begin{table*} 
\centering
\caption{\small{Information-Based Estimation (IBE) under different side information $\Phi$. } }


 \scalebox{0.70}{
\begin{tabular}{||c|c|c|c||}\hline\hline
    Distance IBE & Density IBE & Moment IBE & LDS IBE\\ \hline\hline
      \makecell{\(\max_{q \in \Delta(\mathcal{S})} \sum_{s'\in \mathcal{S}} N_{s,a}(s') \log q(s') \) \\ \\ s.t. \( \quad \text{dist}(q,\mathsf P^{s,a}_\text{s}) \leq d_{s,a}\)}  & 
       
       \makecell{\(\max_{q \in \Delta(\mathcal{S})} \sum_{s'\in \mathcal{S}} N_{s,a}(s') \log q(s') \) \\ \\ s.t. \(q/ \mathsf P^{s,a}_\text{s} \leq B_{s,a}\)}      & 
       
        \makecell{\(\max_{q \in \Delta(\mathcal{S})} \sum_{s'\in \mathcal{S}} N_{s,a}(s') \log q(s') \) \\ \\ s.t. \(|\mu(q)- \mu(\kps)|\leq \beta_{s,a}\) }      &

        \makecell{\(\max_{\theta \in \mathbb{R}^{\mathsf{d_0}}} \sum_{s'\in \mathcal{S}} N_{s,a}(s') \log \kp_{\theta}(s') \) }       \\ \hline

\end{tabular}}\label{tab: estimation methods}

 \end{table*} 

\begin{remark}
\label{remark:value-aware-ibe}
In Appendix~\ref{Appendix: Value-Aware}, we also consider a \emph{value-aware} side-information constraint based on a Wasserstein radius with the task-relevant pseudometric
$d_V(s,s') \triangleq |V^{\pi}_{\mathcal P}(s)-V^{\pi}_{\mathcal P}(s')|$.
Intuitively, it measures shift in the geometry that matters for control: moving probability
between states of similar value is cheap, while moves across large value gaps are expensive. 
This is relevant in sim-to-real settings where small perturbations in robot position or sensor readings leave task value nearly unchanged.
\end{remark}
\noindent\textbf{Policy estimation and evaluation.}
After obtaining $\widehat{\mathsf P}$, we compute policies by Value Iteration (VI) (see Algorithm~\ref{alg:valueiteration}) in both regimes using standard (robust) Bellman updates. 
In the non-robust setting, we apply VI with $\widehat{\mathsf P}$.
In the robust setting, we use $(s,a)$-rectangular TV balls centered at the estimate,
 \(\widehat{\mathcal{P}}(R)\),
and perform robust VI with support-function evaluations over \(\widehat{\mathcal{P}}(R)\). 
For evaluation, policies are tested in the target domain via Algorithm~\ref{alg:evaluatoin}: we report target-domain value (non-robust) and worst-case value over target-centered uncertainty sets \(\mathcal{P}_{\mathrm t}(R)\) (robust); in the special case $R=0$ both reduce to the non-robust evaluation.

\section{Theoretical Analysis}
 \label{sec:analysis}
We provide finite-sample and asymptotic guarantees for the side-information--guided estimator and the induced (robust and non-robust) policies. We consider a discounted MDP with $\gamma\in(0,1)$ and rewards in $[0,1]$. For exposition, we assume balanced coverage: for each $(s,a)$, we observe $n$ i.i.d. target transitions
$x^{s,a}_1,\ldots,x^{s,a}_n \sim \kpt$ and denote by $\kpn$ its IBE estimate. Collecting these estimates over all (s,a) pairs yields $\kppn=\{\kpn\in\Delta(\mathcal S): s\in\mathcal S, a\in\mathcal A\}$. Define the estimate-centered uncertainty set \(\Phn(R)\triangleq \bigotimes_{(s,a)} \mathbb{B}_{\mathrm{TV}}(\kpn,R),\)
making the dependence on $n$ explicit. The implementation does not require balanced coverage and instead uses empirical counts $N(s,a)$ from the offline target data.

For any policy $\pi$ and uncertainty set $\mathcal P(R)$, let
$V_{\mathcal P}^{\pi}=\inf_{Q\in\mathcal P(R)} V_Q^\pi$
denote the robust value. Define the target-centered and estimate-centered robust-optimal policies by \(\pi^\star\in\arg\max_\pi V_{\Pt}^{\pi},
\pi_n\in\arg\max_\pi V_{\Phn}^{\pi}\), respectively. 
We also write $V_{\Phn}^{\pi_n}$ and $V_{\Pt}^{\pi_n}$ for the worst-case values of $\pi_n$ evaluated on $\Phn(R)$ and $\Pt(R)$, respectively. The non-robust setting corresponds to $R=0$, where $\Pt(0)$ and $\Phn(0)$ reduce to singletons. Proofs of all results are deferred to Appendix~\ref{Appendix: Proofs}.

\subsection{IBE for Transfer}
\label{section:value-convg}
We show that IBE-based policies are consistent for the target domain. 
Let $\delta_n \triangleq \max_{(s,a)} \lVert \kpn - \kpt \rVert_{\mathrm{TV}}$.

\noindent\textbf{Training and evaluation errors.} We distinguish (i) the \emph{training error}, which compares the estimated-policy value on the estimate-centered set to the target-optimal value; and (ii) the \emph{evaluation error}, which evaluates the estimated policy on the \emph{target-centered} set.

\begin{theorem}[Training error]
\label{thm: train error bound}
For rewards in $[0,1]$ and any $\gamma\in(0,1)$,
\begin{align}
\bigl\| 
V^{\pi_n}_{\Phn}
-
V^{\pi^\star}_{\mathcal{P}_t\vphantom{\Phn}}
\bigr\|_\infty \le\; \frac{2\,\delta_n}{(1-\gamma)^2}
\end{align}
\end{theorem}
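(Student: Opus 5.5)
The plan is to reduce the comparison of optimal robust values to a comparison of robust Bellman operators. Write $T_{\mathcal P}$ for the robust optimality operator on an $(s,a)$-rectangular set $\mathcal P(R)$:
\[
(T_{\mathcal P}V)(s)=\max_{a}\Big\{r(s,a)+\gamma\inf_{q\in\mathcal P^{a}_{s}} q^\top V\Big\}.
\]
Rectangularity gives two facts. First, $V^{\pi^\star}_{\Pt}$ is the unique fixed point of the $\gamma$-contraction $T_{\Pt}$. Second, $V^{\pi_n}_{\Phn}$ is the unique fixed point of $T_{\Phn}$. This holds because $\pi_n$ is robust-optimal for $\Phn(R)$, so its robust value equals the optimal robust value; deterministic optimal policies exist for $(s,a)$-rectangular sets, following the cited results of Iyengar and of Nilim--El Ghaoui.

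The standard perturbation argument then gives
\[
\|V^{\pi_n}_{\Phn}-V^{\pi^\star}_{\Pt}\|_\infty\le \frac{1}{1-\gamma}\,\|T_{\Phn}V^{\pi^\star}_{\Pt}-T_{\Pt}V^{\pi^\star}_{\Pt}\|_\infty .
\]
To see this, write $V_1,V_2$ for the two fixed points and split $V_1-V_2=(T_1V_1-T_1V_2)+(T_1V_2-T_2V_2)$. The first term is at most $\gamma\|V_1-V_2\|_\infty$ by contraction, and rearranging gives the bound.

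The key step is the per-$(s,a)$ support-function perturbation. Let $V=V^{\pi^\star}_{\Pt}$, which satisfies $0\le V\le 1/(1-\gamma)$. Since $\max_a$ is 1-Lipschitz, it suffices to bound
\[
\Big|\inf_{q\in \BTV{\kpn}{R}} q^\top V-\inf_{q\in\BTV{\kpt}{R}}q^\top V\Big|
\]
uniformly in $(s,a)$. Take any $q$ in the ball around $\kpt$ and translate it: $q'=q+(\kpn-\kpt)$. Then $q'$ sums to one and satisfies $\|q'-\kpn\|_{\mathrm{TV}}=\|q-\kpt\|_{\mathrm{TV}}\le R$, but $q'$ may have negative entries. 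This is the main obstacle.

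To handle it, I would avoid translation and use the triangle inequality through a mixing/coupling construction. Given $q$ with $\|q-\kpt\|_{\mathrm{TV}}\le R$, a cleaner route is the dual (Hausdorff) argument. The Hausdorff distance in TV between $\BTV{p}{R}\cap\Delta$ and $\BTV{p'}{R}\cap\Delta$ is at most $\|p-p'\|_{\mathrm{TV}}$. To verify this, write $q=\kpt+(q-\kpt)$ and construct a point of the form $\lambda$-combination: let $\lambda=\min\{1,R/\|q-\kpn\|_{\mathrm{TV}}\}$ and set $\tilde q=\kpn+\lambda(q-\kpn)$. This $\tilde q$ lies in $\Delta$ by convexity and in the ball around $\kpn$. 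Moreover,
\[
\|\tilde q-q\|_{\mathrm{TV}}=(1-\lambda)\|q-\kpn\|_{\mathrm{TV}}\le \big(\|q-\kpn\|_{\mathrm{TV}}-R\big)_+\le \|\kpt-\kpn\|_{\mathrm{TV}}\le\delta_n .
\]
Combined with $|q^\top V-\tilde q^\top V|\le 2\|q-\tilde q\|_{\mathrm{TV}}\|V\|_\infty$ (using TV as half the $\ell_1$ norm), or more sharply $\le \|q-\tilde q\|_{\mathrm{TV}}\,\mathrm{span}(V)$, this gives a support-function gap of at most $2\delta_n/(1-\gamma)$ by symmetry of the argument. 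The factor $2$ absorbs the convention ambiguity: if TV is defined as the full $\ell_1$ norm, one gets $\delta_n/(1-\gamma)$ from the span bound.

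Combining the pieces, $\|T_{\Phn}V-T_{\Pt}V\|_\infty\le 2\gamma\delta_n/(1-\gamma)\le 2\delta_n/(1-\gamma)$. Dividing by $1-\gamma$ then yields the claimed $2\delta_n/(1-\gamma)^2$. The non-robust case $R=0$ is included: the balls reduce to singletons, and the bound becomes the simulation lemma.
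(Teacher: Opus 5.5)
Your proof is correct, and it differs from the paper's in two places.

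First, the paper works policy by policy. It uses $|\max_\pi f-\max_\pi g|\le\max_\pi|f-g|$, writes the robust Bellman equation for a fixed $\pi$ on both sets, splits the difference into a contraction term plus $\gamma\,|\sigma_{(\Phn)^a_s}(V)-\sigma_{(\Pt)^a_s}(V)|$, and unrolls the recursion. You instead compare the optimality operators $T_{\Phn}$ and $T_{\Pt}$ at the single vector $V^{\pi^\star}_{\Pt}$ through the fixed-point perturbation lemma.

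Second, the paper handles the support-function gap through the dual form $\sigma(V)=\max_{\alpha\ge0}\{\mathsf P_0^\top(V-\alpha)-R\,\mathrm{Span}(V-\alpha)\}$, in which only the center changes, and bounds the result by $\|\kpn-\kpt\|_1/(1-\gamma)$. You use a primal Hausdorff bound: shrinking $q$ toward $\kpn$ produces a feasible $\tilde q$ with $\|\tilde q-q\|_{\mathrm{TV}}\le\delta_n$.

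Your primal lemma is more self-contained. The paper's dual step tacitly needs the optimizing $\alpha$ to satisfy $\mathrm{Span}(V-\alpha)\le\mathrm{Span}(V)$, since $\alpha$ ranges over an unbounded set; your construction uses only convexity of $\Delta$. Combined with the span estimate, it also gives the sharper constant $\gamma\delta_n/(1-\gamma)^2$.

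What the paper's per-policy route buys is a bound uniform over policies, $\max_\pi\|V^\pi_{\Phn}-V^\pi_{\Pt}\|_\infty$. The paper reuses this directly for Theorem~\ref{thm: eval error bound}. Your operator-level argument gives only the optimal-value comparison, but running the same Hausdorff lemma through the policy-evaluation operator $T^\pi$ recovers the per-policy version immediately.

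A minor point: the factor $2$ in your support-function bound comes from using $\|V\|_\infty$, not from symmetry. Under the paper's convention $\|\cdot\|_{\mathrm{TV}}=\tfrac12\|\cdot\|_1$, the span estimate already gives $\delta_n/(1-\gamma)$, so your convention caveat is unnecessary.
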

The training error scales linearly with the uniform TV error $\delta_n$. Hence, the closer $\kpn$ is to $\kpt$ uniformly over $(s,a)$, the closer the training value of $\pi_n$ is to the target-optimal robust value. As the estimator improves (e.g., via side information), $\delta_n\!\downarrow$ and $V^{\pi_n}_{\Phn} \to V^{\pi^\star}_{\Pt}$.

\begin{theorem}[Evaluation error]
\label{thm: eval error bound}
Under the same conditions,
\begin{align}\label{eqn: eval_err}
\bigl\| V^{\pi_n}_{\Pt} - V^{\pi^\star}_{\Pt} \bigr\|_{\infty}
\;\le\; \frac{4\,\delta_n}{(1-\gamma)^2}\,.
\end{align}
\end{theorem}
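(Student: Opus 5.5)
We split the error with the triangle inequality, inserting the training value $V^{\pi_n}_{\Phn}$ as an intermediate term:
\[
\bigl\| V^{\pi_n}_{\Pt} - V^{\pi^\star}_{\Pt} \bigr\|_{\infty}
\le
\bigl\| V^{\pi_n}_{\Pt} - V^{\pi_n}_{\Phn} \bigr\|_{\infty}
+
\bigl\| V^{\pi_n}_{\Phn} - V^{\pi^\star}_{\Pt} \bigr\|_{\infty}.
\]
By Theorem~\ref{thm: train error bound}, the second term is at most $2\delta_n/(1-\gamma)^2$. It therefore remains to show that, for a \emph{fixed} policy $\pi$ (here $\pi=\pi_n$), the robust values under the two centers differ by at most $2\delta_n/(1-\gamma)^2$. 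This is presumably the same fixed-policy perturbation lemma that underlies Theorem~\ref{thm: train error bound}, which adds a max-over-policies step on top of it.

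For the fixed-policy bound, we work with the robust Bellman operators
\[
T_{\mathsf P}V(s)=\sum_a\pi(a|s)\Bigl[r(s,a)+\gamma\min_{q\in\BTV{\mathsf P^{s,a}}{R}} q^\top V\Bigr],
\]
one for each center $\mathsf P\in\{\kppn,\mathsf P_{\mathrm t}\}$. Both are $\gamma$-contractions with fixed points $V^{\pi}_{\Phn}$ and $V^{\pi}_{\Pt}$; this uses $(s,a)$-rectangularity and the standard robust policy-evaluation result. Fix $V$ with $\|V\|_\infty\le 1/(1-\gamma)$, and let $q^*$ attain the minimum over the ball centered at $\kpt$. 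Set $q'=q^*+(\kpn-\kpt)$. Then $q'$ sums to one and satisfies $\|q'-\kpn\|_{\mathrm{TV}}\le R$, but it may have negative entries. This is the main technical obstacle.

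To handle it, we use a coupling/shift argument instead of direct translation. The TV ball around $\kpn$ contains a point within TV distance $\|\kpn-\kpt\|_{\mathrm{TV}}\le\delta_n$ of $q^*$. One way to see this: write $q^*=\kpt+\Delta$ and take $q'=\kpn+\Delta$ clipped, or equivalently use the fact that the support function $p\mapsto\min_{q\in\BTV{p}{R}}q^\top V$ is Lipschitz in $p$ with respect to TV. This Lipschitz property follows from the known closed form of TV-ball support functions (a mix of $p^\top V$, $\min V$ and a quantile), which is $1$-Lipschitz in $p$ up to a factor of $2\|V\|_{\mathrm{span}}$ or $2\|V\|_\infty$, depending on the TV normalization. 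Under either convention, the support functions differ by at most $2\delta_n\|V\|_\infty\le 2\delta_n/(1-\gamma)$.

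We then obtain $\|T_{\kppn}V-T_{\mathsf P_{\mathrm t}}V\|_\infty\le 2\gamma\delta_n/(1-\gamma)$. Combining this with the contraction property gives
\[
\|V^{\pi}_{\Phn}-V^{\pi}_{\Pt}\|_\infty\le \frac{1}{1-\gamma}\cdot\frac{2\gamma\delta_n}{1-\gamma}\le \frac{2\delta_n}{(1-\gamma)^2}.
\]
Applying this with $\pi=\pi_n$ bounds the first term, and adding the two bounds yields the claimed $4\delta_n/(1-\gamma)^2$.
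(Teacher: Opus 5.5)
Your proposal is correct and follows the paper's proof. The ingredients match: the same triangle inequality through $V^{\pi_n}_{\Phn}$, the training bound for the second term, and, for the first term, a fixed-policy perturbation bound (Lipschitzness of the TV support function in its center, combined with the contraction/recursion), giving $2\gamma\delta_n/(1-\gamma)^2\le 2\delta_n/(1-\gamma)^2$.

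The only difference is how the Lipschitz step is justified. The paper obtains it from the dual form $\sigma(V)=\max_{\alpha\ge 0}\{\mathsf P_0^\top(V-\alpha)-R\,\mathbf{Span}(V-\alpha)\}$ rather than a primal closed form. For a clean primal argument, drop the clipping: when $q^*\notin\BTV{\kpn}{R}$, take the convex combination $q'=\kpn+\frac{R}{\|q^*-\kpn\|_{\mathrm{TV}}}(q^*-\kpn)$, which lies in the ball and satisfies $\|q'-q^*\|_{\mathrm{TV}}\le\delta_n$.
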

This bounds the \emph{deployment-time} gap when evaluating $\pi_n$ on the \emph{target-centered} set. It is at most twice the training bound: moving from the estimate-centered set (used to learn $\pi_n$) to the target-centered set costs a factor of at most $2$. Consequently, as $\delta_n\to 0$ the evaluation error vanishes, guaranteeing asymptotic optimality on the target-centered robust problem. The dependence of \(\delta_n\) on the form of side information \(\Phi\) is discussed in Appendix~\ref{Appendix: Dependence on sideinfo}.

\begin{corollary}[Consistency]
\label{cor:consistency}
If $\delta_n \to 0$ (e.g., $\kpn \to \kpt$ in TV), then
\(
\| V^{\pi_n}_{\Pt} - V^{\pi^\star}_{\Pt} \|_\infty \to 0,
\)
so the IBE-learned policy is asymptotically optimal for the target-centered robust problem. 
\end{corollary}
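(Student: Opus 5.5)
The corollary follows directly from Theorem~\ref{thm: eval error bound}, so the plan is simply to pass to the limit in \eqref{eqn: eval_err}. Fix $R\ge 0$ and $\gamma\in(0,1)$, with rewards in $[0,1]$. For every $n$, Theorem~\ref{thm: eval error bound} gives the deterministic inequality
\begin{align*}
0\;\le\;\bigl\| V^{\pi_n}_{\Pt} - V^{\pi^\star}_{\Pt} \bigr\|_{\infty}\;\le\;\frac{4\,\delta_n}{(1-\gamma)^2}.
\end{align*}
The constant $4/(1-\gamma)^2$ does not depend on $n$, on $R$, or on the particular estimate. Hence $\delta_n\to 0$ forces the right-hand side to $0$, and the squeeze gives $\| V^{\pi_n}_{\Pt} - V^{\pi^\star}_{\Pt} \|_\infty\to 0$.

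Next I will check that ``$\kpn\to\kpt$ in TV'' really yields $\delta_n\to 0$. Since $\mathcal S$ and $\mathcal A$ are finite, $\delta_n=\max_{(s,a)}\|\kpn-\kpt\|_{\mathrm{TV}}$ is a maximum over finitely many sequences. If each sequence tends to zero, then so does the maximum: given $\varepsilon>0$, take the largest of the $SA$ thresholds $n_{s,a}(\varepsilon)$. If the convergence is only almost sure or in probability (the usual situation for a consistent IBE), the same argument works pathwise or via a union bound over the $SA$ pairs. The conclusion then holds in the same mode, because the bound in Theorem~\ref{thm: eval error bound} holds for every realization.

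Finally, I will record the interpretation. Since $\pi^\star$ maximizes $V^{\pi}_{\Pt}$, we have $V^{\pi_n}_{\Pt}\le V^{\pi^\star}_{\Pt}$, so the vanishing sup-norm gap is exactly the robust suboptimality of $\pi_n$ on the target-centered problem. This is what ``asymptotically optimal'' means here. The one delicate point is the mode of convergence, which is handled as above; there is no analytic obstacle beyond that.
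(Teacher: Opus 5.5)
Your proposal is correct and matches the paper's proof, which simply notes that the corollary follows directly from the bound $\| V^{\pi_n}_{\Pt} - V^{\pi^\star}_{\Pt}\|_\infty \le 4\delta_n/(1-\gamma)^2$ of Theorem~\ref{thm: eval error bound}. The paper also cites Theorem~\ref{thm: train error bound}, which is not needed for the evaluation gap, and your remarks on the finite maximum over $(s,a)$ and on the mode of convergence correctly spell out what the paper leaves implicit.
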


Thus, convergence of the robust (and non-robust) value functions follows if the transition-kernel estimates converge in total variation. The next result shows that this holds for IBE.

\begin{proposition}[TV-consistency of IBE]
\label{thm: CMLE in TV}
For fixed $(s,a)$, let $\kpn$ be the IBE solution of \eqref{eqn: s,a -est} with a side-information constraint set $\Phi(\cdot,\kps)$ from Table~\ref{tab: estimation methods}. Assume $x^{s,a}_1,\dots,x^{s,a}_n \overset{\text{i.i.d.}}{\sim} \kpt$ and that the true target kernel lies in the \emph{relative interior} of the feasible set. Then,
$
\bigl\|\kpn - \kpt\bigr\|_{\mathrm{TV}} \overset{n\to\infty}{\longrightarrow} 0.
$
If this holds for every $(s,a)$ with per-pair sample sizes $n\to\infty$, then $\delta_n \to 0$.
\end{proposition}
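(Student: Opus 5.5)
The plan is to treat IBE as a constrained maximum-likelihood estimator over a closed convex (or, for LDS, parametric) feasible set $\mathcal C_{s,a}\subseteq\Delta(\mathcal S)$ defined by $\Phi(\cdot,\kps)$, and to show consistency through the empirical distribution. Fix $(s,a)$ and let $\widetilde{\mathsf P}_n^{s,a}$ be the empirical distribution of $x_1^{s,a},\dots,x_n^{s,a}$. Then the objective in \eqref{eqn: s,a -est} is $n\sum_{s'}\widetilde{\mathsf P}_n^{s,a}(s')\log q(s')$. Maximizing it over $\mathcal C_{s,a}$ is therefore equivalent to
\begin{align*}
\kpn\in\arg\min_{q\in\mathcal C_{s,a}}\mathrm{KL}\bigl(\widetilde{\mathsf P}_n^{s,a}\,\|\,q\bigr),
\end{align*}
that is, $\kpn$ is the reverse-information projection of the empirical measure onto $\mathcal C_{s,a}$. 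Since $\mathcal S$ is finite, the strong law of large numbers gives $\widetilde{\mathsf P}_n^{s,a}\to\kpt$ almost surely in TV. The task is thus to show that this projection is continuous at $\kpt$ when $\kpt\in\mathcal C_{s,a}$.

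\textbf{Step 1 (the projection of the limit is $\kpt$).} Since $\kpt\in\mathcal C_{s,a}$ by the assumption $\Phi(\kpt,\kps)$, we have $\min_{q\in\mathcal C}\mathrm{KL}(\kpt\|q)=0$. By Gibbs' inequality this minimum is attained only at $q=\kpt$, so the population problem has the unique minimizer $\kpt$.

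\textbf{Step 2 (a direct comparison with the feasible point $\kpt$).} I will avoid a general argmax-continuity theorem and argue directly. Optimality of $\kpn$ against the feasible point $\kpt$ gives
\begin{align*}
\sum_{s'}\widetilde{\mathsf P}_n^{s,a}(s')\log\frac{\kpn(s')}{\kpt(s')}\ge 0 .
\end{align*}
Rewriting,
\begin{align*}
\mathrm{KL}\bigl(\kpt\|\kpn\bigr)\le\sum_{s'}\bigl(\widetilde{\mathsf P}_n^{s,a}(s')-\kpt(s')\bigr)\log\frac{\kpn(s')}{\kpt(s')}.
\end{align*}
The left side dominates $2\|\kpn-\kpt\|_{\mathrm{TV}}^2$ by Pinsker's inequality. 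Restrict attention to $\mathrm{supp}(\kpt)$; off the support the empirical mass is zero almost surely, so those terms vanish. It then suffices to show that the log-ratios stay bounded. Once they do, the right side is $O(\|\widetilde{\mathsf P}_n^{s,a}-\kpt\|_1)\to 0$.

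\textbf{Step 3 (bounded log-ratios).} This is the main obstacle. Write $p_{\min}=\min_{s'\in\mathrm{supp}(\kpt)}\kpt(s')>0$.
\begin{itemize}
\item \emph{Upper bound.} This is trivial: $\kpn(s')/\kpt(s')\le 1/p_{\min}$.
\item \emph{Lower bound.} I must rule out $\kpn(s')\to0$ on the support. Once $n$ is large, every $s'\in\mathrm{supp}(\kpt)$ has empirical frequency at least $p_{\min}/2$ almost surely. At such an $n$, compare the objective at $\kpn$ with the objective at the feasible point $\kpt$. The value at $\kpt$ is at least $\log p_{\min}$. If some $\kpn(s')<\epsilon$, the value at $\kpn$ is at most $(p_{\min}/2)\log\epsilon$. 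For $\epsilon$ small enough (a constant depending only on $p_{\min}$) this contradicts optimality. Hence $\kpn(s')\ge\epsilon$ eventually.
\end{itemize}
Both bounds together give bounded log-ratios, which closes Step 2 and yields $\|\kpn-\kpt\|_{\mathrm{TV}}\to0$ almost surely. The relative-interior assumption is not essential for this route. It does guarantee the estimator exists and that feasibility is robust, since the constraints are closed and convex: TV, $W_1$, moment, and density constraints are all convex in $q$, and existence holds by compactness and upper semicontinuity of the concave objective.

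\textbf{LDS case.} Here the feasible set $\{\kp_\theta:\theta\in\Theta_0\}$ may not be convex in $q$. Steps 2 and 3 only used feasibility of $\kpt=\kp_{\theta_{\mathrm t}}$, since $\theta_{\mathrm t}\in\Theta_0$, and optimality of $\kpn$, so they apply verbatim to the image set. Existence of a maximizer I will assume, or obtain from closedness of the image together with the boundedness of Step 3.

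\textbf{Aggregation.} Finally, since $\mathcal S\times\mathcal A$ is finite, taking the maximum over pairs of finitely many almost surely vanishing sequences gives $\delta_n\to0$.
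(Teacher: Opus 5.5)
Your argument is correct, and it takes a genuinely different route from the paper's. The paper never compares the constrained estimator with $\kpt$ directly. Instead, it uses the slack from the relative-interior assumption to show that, for all large $n$, the \emph{unconstrained} MLE $\kphs$ (the empirical distribution) itself satisfies the constraint. It checks this separately for each class: the triangle inequality for the distance ball, entrywise convergence of the ratios $\widetilde{\mathsf P}^{s,a}_{n,i}/\mathsf P^{s,a}_{\mathrm t,i}$ for the density box, and a H\"older bound $|A(\kphs-\kpt)|\le 2\|A\|_\infty\|\kphs-\kpt\|_{\mathrm{TV}}$ for the moment set. By uniqueness, $\kpn=\kphs$ eventually, and consistency is inherited from the SLLN.

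That route buys a sharper conclusion in the interior case. The constraint is eventually inactive, so $\kpn$ has exactly the empirical distribution's $O(\sqrt{S/n})$-type accuracy. Your Pinsker step only gives $\|\kpn-\kpt\|_{\mathrm{TV}}\le (M\|\kphs-\kpt\|_1/2)^{1/2}$, a slower rate with $M$ depending on $p_{\min}$. On the other hand, the paper's route needs strict slack and a case-by-case verification, and its proof does not treat LDS at all.

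Your basic-inequality argument is constraint-agnostic: it uses only feasibility of $\kpt$ and optimality of $\kpn$. It therefore covers the nonconvex LDS image and, importantly, the boundary case $\kpt\in\partial\mathcal C_{s,a}$. That boundary case is precisely the regime of the paper's own experiments, where $d_{s,a}$, $\beta_{s,a}$ and the global $B_{s,a}$ are set to the exact source--target discrepancies. There the paper's choice $\varepsilon\le d_{s,a}-d^0_{s,a}/2$ degenerates to $\varepsilon\le 0$, so its argument does not apply.

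One caveat concerns your LDS existence remark. For softmax families the image of $\Theta_0$ is generally not closed, so existence of a maximizer should be taken as part of the hypothesis, as the statement implicitly does. Alternatively, your Steps 2--3 go through unchanged for $\eta_n$-approximate maximizers with $\eta_n\to 0$, which removes the issue entirely.
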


Proposition~\ref{thm: CMLE in TV} shows that IBE is \emph{TV-consistent}; the estimates converge to the true target kernel in total variation. TV-consistency is a sufficient condition for value-function consistency via Theorems~\ref{thm: train error bound}--\ref{thm: eval error bound}: both the training and evaluation gaps scale as $O(\delta_n)$, with $\delta_n\to\!0$. Thus, the IBE-learned policy becomes asymptotically optimal for the target-centered robust problem, and the non-robust case follows as the special instance $R=0$. Proposition~\ref{thm: CMLE in TV} extends classical MLE consistency to the \emph{constrained} setting induced by side information (Table~\ref{tab: estimation methods}): TV/W$_1$ balls, moment sets, density-ratio boxes, and LDS subspaces all yield closed, convex feasible regions on which the constrained MLE remains consistent. This validates our transfer pipeline of estimating target dynamics with side information and then optimizing around that estimate, and explains the empirical improvements we observe in the target domain. We corroborate TV-consistency and the predicted error decay in Appendix~\ref{Appendix: convergence_exp}, and we empirically verify the bound from Theorem~\ref{thm: eval error bound} in Appendix~\ref{Appendix: eval_error_exp}.

\subsection{Suboptimality Gap and Finite-Sample Guarantees}
\label{section: opt_gap}

We now leverage IBE's consistency to obtain \emph{finite-sample} guarantees for the learned \emph{robust} policy, focusing on the LDS--IBE case. Under a mild structural assumption on the transition family, we choose a radius $R_n$ so that the \emph{estimate-centered} uncertainty set $\Phn(R_n)$ (as defined earlier) contains the true target kernel with high probability. This yields a high-confidence \emph{out-of-sample} lower bound on target performance and, consequently, a bound on the robust suboptimality gap.

\begin{assumption}[Parametric family and TV--Lipschitzness]
\label{assumption: Lipschitzness}
There exists $\Theta\subset\mathbb R^{\mathsf d}$ and a mapping $\theta\mapsto \mathsf P_\theta$ such that for some $L>0$, \(\)
\[
\bigl\|\kp_\theta-\kp_{\theta'}\bigr\|_{\mathrm{TV}}
\ \le\ L\,\|\theta-\theta'\|_{1}
\quad \forall\,\theta,\theta'\in\Theta,\ \forall(s,a),
\]
and the target kernel satisfies $\kpt=\mathsf P_{\theta_{\mathrm t}}$ for some $\theta_{\mathrm t}$ in the LDS subspace $\Theta_0$ of intrinsic dimension $\mathsf d_0\ll \mathsf d$.
\end{assumption}

\begin{lemma}[Finite-sample radius for LDS--IBE]
\label{lemma:radius}
Under Assumption~\ref{assumption: Lipschitzness} (TV--Lipschitz in $\ell_1$), for any $\delta_1\in(0,1)$ there exists $C_0>0$ such that choosing 
\begin{align}
   R_n \;=\; L C_0\sqrt{\frac{{\mathsf d}_0}{n}}
\end{align}

ensures $\mathsf{P}_{\mathrm t}\in \Phn(R_n)$ with probability at least $1-\delta_1$.
\end{lemma}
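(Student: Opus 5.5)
The plan is to reduce the kernel-level containment to a parameter-level concentration bound and then transfer it through the Lipschitz map of Assumption~\ref{assumption: Lipschitzness}. By definition of $\Phn(R_n)=\bigotimes_{(s,a)}\mathbb{B}_{\mathrm{TV}}(\kpn,R_n)$, the event $\mathsf P_{\mathrm t}\in\Phn(R_n)$ is the same as $\|\kpn-\kpt\|_{\mathrm{TV}}\le R_n$ for every $(s,a)$. For LDS--IBE we have $\kpn=\kp_{\widehat\theta_{\mathrm t}}$ and $\kpt=\kp_{\theta_{\mathrm t}}$. The assumption then gives
\[
\|\kpn-\kpt\|_{\mathrm{TV}}\le L\,\|\widehat\theta_{\mathrm t}-\theta_{\mathrm t}\|_1 .
\]
So it suffices to show that, with probability at least $1-\delta_1$, $\|\widehat\theta_{\mathrm t}-\theta_{\mathrm t}\|_1\le C_0\sqrt{\mathsf d_0/n}$ holds simultaneously over all $(s,a)$.

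Both $\widehat\theta_{\mathrm t}$ and $\theta_{\mathrm t}$ lie in $\Theta_0$ and agree on $\mathcal I_{\mathrm{shared}}$. The difference is therefore supported on the $\mathsf d_0$ free coordinates, and the problem is a $\mathsf d_0$-dimensional constrained MLE in the softmax (exponential-family) model. Within this subfamily I will use the standard parametric rate. Assume the Fisher information $I_0(\theta_{\mathrm t})$ restricted to the free coordinates has smallest eigenvalue $\lambda_{\min}>0$ (identifiability of the free coordinates), and that $\Theta$ is compact. The log-likelihood is concave in $\theta$ for the softmax family. A one-step expansion around $\theta_{\mathrm t}$ then gives
\[
\widehat\theta_{\mathrm t}-\theta_{\mathrm t}\approx I_0^{-1}\tfrac1n\textstyle\sum_i \nabla\log\kp_{\theta_{\mathrm t}}(x_i).
\]
The score vectors are bounded (softmax features are bounded) and mean zero. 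A vector Bernstein or Hoeffding bound gives $\|\tfrac1n\sum_i\nabla\log\kp_{\theta_{\mathrm t}}(x_i)\|_2\le c\sqrt{\log(1/\delta')/n}$, possibly with an extra $\sqrt{\mathsf d_0}$ factor. This yields $\|\widehat\theta_{\mathrm t}-\theta_{\mathrm t}\|_2\lesssim\lambda_{\min}^{-1}\sqrt{\log(1/\delta')/n}$. Converting norms, $\|\cdot\|_1\le\sqrt{\mathsf d_0}\,\|\cdot\|_2$ on the $\mathsf d_0$ free coordinates, which produces the $\sqrt{\mathsf d_0/n}$ scaling. All constants, including $\lambda_{\min}^{-1}$, $\log(1/\delta')$ and the score bound, are absorbed into $C_0$.

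To pass from one pair to all pairs, take a union bound over the $SA$ pairs with $\delta'=\delta_1/(SA)$. The resulting $\log(SA/\delta_1)$ is also folded into $C_0$; this is legitimate because the lemma only asserts the existence of some $C_0$ depending on $\delta_1$ and the problem.

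The main obstacle is making the MLE deviation bound non-asymptotic, since the expansion above is only valid once $\widehat\theta_{\mathrm t}$ is already near $\theta_{\mathrm t}$. I would handle this with strong concavity of the expected log-likelihood on a compact neighborhood together with uniform concentration of the empirical Hessian. Alternatively, for small $n$ one can simply take $C_0$ large enough that $R_n\ge1$, making the containment trivial because TV distance is at most $1$. Combining the two regimes gives a valid $C_0$ for all $n$.
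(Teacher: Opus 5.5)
Your proposal is correct and follows the same route as the paper: a $\sqrt{n}$-rate deviation bound for the constrained MLE on the $\mathsf d_0$ free coordinates is transferred to total variation through the $L$-Lipschitz map of Assumption~\ref{assumption: Lipschitzness}, with all constants absorbed into $C_0$. It is in fact more careful than the paper's proof, which cites ``consistency of the MLE'' for the rate and omits the identifiability condition, the union bound over $(s,a)$, and the small-$n$ regime, all of which you supply. One caveat: if your vector concentration carries its own $\sqrt{\mathsf d_0}$ factor, the $\ell_1$ conversion gives $\mathsf d_0/\sqrt{n}$ rather than $\sqrt{\mathsf d_0/n}$; this is harmless for the lemma as stated only because $C_0$ may depend on $\mathsf d_0$.
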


By construction, this gives a high-probability \emph{out-of-sample} lower bound: for any policy $\pi$,
$
V^{\pi}_{\mathsf{P}_{\mathrm{t}}}\ \ge\ V^{\pi}_{\Phn(R_n)}
$
with probability $\ge 1-\delta_1$.
LDS shrinks the effective dimension from $\mathsf d$ to $\mathsf d_0$, tightening $R_n$.

\noindent\textbf{Robust suboptimality gap.}
Let $\pi_n\in\arg\max_{\pi} V^{\pi}_{\Phn}$ and $\pi^\star\in\arg\max_{\pi} V^{\pi}_{\Pt}$. Define the (nonnegative) gap
$
\mathrm{Gap}(\pi)\triangleq V^{\pi^\star}_{\Pt}-V^{\pi}_{\Pt}
$, where both \(\Pt\) and \(\Phn\) are of radius \(R_n\)

\begin{theorem}[Suboptimality gap]
\label{thm: optgap}
Under Assumption~\ref{assumption: Lipschitzness} and the radius choice above, for rewards in $[0,1]$ and any $\gamma\in(0,1)$,
\begin{align}
  \mathrm{Gap}(\pi_n)
\ \le\ \frac{R_n}{(1-\gamma)^2}\,
\ =\
\tilde O\!\left(\frac{L }{(1-\gamma)^2} \sqrt{\frac{\mathsf d_0}{n}}\right)  
\end{align}

with probability at least $1-\delta_1$, where $\tilde O(.)$ is up to logarithmic factors.
\end{theorem}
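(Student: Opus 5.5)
The plan is to condition on the high-probability event $\mathcal E\triangleq\{\kpt\in\mathbb B_{\mathrm{TV}}(\kpn,R_n)\ \forall (s,a)\}$ from Lemma~\ref{lemma:radius}, which has probability at least $1-\delta_1$, and to show that on $\mathcal E$ the gap is controlled deterministically. On $\mathcal E$ the uniform estimation error satisfies $\delta_n=\max_{(s,a)}\|\kpn-\kpt\|_{\mathrm{TV}}\le R_n$. The cheapest route is then to invoke Theorem~\ref{thm: eval error bound} with both sets of radius $R_n$. This gives $\mathrm{Gap}(\pi_n)\le \|V^{\pi_n}_{\Pt}-V^{\pi^\star}_{\Pt}\|_\infty\le 4\delta_n/(1-\gamma)^2\le 4R_n/(1-\gamma)^2$, which is already of order $\tilde O\bigl(\tfrac{L}{(1-\gamma)^2}\sqrt{\mathsf d_0/n}\bigr)$ after substituting $R_n=LC_0\sqrt{\mathsf d_0/n}$. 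The remaining work is to sharpen the constant from $4$ to $1$.

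To tighten the constant I would decompose the gap as
\[
V^{\pi^\star}_{\Pt}-V^{\pi_n}_{\Pt}
=\bigl(V^{\pi^\star}_{\Pt}-V^{\pi^\star}_{\Phn}\bigr)+\bigl(V^{\pi^\star}_{\Phn}-V^{\pi_n}_{\Phn}\bigr)+\bigl(V^{\pi_n}_{\Phn}-V^{\pi_n}_{\Pt}\bigr).
\]
The middle term is $\le 0$ by the optimality of $\pi_n$ on $\Phn$. Each outer term is a difference of robust values of a fixed policy over two TV-rectangular sets of equal radius whose centers differ by at most $\delta_n\le R_n$ per $(s,a)$.

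I would bound the outer terms through the robust Bellman operators. Let $\sigma_{\mathcal B}(V)=\min_{q\in\mathcal B}q^\top V$. For $V\in[0,1/(1-\gamma)]^S$, translating the ball by the center shift gives $|\sigma_{\mathbb B(p,R)}(V)-\sigma_{\mathbb B(p',R)}(V)|\le \|p-p'\|_{\mathrm{TV}}\,\mathrm{span}(V)\le \delta_n/(1-\gamma)$. Since both robust operators are $\gamma$-contractions, the standard fixed-point perturbation argument yields $\|V^\pi_{\Pt}-V^\pi_{\Phn}\|_\infty\le \gamma\delta_n/(1-\gamma)^2$. Summing the two outer terms gives roughly $2\gamma R_n/(1-\gamma)^2$.

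Reaching exactly $R_n/(1-\gamma)^2$ is the main obstacle. I would try to exploit the event $\mathcal E$ asymmetrically, rather than paying for both outer terms. On $\mathcal E$ we have $\kpt\in\Phn(R_n)$, and I expect to combine this with the nesting $\Pt(R_n)\subseteq\Phn(2R_n)$ and a one-sided comparison of the support functions to absorb one of the two outer terms. I would also take the TV convention with the $\tfrac12$ factor, so that $\mathrm{span}(V)\le 1/(1-\gamma)$ enters with weight $\tfrac12$ per term. If that constant bookkeeping does not close, the statement still holds up to an absolute constant absorbed into $\tilde O(\cdot)$ and $C_0$, which is what the rate claim requires.
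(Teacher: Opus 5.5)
Your first route is the paper's proof: the paper's entire argument is that the result follows from Lemma~\ref{lemma:radius} and Theorem~\ref{thm: eval error bound}. You are right that this combination, with $C_0$ the constant supplied by the lemma, only gives $\mathrm{Gap}(\pi_n)\le 4R_n/(1-\gamma)^2$, and the paper does not address the factor $4$. Your span-based three-term decomposition is a genuine sharpening to $2\gamma$.

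Drop the ``asymmetric'' plan for reaching constant $1$, because constant $1$ is false on the coverage event alone once $\gamma>1/2$. The event is symmetric, since $\kpt\in\mathbb B_{\mathrm{TV}}(\kpn,R_n)$ iff $\kpn\in\mathbb B_{\mathrm{TV}}(\kpt,R_n)$, so it only tells you $\delta_n\le R_n$. The $\tfrac12$ of the TV convention is also already spent in $|(p-p')^\top V|\le\|p-p'\|_{\mathrm{TV}}\,\mathrm{span}(V)$.

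Your factor $2\gamma$ is essentially attained. Take absorbing states $G$ (reward $1$) and $B$ (reward $0$), known exactly, and a state $s_0$ (reward $1$) with two actions. Their true next-state laws on $(s_0,G,B)$ are $(1-3\epsilon,3\epsilon,0)$ and $(1-3\epsilon,\epsilon+\eta,2\epsilon-\eta)$. The estimated laws are $(1-3\epsilon,2\epsilon,\epsilon)$ and $(1-3\epsilon,2\epsilon+\eta,\epsilon-\eta)$, with $0<\eta<\epsilon$ and $R_n=\delta_n=\epsilon$. In every ball the adversary moves $\epsilon$ mass from $G$, which has the largest robust value $1/(1-\gamma+\gamma\epsilon)$, to $B$. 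So the robust-greedy action at $s_0$ is the first action under $\kpt$ and the second under $\kpn$, and
\[
\mathrm{Gap}(\pi_n)(s_0)=\frac{\gamma(2\epsilon-\eta)}{(1-\gamma+\gamma\epsilon)(1-\gamma+3\gamma\epsilon)}\approx\frac{2\gamma R_n}{(1-\gamma)^2}\qquad(\eta\ll\epsilon\ll 1-\gamma).
\]

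The correct repair is the fallback you mention, and it gives the stated inequality exactly, not merely the rate. Three facts make it work:
\begin{itemize}
\item the lemma only asserts that some $C_0$ works;
\item the coverage event $\{\delta_n\le R\}$ only grows with $R$;
\item Theorem~\ref{thm: eval error bound}, like your $2\gamma\delta_n/(1-\gamma)^2$ bound, holds for any common radius of $\Pt$ and $\Phn$.
\end{itemize}
So if $C_0'$ is the lemma's constant, choose $C_0=4C_0'$, or $\max\{1,2\gamma\}C_0'$ with your bound. With probability at least $1-\delta_1$ you then have $\delta_n\le R_n/4$. This gives both $\kpt\in\Phn(R_n)$ and $\mathrm{Gap}(\pi_n)\le 4\delta_n/(1-\gamma)^2\le R_n/(1-\gamma)^2$. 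Present this as the proof; it is what implicitly reconciles the paper's one-line argument with its stated constant.

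One small fix in your support-function step: translating a TV ball by $p'-p$ can leave the simplex. Instead, send $q\in\mathbb B_{\mathrm{TV}}(p,R)$ to $q'=(1-t)q+tp'$, where $t=0$ if $\|q-p'\|_{\mathrm{TV}}\le R$ and $t=1-R/\|q-p'\|_{\mathrm{TV}}$ otherwise. Then $q'$ lies in $\mathbb B_{\mathrm{TV}}(p',R)$ and satisfies $\|q'-q\|_{\mathrm{TV}}\le\|p-p'\|_{\mathrm{TV}}$.
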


Hence, combining the high-probability coverage $\mathsf{P}_{\mathrm{t}}\in\Phn(R_n)$ with the evaluation bound of Theorem~\ref{thm: eval error bound} yields a target-domain suboptimality gap that decays as $O(\sqrt{\mathsf d_0/n})$ (up to logs), rather than the $O(\sqrt{\mathsf d/n})$ rate obtained without side information. Theorem~\ref{thm: optgap} makes this improvement explicit: by incorporating low-dimensional structure through LDS-IBE, the uncertainty radius $R_n$ required to cover the target dynamics is smaller, which directly translates into a tighter suboptimality gap.

\section{Related Work}\label{section: related work}
\noindent\textbf{Robust RL.}
Robust RL casts robustness as distributionally robust optimization (DRO), maximizing worst-case return over an uncertainty set of transition kernels \cite{iyengar2005robust,nilim2004robustness,bagnell2001solving,wiesemann2013robust,lim2013reinforcement,tamar2014scaling}. Methods are either \emph{model-based}, i.e., fit a model then apply robust dynamic programming \cite{lim2019kernel,yu2015distributionally,yang2021towards,shi2023curious,wang2023robust}, or \emph{model-free}, that is, optimize a robust objective without explicit modeling \cite{roy2017reinforcement,si2020distributionally,zhou2021finite,wang2021online,ho2021partial,badrinath2021robust,wang2023sample,liu2022distributionally,wang2024model,wang2023model}. Both typically rely on data from a single source environment. While worst-case guarantees yield lower bounds when the target kernel lies in the set, large source--target shifts force wide uncertainty sets, producing conservative policies that underperform in the target domain.

\noindent\textbf{Multi-Task RL.}
Multi-task RL learns shared representations across tasks via joint training or transferable features to improve generalization across environments \cite{cheng2022provable,agarwal2023provable,huang2022provably,du2021bilinear}. Several works use this for \emph{transfer}, aiming to boost a designated target task: \citet{cheng2022provable} performs reward-free exploration on source tasks to learn representations, then adapts \emph{online} on the target; \citet{agarwal2023provable} strengthens this with a general state-dependent linear model and a cross-sampling scheme for in-distribution generalization. Related frameworks provide provable multi-task representation learning guarantees \cite{huang2022provably,du2021bilinear}. In contrast, our setting assumes only \emph{limited offline target data} (no online interaction at deployment) and leverages side information to estimate target dynamics before (robustly) optimizing a policy.

\noindent\textbf{Domain Randomization and Meta Learning.}
Domain randomization improves robustness by training over a wide distribution of simulated environments prior to deployment \cite{peng2018sim,tobin2017domain,chebotar2019closing}. Meta-RL trains across related tasks to learn fast adaptation from a small amount of \emph{online} target experience \cite{finn2017model,nagabandi2018learning}. These approaches typically rely on broad simulator coverage or interaction at deployment; performance can degrade when the target lies outside the randomized/meta-training distribution or when online adaptation is limited. In contrast, we assume only \emph{limited offline target data} and no online interaction, and we use side information to estimate target dynamics and (robustly) optimize the policy.

\noindent\textbf{Domain Adaptation (DA).}
Prior DA methods mitigate source--target mismatch by transferring samples or reusing source data, often assuming shared dynamics 
\cite{taylor2008transferring,laroche2017transfer}.
We instead consider \emph{transition mismatch} and leverage side information to estimate target kernels, mitigating negative transfer under limited target data.
\citet{tirinzoni2018importance} extend FQI via importance weighting across multiple sources, fitting weights with Gaussian Processes 
, which limits applicability in discrete or non-Gaussian settings;
our approach is agnostic to the transition model class.
\citet{wen2024contrastive} rank and filter source transitions using a mutual-information--based domain gap 
but do not address robustness to uncertain target dynamics; we explicitly model uncertainty sets around the estimated target kernel.

\section{Numerical Experiments}\label{section: numerical experiments}
We evaluate our approach against state-of-the-art baselines on six benchmarks. Due to space, we report \textit{CartPole} here and defer the remaining tasks to Appendix~\ref{Appendix: Additional exp}, which display similar trends.

\noindent\textbf{Baselines.} We compare to FQI \cite{ernst2005tree}, Importance-Weighted FQI (IWFQI) \cite{tirinzoni2018importance}, IGDF \cite{wen2024contrastive}, and standard Q-learning. FQI and Q-learning use only \emph{target} samples; IWFQI and IGDF use both source and target data. The original IWFQI estimates weights via Gaussian Processes (GPs), restricting it to continuous states. To isolate its best-case performance, we instead feed it oracle (true) importance weights. Our method first estimates the target kernel with IBE, then computes an optimal policy via non-robust or robust RL (Appendix~\ref{Appendix:algorithms}). We report performance in the target domain under both evaluations.

\noindent\textbf{Side information and environments.}
We instantiate IBE using the constraints in Table~\ref{tab: estimation methods}: Distance IBE (TV and $W_1$ with bounds set to the true source--target distances, i.e., $d_{s,a}=\|\mathsf P^{s,a}_{\mathrm s}-\mathsf P^{s,a}_{\mathrm t}\|_{\mathrm{TV}}$ and $W_1(\mathsf P^{s,a}_{\mathrm s},\mathsf P^{s,a}_{\mathrm t})=d_{s,a}$), Moment IBE (moment gap $\beta_{s,a}= |\mu({\mathsf P^{s,a}_{\mathrm t}})-\mu({\mathsf P^{s,a}_{\mathrm s})}|$), Density IBE ``global" ($B_{s,a}=\max(\mathsf P^{s,a}_{\mathrm t}/\mathsf P^{s,a}_{\mathrm s})$), and Density IBE ``local'' (elementwise $B_{s,a}=\mathsf P^{s,a}_{\mathrm t}/\mathsf P^{s,a}_{\mathrm s}+1$). We test on three toy-text tasks (Frozen Lake, Cliff Walking, Taxi) and three classic control problems (Acrobot, Cart Pole, Pendulum) from OpenAI Gym \cite{brockman2016openai}. In all cases, source and target differ only in their transition kernels. Environment details are in Appendix~\ref{Appendix: Env description}.

\noindent\textbf{Sampling and evaluation.}
We draw $N$ state--action pairs uniformly and record counts $N(s,a)$. For each occurrence, the next state is sampled from the target kernel. We vary per-pair sample sizes over $\{1,5,10,50,100,150,\ldots,10^4\}$. Kernel estimates are initialized at the source kernel, except for Vanilla IBE, which uses a uniform initializer to assess learning purely from target data. Results are averaged over 20 runs with 95\% confidence intervals.

\noindent\textbf{Performance results.}
We evaluate IBE in both non-robust and robust regimes. Transition kernels are estimated via \eqref{eqn: s,a -est}. Policies are computed with Algorithm~\ref{alg:valueiteration} (Appendix~\ref{Appendix:algorithms}). We set the robustness radius to $R=0$ (non-robust) and $R=0.1$ (robust), and evaluate in the target domain with the same radius using Algorithm~\ref{alg:evaluatoin}. Due to space, we show \textit{CartPole} in the main text and defer the remaining environments to Appendix~\ref{Appendix: non_robust_exp}. Performance is reported as the state-averaged value $\frac{1}{|\mathcal S|}\sum_{s\in\mathcal S} V^{\pi}(s)$ across varying target-sample budgets, averaged over 20 runs.
Figures~\ref{fig: Non-robust scenario} and \ref{fig: robust scenario} plot non-robust and robust results, respectively: left panels compare IBE side-information variants; right panels compare our best IBE to SOTA baselines. Note that IGDF's assumptions fail for very small $N$ (e.g., $N\in\{1,5\}$), so head-to-head comparisons with IGDF begin at $N\ge 150$, where all methods are well-defined.

As shown in the left panels of Figures~\ref{fig: Non-robust scenario} and \ref{fig: robust scenario}, incorporating side information markedly improves target-domain performance over \emph{Vanilla IBE} (samples only). To isolate the effect beyond initialization, we also initialize Vanilla IBE at the source kernel. The results indicate that the side information (not just initialization) drives the gains.  Among our variants, \emph{Density IBE (local)} performs best, plausibly because element-wise density-ratio bounds provide sharper guidance on the target transitions. In the right panels, \emph{Density IBE (local)} also outperforms SOTA baselines in both non-robust and robust regimes, and the same is observed for other IBE variants, such as moment-IBE. We also compare to an over-conservative baseline that centers the uncertainty set at the \emph{source} kernel and enlarges the radius to cover the target, $R+\|\mathsf P^{s,a}_{\mathrm s}-\mathsf P^{s,a}_{\mathrm t}\|_{\mathrm{TV}}$ (per $(s,a)$). As expected, this pessimistic construction yields poor target performance (see Figure~\ref{fig: robust scenario}).

 \begin{figure}[t]
    \centering
    \includegraphics[width=.45\linewidth]{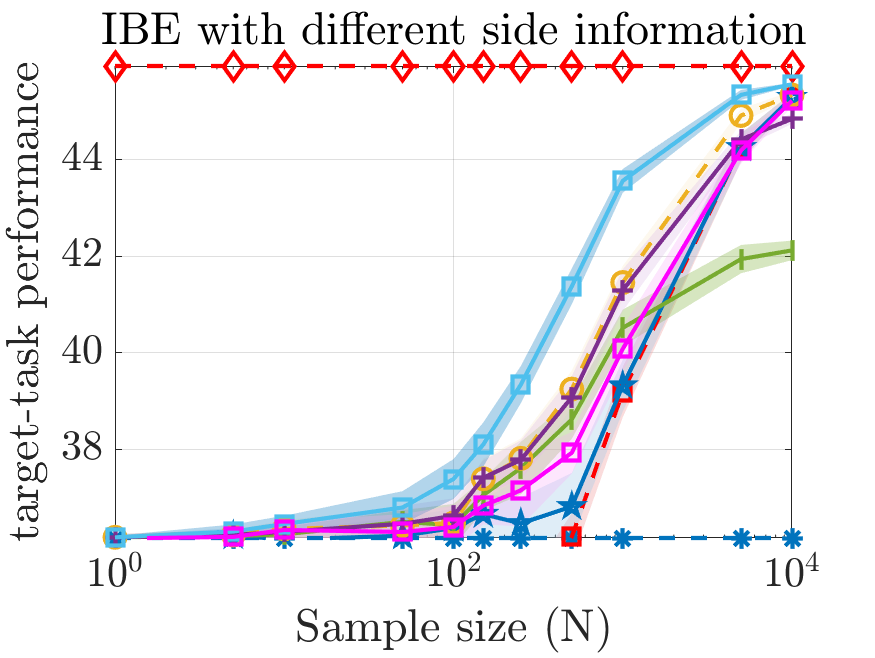}%
     \hspace{-0.1cm}
    \includegraphics[width=.45\linewidth]{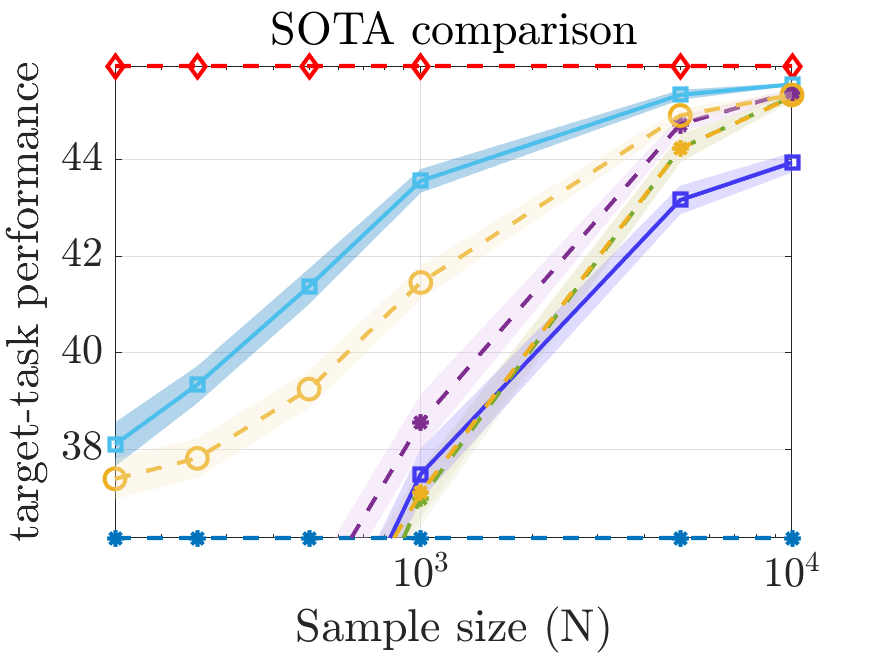}
   \includegraphics[width=.9\textwidth]{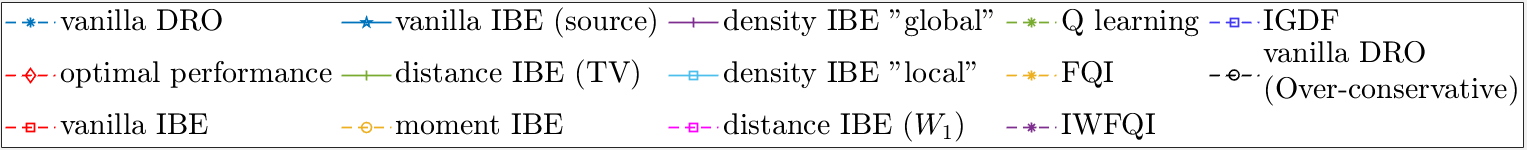}
    \caption{\small{Target domain performance for the non-robust setting as a function of sample size for \textit{CartPole}. The legend is shared between Figures ~\ref{fig: Non-robust scenario} and \ref{fig: robust scenario}.}}
    \label{fig: Non-robust scenario}
  \end{figure}

\begin{figure}[t]
    \centering
    \includegraphics[width=.45\linewidth]{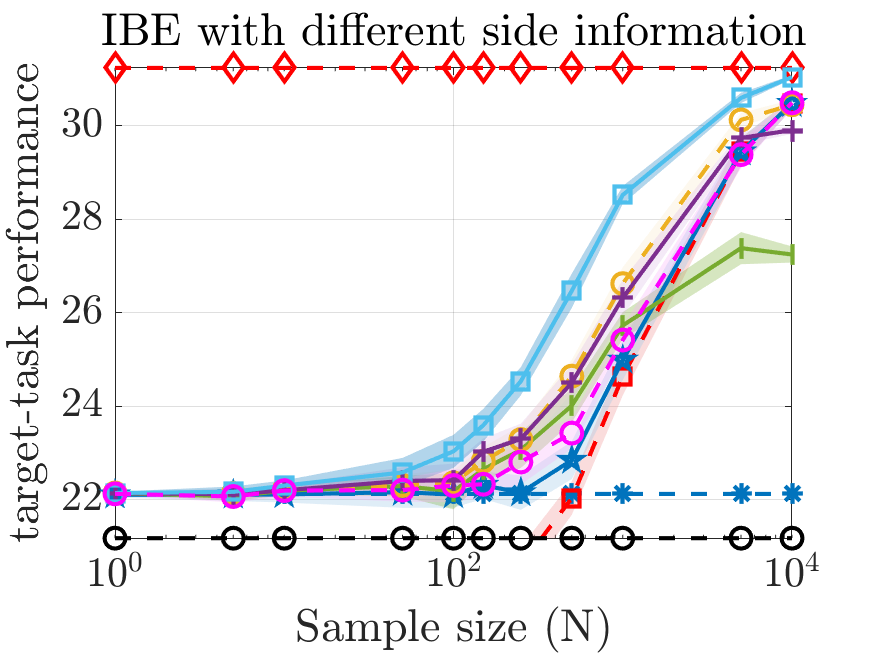}%
     \hspace{-0.1cm}
    \includegraphics[width=.45\linewidth]{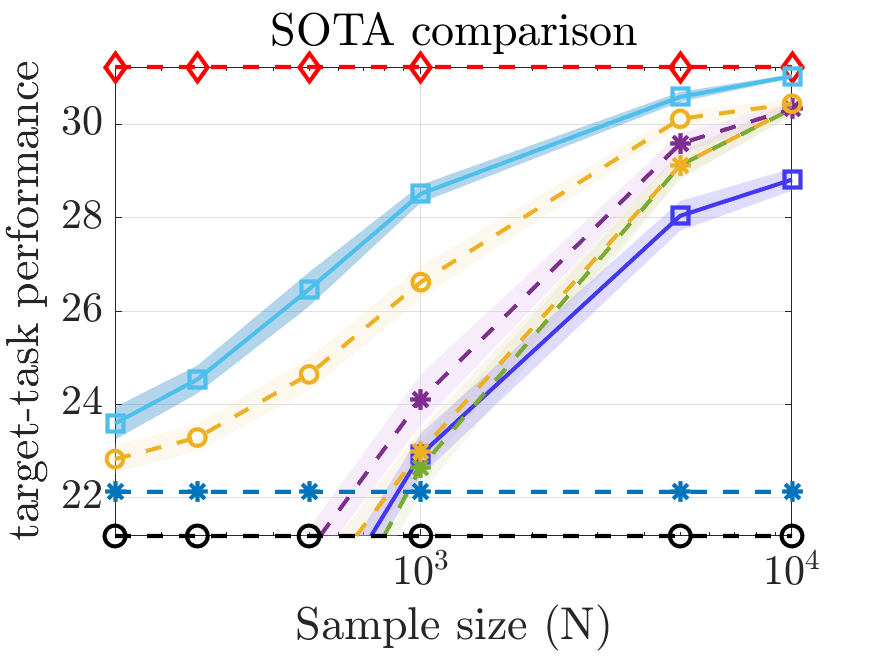}

   \caption{\small{Target domain performance for the robust setting as a function of sample size for \textit{CartPole}.}}
    \label{fig: robust scenario}
  \end{figure}

\noindent\textbf{Dimension effect.}
We empirically test the dimension effect of LDS side information predicted by our theory. 
In \textit{CartPole}, we parameterize the dynamics with a softmax model and follow the LDS-IBE procedure in Sec.~\ref{subsection: estimation with side info} to estimate $\theta_{\mathrm t}$. 
We compare LDS-IBE (which constrains $\theta$ to the known low-dimensional subspace) to Vanilla IBE (no side information). 
Although the ambient parameter dimension is $\mathsf d=4$ (Appendix~\ref{Appendix: Env description}), the logits depend on $\theta$ through a $\mathsf d_0=2$ subspace, so the effective model dimension is two.
Using the estimated dynamics, we learn a policy and evaluate it on the target environment in both non-robust and robust regimes. Figure~\ref{fig: subopt} reports the suboptimality gap versus the number of target samples $N$ (left: non-robust, right: robust). 
As expected, LDS--IBE yields consistently smaller gaps than Vanilla IBE, reflecting the improved \(\tilde O\!\big(\sqrt{\mathsf d_0/N}\big) \) dependence from exploiting the low-dimensional structure in the transition dynamics.
\begin{figure}[t]
  \centering

  \begin{minipage}[t]{0.45\linewidth}
    \centering
    \includegraphics[width=\linewidth]{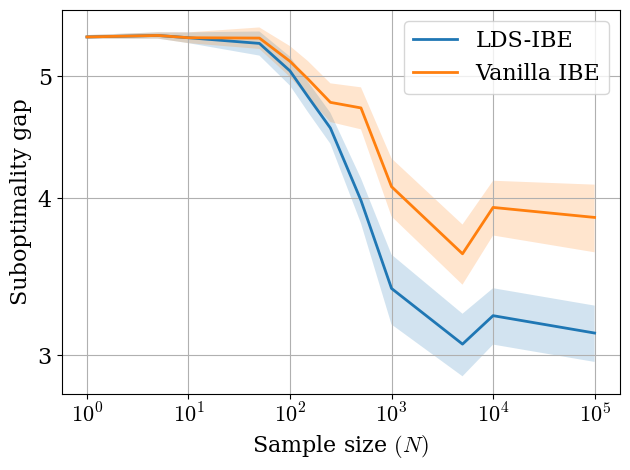}
   
  \end{minipage}
  \begin{minipage}[t]{0.45\linewidth}
    \centering
    \includegraphics[width=\linewidth]{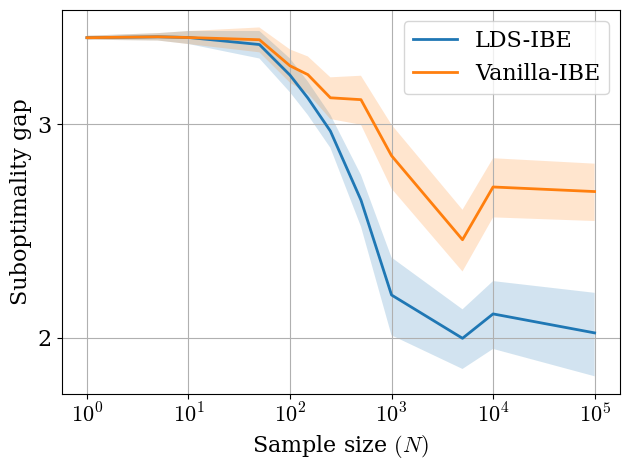} 
  \end{minipage}
  \caption{\small{Suboptimality gap as a function of sample size \((N)\) (log-log scale) in the CartPole environment for LDS-IBE, for non-robust (left) and robust (right) scenarios.}
}
 \label{fig: subopt}
\end{figure}

\section{Conclusion}
\label{sec:conc}
We proposed a model-based transfer RL framework leveraging limited offline target data and side information to estimate target dynamics and learn effective policies. Centering uncertainty sets at an information-based target estimate (IBE) rather than the source avoids the over-conservatism of source-centered DRO, yielding tighter radii and less pessimistic policies. Incorporating side-information constraints 
brings the estimate closer to the true target kernel, reducing the data needed for adaptation.
Theoretically, we established value-function error bounds, scaling with the estimator's uniform TV error, and finite-sample guarantees. Under LDS, the robust suboptimality gap scales as $\tilde O(\sqrt{\mathsf d_0/n})$, explicitly quantifying the sample-efficiency gains from side information. Empirically, IBE---particularly its density-ratio and moment variants---consistently outperforms RL baselines in both robust and non-robust scenarios.

\newpage

\printbibliography
\clearpage

\appendix
\onecolumn

\section{Definitions}\label{Appendix: def.}

\begin{definition}[Total variation distance \cite{LevinPeresWilmer2017}]
Let $(\mathcal{X},\mathcal{F})$ be a measurable space and let $\mathsf{P},\mathsf{Q}$ be two probability distributions defined on it. The total variation (TV) distance is defined as
\[
\|\mathsf{P}-\mathsf{Q}\|_{TV}\;:=\;\sup_{A\in\mathcal{F}} \bigl|\mathsf{P}(A)-\mathsf{Q}(A)\bigr|.
\]

In the discrete case, $\|\mathsf{P}-\mathsf{Q}\|_{TV}=\tfrac12\sum_{x\in\mathcal{X}} |\mathsf{P}(x)-\mathsf{Q}(x)|$.
\end{definition}

\begin{definition}[Wasserstein--1 (Earth Mover's) distance \cite{villani2008optimal}]
Let $(\mathcal{X},\|\cdot\|)$ be a metric space and let $\mathsf{P},\mathsf{Q}$ be probability measures on $\mathcal{X}$ with finite first moments.
The Wasserstein--1 distance is
\[
W_1(\mathsf{P},\mathsf{Q})\;:=\;\inf_{\rho\in\mathsf{\Pi}(\mathsf{P},\mathsf{Q})} \int_{\mathcal{X}\times\mathcal{X}} \|x-y\|\,d\rho(x,y),
\]
where $\Pi(\mathsf{P},\mathsf{Q})$ is the set of couplings (joint distributions) with marginals $\mathsf{P}$ and $\mathsf{Q}$. For the cost $\|.\|$, we use the Euclidean distance, i.e., \(\|x-y\|_{2}\), in this paper.

\end{definition}

\begin{definition}[Value-Aware 1-Wasserstein distance ] \label{def: VAW}
Let \(d_v(s,s') = |V^{\pi}_{\mathcal P}(s)-V^{\pi}_{\mathcal P}(s')|\) be a pseudometric, and \((\mathcal{S}, d_v)\) be a finite pseudometric space. The \emph{Value-Aware 1-Wasserstein distance} between \(\mathsf{P}\) and \(\mathsf{Q}\) is defined as
\[
W_{d_v}(\mathsf{P},\mathsf{Q}) := \inf_{\rho \in \Pi(\mathsf{P}, \mathsf{Q})} \sum_{s, s' \in \mathcal{S}} \rho(s, s') \cdot d_v(s, s') = \sup_{\|f\|_{\text{Lip}(\tilde{d})} \le 1} \left| \mathbb{E}_\mathsf{P}[f] - \mathbb{E}_\mathsf{Q}[f] \right|,
\]
\end{definition}

\section{Cramér–Rao Bounds with Side Information}\label{Appendix: Cramer-Rao}
In this section, we examine the Cramér–Rao Bound (CRB) for the estimator $\kpn$ under the constrained estimation setting. The CRB provides a lower bound on the variance of a single parameter estimator or the covariance matrix of a vector with multiple parameters. In general, given an unbiased estimator $\hat{\zeta}$ of a $k$-element vector $\zeta$, the Fisher Information Matrix (FIM) is defined as $J(\zeta)=\mathbb{E}[\nabla\log(L(\zeta))\nabla\log(L(\zeta))^{\top}]$, where $\nabla\log(L(\zeta))$ is the gradient of the log-likelihood function w.r.t. $\zeta$. The CRB then states that the covariance of the unbiased estimator is lower bounded by the inverse $J(\zeta)^{-1}$ of the FIM. Before presenting the main result of this subsection, we derive an expression for the FIM $J(\kpt)$ as stated in the next lemma.
\begin{lemma} \label{lemma: FIM}
    Given $n$ i.i.d. samples $\{x_1,\dots, x_n\}\sim \kpt $, where $x\in \{s_1,\dots, s_k \}$, with $\kp_{\textup{t},j}:= \Pr(x=s_j),  \kp_{\textup{t},j}\neq 0, \forall j\leq k$, the FIM is given by
    \begin{equation}
        J(\kpt) = n \diag \left(\frac{1}{\kp_{\textup{t},1}},\dots, \frac{1}{\kp_{\textup{t},k}}\right)\:.
    \end{equation}
\end{lemma}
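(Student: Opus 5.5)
We will compute the Fisher information by treating the $k$ coordinates $\kp_{\textup{t},1},\dots,\kp_{\textup{t},k}$ as free parameters of the likelihood. The simplex constraint $\sum_j \kp_{\textup{t},j}=1$, together with any side-information constraint $\Phi$, is not imposed here. We expect it to enter only afterwards, through the constrained Cramér–Rao bound, which projects $J^{-1}$ onto the tangent space of the feasible set.

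\textbf{Likelihood and its derivatives.} With counts $N_j=\sum_{i=1}^n \mathbf 1\{x_i=s_j\}$, the i.i.d. structure gives
\[
\log L(\kpt)=\sum_{j=1}^k N_j\log \kp_{\textup{t},j},\qquad N=(N_1,\dots,N_k)\sim\mathrm{Multinomial}(n,\kpt).
\]
Differentiating coordinatewise, which is legitimate because every $\kp_{\textup{t},j}\neq 0$ so the log-likelihood is smooth at the true parameter, yields:
\begin{itemize}
\item score components $\partial_j\log L=N_j/\kp_{\textup{t},j}$;
\item Hessian $\partial_j\partial_l\log L=-\mathbf 1\{j=l\}\,N_j/\kp_{\textup{t},j}^2$, which is diagonal because the log-likelihood is separable across coordinates.
\end{itemize}

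\textbf{Taking expectations.} Using $\mathbb E[N_j]=n\kp_{\textup{t},j}$, the expected negative Hessian has diagonal entries
\[
\frac{n\kp_{\textup{t},j}}{\kp_{\textup{t},j}^2}=\frac{n}{\kp_{\textup{t},j}}
\]
and zero off-diagonal entries. This gives exactly $n\,\diag(1/\kp_{\textup{t},1},\dots,1/\kp_{\textup{t},k})$. Additivity over the $n$ samples is also visible directly: a single sample contributes $\diag(1/\kp_{\textup{t},j})$, and information adds across independent draws.

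\textbf{Main obstacle.} The delicate point is reconciling this with the outer-product definition $J=\mathbb E[\nabla\log L\,\nabla\log L^\top]$ stated in the text. In the unconstrained coordinates, the score does not have mean zero: $\mathbb E[N_j/\kp_{\textup{t},j}]=n$. As a result, the raw second-moment matrix picks up an extra rank-one term $n^2\mathbf 1\mathbf 1^\top$ minus a correction, rather than equaling the Hessian form. The identity between the two forms requires differentiating $\sum_x L=1$, which here holds only on the simplex.

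We therefore plan to adopt the expected-negative-Hessian (equivalently, centered-score) form as the working definition of the unconstrained FIM. The centered score has covariance $\mathrm{Cov}(N_j/\kp_{\textup{t},j},N_l/\kp_{\textup{t},l})=n(\mathbf 1\{j=l\}/\kp_{\textup{t},j}-1)$. This differs from the diagonal matrix above only by the rank-one term $-n\mathbf 1\mathbf 1^\top$, whose action lies along the normal direction $\mathbf 1$ of the simplex. That term is annihilated once the bound is restricted to directions $v$ with $\mathbf 1^\top v=0$, so the two forms give identical constrained CRBs. This justifies stating the lemma in the diagonal form $n\,\diag(1/\kp_{\textup{t},j})$.
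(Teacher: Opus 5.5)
Your proposal is correct, and it reaches the result by a different route from the paper. The paper one-hot encodes a single observation and writes the one-sample score $g=(s_1/\mathsf P_{\mathrm t,1},\dots,s_k/\mathsf P_{\mathrm t,k})$. It then evaluates the raw outer product $\mathbb E[gg^\top]$ directly: the off-diagonal entries vanish because $s_is_j=0$ for $i\neq j$, and $\mathbb E[s_i^2]=\mathsf P_{\mathrm t,i}$ gives the diagonal entries $1/\mathsf P_{\mathrm t,i}$. Finally it simply asserts $J=nI_S$ for $n$ i.i.d.\ draws. You instead take the expected negative Hessian of the $n$-sample multinomial log-likelihood, which is diagonal by separability and additive over samples by construction.

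The paper's computation has the merit of matching the stated outer-product definition literally at $n=1$. This works even though $\mathbb E[g]=\mathbf 1\neq 0$, because the one-sample likelihood is linear in the coordinates, so $\sum_x\nabla^2 p(x)=0$. That is a small correction to your remark that the Hessian/outer-product identity needs the simplex: at $n=1$ it holds identically. However, the paper's step $J=nI_S$ is exactly where your ``main obstacle'' bites. Under the literal definition the $n$-sample matrix is $n\,\diag(1/\mathsf P_{\mathrm t,j})+n(n-1)\mathbf 1\mathbf 1^\top$, so additivity silently requires the Hessian (or centered) form.

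Your route makes that choice explicit and then justifies it. The Hessian, centered-covariance, and raw forms differ only by multiples of $\mathbf 1\mathbf 1^\top$, and these are annihilated by any tangent basis $U$ with $U^\top\mathbf 1=0$. An example is the reduced parametrization $U=[I_{k-1};-\mathbf 1^\top]$ used in Theorem~\ref{thm: Cramer-Rao}, where all three forms give $U^\top JU=n\bigl(\diag(1/\mathsf P_{\mathrm t,1},\dots,1/\mathsf P_{\mathrm t,k-1})+\mathsf P_{\mathrm t,k}^{-1}\mathbf 1\mathbf 1^\top\bigr)$. Your annihilation argument, stated for the centered form, covers the raw $n(n-1)\mathbf 1\mathbf 1^\top$ term equally well. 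So the downstream constrained bounds are unaffected whichever definition is adopted, and your argument is the more rigorous of the two.

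One caution: the centered form is singular, since its kernel contains $\mathsf P_{\mathrm t}$. Your restriction to directions $v$ with $\mathbf 1^\top v=0$ is therefore the right way to state the constrained bound, rather than the $J^{-1}$ formula of Lemma~\ref{lemma: CCRB}.
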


We can readily state the main result of this subsection. 

\begin{theorem}\label{thm: Cramer-Rao}
     Let $\kpn$ be an unbiased estimator of $\kpt$ based on $n$ i.i.d. samples. 
     For each of the following constraints, the diagonal elements of the CRB matrix, $C_{ii} (\kpt)$, are given by:
     
    \begin{itemize}
        \item \textbf{Regular Probability Constraint}  (\:$\sum_{i}^{k}\kp_{\textup{t},i} = 1 $): \quad  $C_{ii}(\kpt) = \frac{1}{n}\kp_{\textup{t},i}(1-\kp_{\textup{t},i})$

\item \textbf{Moment constraints} (\:$\mathbb{E}_{\kpt}[\phi(x)] = \mu$, where $x\in\mathcal{S}$ and $\phi: \mathcal{S}\to \mathbb{R}^{m}$, and $\sum_{i}^{k}\kp_{\textup{t},i} = 1 $) :  
       \begin{align}
           C_{ii}(\kpt) = C^{R}_{ii} (\kpt)- \Delta_{i}
       \end{align}
       where
    \begin{itemize}
        \item  $C^{R}_{ii}$ is the $i^{th}$ diagonal element of the CRB matrix with regular probability constraints. \item $\Delta_{i} = \frac{(\kp_{\textup{t},i})^{2}}{n^2} (a_{i} - \overline{a}) \mathbb{S}^{-1} (a_{i} - \overline{a}) $,
        where $a_{i} = \phi(x_i) - \phi(x_k) \in \mathbb{R}^{m}, i=1,\dots,k$, $\overline{a} = \sum_{i=1}^{k} a_i \kp_{\textup{t},i}$, and  $\mathbb{S} = \frac{1}{n} \operatorname{Cov}(a),$ where $\operatorname{Cov(a)}$ is the covariance matrix of $a$. 
    \end{itemize}
    
    \end{itemize}

\end{theorem}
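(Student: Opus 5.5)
The plan is to obtain both items from the constrained Cramér–Rao bound for equality constraints (Gorman--Hero / Stoica--Ng). Write the unknown as $\zeta=\kpt\in\mathbb R^k$ and the active equality constraints as $f(\zeta)=0$ with Jacobian $F=\nabla f(\zeta)^\top$. Choose $U$ with orthonormal columns spanning $\ker F$. For an unbiased estimator satisfying the constraints, the bound is
\[
C(\kpt)=U\bigl(U^\top J(\kpt)U\bigr)^{-1}U^\top ,
\]
with $J(\kpt)=n\,\diag(1/\kp_{\textup{t},1},\dots,1/\kp_{\textup{t},k})$ taken from Lemma~\ref{lemma: FIM}. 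Writing $D=\diag(\kp_{\textup{t},i})$, so that $J^{-1}=D/n$, I will use the equivalent projection form
\[
C=J^{-1}-J^{-1}F^\top\bigl(FJ^{-1}F^\top\bigr)^{-1}FJ^{-1}.
\]
This form holds whenever $FJ^{-1}F^\top$ is invertible, and it avoids constructing $U$ explicitly. The two items of Theorem~\ref{thm: Cramer-Rao} then correspond to two choices of $F$.

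\textbf{Regular probability constraint.} Here $F=\mathbf 1^\top$. We have $FJ^{-1}F^\top=\tfrac1n\sum_i\kp_{\textup{t},i}=1/n$ and $J^{-1}F^\top=\tfrac1n\kpt$. Hence $C=\tfrac1n\bigl(D-\kpt\kpt^\top\bigr)$, whose diagonal is $\tfrac1n\kp_{\textup{t},i}(1-\kp_{\textup{t},i})$. This is the multinomial covariance, as claimed.

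\textbf{Moment constraints.} The constraints are $\sum_i\kp_{\textup{t},i}=1$ and $\sum_i\kp_{\textup{t},i}\phi(x_i)=\mu$. Subtracting $\phi(x_k)$ times the first constraint leaves the constraint set unchanged, so I can replace the moment rows by $\sum_i\kp_{\textup{t},i}a_i=\mu-\phi(x_k)$. The stacked Jacobian is then $F=\begin{bmatrix}\mathbf 1^\top\\ A\end{bmatrix}$, where $A=[a_1,\dots,a_k]\in\mathbb R^{m\times k}$.

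Rather than inverting the block matrix directly, I will project sequentially. The projection formula with stacked constraints equals projecting first onto the simplex constraint and then onto the remaining constraints with $J^{-1}$ replaced by $C^R=\tfrac1n(D-\kpt\kpt^\top)$. Applying the same Schur-complement formula with $C^R$ in place of $J^{-1}$ gives
\[
C=C^R-C^RA^\top\bigl(AC^RA^\top\bigr)^{-1}AC^R .
\]
Next I compute the pieces. First,
\[
AC^RA^\top=\tfrac1n\Bigl(\sum_i\kp_{\textup{t},i}a_ia_i^\top-\overline a\,\overline a^\top\Bigr)=\tfrac1n\operatorname{Cov}(a)=\mathbb S,
\]
where $a$ is viewed as a random vector taking value $a_i$ with probability $\kp_{\textup{t},i}$. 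Second, the $i$-th column of $AC^R$ is $\tfrac1n\kp_{\textup{t},i}(a_i-\overline a)$. The diagonal correction is therefore
\[
\Delta_i=\frac{(\kp_{\textup{t},i})^2}{n^2}(a_i-\overline a)^\top\mathbb S^{-1}(a_i-\overline a),
\]
which matches the statement.

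The main obstacle is justifying the sequential-projection step and the invertibility of $\mathbb S$, which requires the $a_i$ to be affinely nondegenerate under $\kpt$. I will assume this, as well as positivity of all $\kp_{\textup{t},i}$ (already assumed in Lemma~\ref{lemma: FIM}), so that the constrained CRB applies. The sequential-projection identity itself is the standard block-inverse (Schur complement) identity for $FJ^{-1}F^\top$, which I will verify directly.
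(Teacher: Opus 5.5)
Your proposal is correct and follows essentially the paper's route. Both arguments rest on the Marzetta/Gorman--Hero projection form of the constrained CRB and on the same two computations: $AC^RA^\top=\tfrac1n\operatorname{Cov}(a)=\mathbb S$, and the $i$-th column of $AC^R$ equals $\tfrac{\kp_{\textup{t},i}}{n}(a_i-\overline a)$. The only difference is cosmetic. The paper eliminates the last coordinate using $\sum_i\kp_{\textup{t},i}=1$ and inverts the reduced $(k-1)$-dimensional FIM by Sherman--Morrison, so its $C_R$ is invertible and Marzetta's formula applies directly. You keep all $k$ coordinates with the diagonal FIM of Lemma~\ref{lemma: FIM}, which makes your $C^R$ singular; that is why you need the nested Schur-complement projection identity, and you apply it correctly.
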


Theorem \ref{thm: Cramer-Rao} provides valuable insights into the quality of the estimator as additional constraints are imposed. Specifically, the diagonal elements of the CRB give lower bounds on the variances of $\widetilde{\mathsf P}_{n,i}^{s,a}, 1 \leq i \leq k$, where $\widehat{\mathsf P}_{i,n}^{s,a}$ is the $i^{th}$ element of $\kpn$. A smaller lower bound indicates the potential for a better minimum variance estimator. To quantify the CRB matrix, we can compute the trace of the CRB $C$, which provides a lower bound on the sum of the estimator variances. Formally, we have $\sum_{i=1}^{k} \operatorname{var}[\kpn] \geq \operatorname{trace}(C(\kpt))= \sum_{i=1}^k C_{ii}(\kpt)$. According to Theorem \ref{thm: Cramer-Rao}, we conclude that:
\begin{equation}\label{eqn: trace comp}
 \frac{1}{n}\sum_{i=1}^k \kp_{\text{t},i}(1-\kp_{\text{t},i})\geq 
    \sum_{i=1}^k \frac{\kp_{\text{t},i}(1-\kp_{\text{t},i})}{n} -  \frac{(\kp_{\text{t},i})^{2} (a_i - \overline{a})^\top \mathbb{S} (a_i - \overline{a})}{n^{2}}.
\end{equation}
Since $\mathbb{S}$ is positive definite (assuming $\operatorname{Cov}(a)$ is full rank), $\Delta_{i}\geq 0$. 

Equation~\eqref{eqn: trace comp} demonstrates that having more prior information about $\kpt$ potentially leads to a better estimate, given the smaller bound on the variance. We also provide empirical evidence of the information gain that the side information provides in Appendix \ref{Appendix: cramer_rao_exp}.

\section{Value-Aware Side Information.} \label{Appendix: Value-Aware}
In this section, we present an evaluation error bound under the value-aware Wasserstein distance side information, where the bound is presented in terms of the side information. Formally, the IBE under value-aware 1-Wasserstein distance constraint is given as 
\begin{align} \label{eqn: VAW_eqn}
    \kpn = \arg\max_{q\in\Delta(\mathcal S)}
\sum_{s'\in\mathcal S} n_j q(s') \quad \text{subject to} \quad W_{d_V}(q, \mathsf{P}_s^{s,a}) \le \beta_1,
\end{align}
where \(W_{d_V}(\cdot, \cdot)\) is Value-Aware 1-Wasserstein distance (see Definition \ref{def: VAW}), and \(n_j = N_{s,a}(j),\quad j\in \mathcal{S}\).\par 

The following Theorem provides a bound on the evaluation error in terms of the side information \(\beta_1\). 

\begin{theorem}[Bound under Value-Aware Wasserstein Constraint] \label{thm: error_bound_w}
Let \(W_{d_V}(\cdot, \cdot)\) be the Value-Aware 1-Wasserstein distance (see Definition~\ref{def: VAW}), and \( \kpn \) be IBE, given by~\eqref{eqn: VAW_eqn}. Suppose
$
W_{d_V}(\kps,\kpt) \leq \beta_1.
$
Then, we have
\[
\left\|V^{\pi_n}_{\Pt}-V^*_{\Pt} \right\|_{\infty} \leq \frac{4\gamma \beta_1}{1 - \gamma}.
\]
\end{theorem}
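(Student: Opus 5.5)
The plan is to reuse the three-term decomposition behind Theorem~\ref{thm: eval error bound}, but to measure the perturbation through the value-aware Wasserstein distance instead of total variation. For any fixed policy $\pi$ and any two kernels $\mathsf P,\mathsf Q$ whose per-$(s,a)$ rows satisfy $W_{d_V}(\mathsf P^{s,a},\mathsf Q^{s,a})\le \epsilon$, Kantorovich--Rubinstein duality (Definition~\ref{def: VAW}) gives
\[
\bigl|\mathbb E_{\mathsf P^{s,a}}[V]-\mathbb E_{\mathsf Q^{s,a}}[V]\bigr|\le \epsilon
\]
for every $V$ that is $1$-Lipschitz with respect to $d_V$. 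The value function $V^{\pi}_{\mathcal P}$ defining $d_V$ is exactly such a function, since $|V(s)-V(s')|=d_V(s,s')$. This is the whole point of the value-aware metric: the Lipschitz constant is $1$, not $\|V\|_{\infty}\le 1/(1-\gamma)$. That is why the bound loses one factor of $(1-\gamma)^{-1}$ compared with Theorem~\ref{thm: eval error bound}.

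\textbf{Step 1 (feasibility and distance between estimate and target).} By assumption, $\kpt$ is feasible for \eqref{eqn: VAW_eqn}, and $\kpn$ is feasible by construction. Hence $W_{d_V}(\kpn,\kps)\le\beta_1$, and the triangle inequality gives
\[
W_{d_V}(\kpn,\kpt)\le 2\beta_1 \qquad \text{for every } (s,a).
\]

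\textbf{Step 2 (one-step perturbation lemma for robust Bellman operators).} Let $T_{\mathsf P,R}$ denote the robust Bellman operator for policy $\pi$ over $\mathcal P(\mathsf P,R)$. For each $Q\in\mathbb B_{\mathrm{TV}}(\kpn,R)$, I will construct a companion $Q'\in\mathbb B_{\mathrm{TV}}(\kpt,R)$ of the form $Q'=\kpt+(Q-\kpn)$, so that $W_{d_V}(Q,Q')\le 2\beta_1$. This requires checking that $Q'$ stays in the simplex; if it does not, I will instead match $Q$ and $Q'$ through couplings. Combining the construction with Step 1 gives
\[
\bigl|T_{\kppn,R}V-T_{\mathsf P_{\mathrm t},R}V\bigr|\le 2\gamma\beta_1 \qquad \text{for } d_V\text{-}1\text{-Lipschitz } V.
\]

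\textbf{Step 3 (from one step to the fixed points).} Contraction of the robust operator then yields
\[
\|V^{\pi}_{\Phn}-V^{\pi}_{\Pt}\|_\infty\le \frac{2\gamma\beta_1}{1-\gamma}.
\]
The difficulty is that the Lipschitz property is only guaranteed for the particular $V$ defining $d_V$. I will therefore write
\[
V^\pi_{\Phn}-V^\pi_{\Pt}=T_{\kppn}V^\pi_{\Phn}-T_{\kppn}V^\pi_{\Pt}+\bigl(T_{\kppn}V^\pi_{\Pt}-T_{\mathsf P_{\mathrm t}}V^\pi_{\Pt}\bigr).
\]
The Wasserstein bound is applied only to the second term, evaluated at the target-centered value, which is the function defining $d_V$. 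The first term is controlled by $\gamma$-contraction.

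\textbf{Step 4 (assembling the bound).} Applying Step 3 to both $\pi_n$ and $\pi^\star$, I chain
\[
V^{\pi^\star}_{\Pt}-V^{\pi_n}_{\Pt}
\le \bigl(V^{\pi^\star}_{\Pt}-V^{\pi^\star}_{\Phn}\bigr)+\bigl(V^{\pi^\star}_{\Phn}-V^{\pi_n}_{\Phn}\bigr)+\bigl(V^{\pi_n}_{\Phn}-V^{\pi_n}_{\Pt}\bigr).
\]
The middle term is nonpositive by optimality of $\pi_n$, and the outer two terms are each at most $2\gamma\beta_1/(1-\gamma)$. This gives $4\gamma\beta_1/(1-\gamma)$. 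The lower side is trivial because $\pi^\star$ is optimal.

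\textbf{Main obstacle.} The hardest part is making the Lipschitz argument legitimate. This means (a) pinning down which $V$ defines $d_V$, taking it as the robust value of the policy under evaluation on $\Pt$, which is needed separately for $\pi_n$ and $\pi^\star$; and (b) passing the Wasserstein bound through the inner minimization over the TV ball in Step 2. For (b), I will first try the translation construction for $Q'$, which preserves TV radius exactly. If the simplex constraint breaks it, I will fall back to using the dual representation of the TV-robust support function, $\min_{Q}\mathbb E_Q V=\mathbb E_{\mathsf P}V-R\,\mathrm{span}(\cdot)$-type terms, in which the center enters linearly. The Wasserstein bound then transfers directly to the center term.
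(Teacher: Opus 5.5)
Your argument is correct, modulo the reading of $d_V$ discussed below, which the paper's proof also needs. It uses the paper's ingredients: $W_{d_V}(\kpn,\kpt)\le 2\beta_1$ by the triangle inequality, Kantorovich duality against a $d_V$-1-Lipschitz value, and a $\gamma$-contraction recursion giving $2\gamma\beta_1/(1-\gamma)$ per term. You assemble them differently, though.

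The paper splits the error as $\|V^{\pi_n}_{\Pt}-V^{\pi_n}_{\Phn}\|_\infty+\|V^{\pi_n}_{\Phn}-V^{\pi^\star}_{\Pt}\|_\infty$ and bounds both pieces by $\max_\pi\|V^{\pi}_{\Phn}-V^{\pi}_{\Pt}\|_\infty$. In the recursion it swaps centers at $V^\pi_{\Phn}$, simply asserting that this estimate-centered value is $d_V$-1-Lipschitz for every $\pi$. It also passes from the difference of TV support functions to $|(\kpn-\kpt)^\top V^\pi_{\Phn}|$ by an informal appeal to the dual that drops the multiplier.

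Your telescoping discards the middle term by optimality of $\pi_n$, so you need perturbation bounds only for $\pi^\star$ and $\pi_n$. You also swap centers at $V^\pi_{\Pt}$, so you need Lipschitzness of just two target-centered values. This matches the paper's own remark that the ideal ground metric comes from the target value function. It also avoids defining $d_V$ through $V^\pi_{\Phn}$, which depends on $\kpn$, which is itself defined through $d_V$. The paper's route is shorter only because it recycles the proof of Theorem~\ref{thm: train error bound}.

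To close your two obstacles:

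For (b), abandon the translation $Q'=\kpt+(Q-\kpn)$. It leaves the simplex whenever $\kpn(s')-Q(s')>\kpt(s')$ for some $s'$. Use instead the truncated form of the TV dual,
\[
\min_{q\in\BTV{p}{R}}q^\top V=\max_{\alpha\in[\min_{s'}V(s'),\,\max_{s'}V(s')]}\Bigl\{p^\top\min(V,\alpha)-R\bigl(\alpha-\min_{s'}V(s')\bigr)\Bigr\}.
\]
Here the center enters linearly, and the entrywise truncation $\min(V,\alpha)$ is still $d_V$-1-Lipschitz. Hence the two support functions differ by at most $W_{d_V}(\kpn,\kpt)\le 2\beta_1$. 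With an unrestricted vector shift $V-\alpha$, as in the form quoted in the paper, the function paired with the center need not be Lipschitz.

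For (a), a metric indexed by a single policy makes only one of $V^{\pi_n}_{\Pt}$ and $V^{\pi^\star}_{\Pt}$ automatically 1-Lipschitz, while the estimator constrains only one metric. Taking $d_V(s,s')=\sup_\pi|V^\pi_{\Pt}(s)-V^\pi_{\Pt}(s')|$ settles this; it is still a data-independent pseudometric. The paper needs an assumption of the same kind (for $V^\pi_{\Phn}$, for all $\pi$), so this is an ambiguity in the statement rather than a defect of your route.
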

 Theorem \ref{thm: error_bound_w} presents a bound on the evaluation error in terms of the side information \(\beta_1\). In practice, the target domain value function $V^{\pi}_{\Pt}$ is unknown, so we consider using the value function under the source dynamics $\kps$ to define the ground metric. We define the source metric \(\tilde{d}(s, s')\) as \(\tilde{d}(s, s') := |V^{\pi}_{\mathsf{P}_s^{s,a}}(s) - V^{\pi}_{\mathsf{P}_s^{s,a}}(s')|\). Therefore, we solve
\begin{align} \label{eqn: SVAW_eqn}
   \kpn = \arg\max_{q\in\Delta(\mathcal S)}
\sum_{s'\in\mathcal S} n_j q(s') \quad \text{subject to} \quad W_{\tilde{d}}(q, \mathsf{P}_\mathrm{s}^{s,a}) \le \beta_1,  
\end{align}

where \( W_{\tilde{d}}(\cdot,\cdot)\) is a Value-Aware 1-Wasserstein distance with the source metric \(\tilde{d}(s, s')\). We also define the mismatch factor w.r.t the uncertainty set \(\mathcal{P} = \bigotimes_{(s,a)} \mathbb B_{\mathrm{TV}}\bigl(\kp,\,R\bigr)\) as,
\begin{align}\label{eqn: mismatch factor}
    L_{\mathcal{P}} := \sup_{s \ne s'} \frac{|V^{\pi}_{\mathcal P}(s) - V^{\pi}_{\mathcal P}(s')|}{|V^{\pi}_{\kps}(s) - V^{\pi}_{\kps}(s')|}.
\end{align}

\begin{theorem}[Deviation Bound with Source-Defined Metric]\label{thm: error_bound_sw}
 Let \(W_{\tilde{d}}(\cdot, \cdot)\) be the Value-Aware  1-Wasserstein distance defined w.r.t the source metric \(\tilde{d}\), and \( \kpn \) be the IBE, given by~\eqref{eqn: SVAW_eqn}. Suppose
$
W_{\tilde{d}}(\kps,\kpt) \leq \beta_1.
$
Then, the value error is bounded as,
\[
\left\|V^{\pi_n}_{\Pt}-V^*_{\Pt} \right\|_{\infty} \le \frac{4 \gamma L_{\mathcal{P}_n}}{1 - \gamma} \cdot \beta_1.
\]
where \(L_{\mathcal{P}_n}\) is the mismatch factor w.r.t the uncertainty set \(\mathcal{P}_n\). 
\end{theorem}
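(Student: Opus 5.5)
We follow the template of Theorem~\ref{thm: error_bound_w} and add a change-of-metric step that converts the source-defined metric $\tilde d$ into the value-aware metric induced by $V^{\pi}_{\Phn}$. We work with the policy $\pi=\pi_n$ and the learned robust value $V^{\pi_n}_{\Phn}$ throughout, and use the decomposition
\[
V^{\pi_n}_{\Pt}-V^{*}_{\Pt} = \bigl(V^{\pi_n}_{\Pt}-V^{\pi_n}_{\Phn}\bigr) + \bigl(V^{\pi_n}_{\Phn}-V^{\pi^\star}_{\Phn}\bigr) + \bigl(V^{\pi^\star}_{\Phn}-V^{\pi^\star}_{\Pt}\bigr).
\]
The middle term is $\ge 0$ by optimality of $\pi_n$ on $\Phn$. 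Symmetrically, $V^{\pi^\star}_{\Pt}\ge V^{\pi_n}_{\Pt}$. It therefore suffices to bound $\|V^{\pi}_{\Pt}-V^{\pi}_{\Phn}\|_\infty$ by $2\gamma L_{\mathcal P_n}\beta_1/(1-\gamma)$ for $\pi\in\{\pi_n,\pi^\star\}$.

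\textbf{Step 1: contraction argument.} We use the robust Bellman operators $T_{\Pt}^{\pi}$ and $T_{\Phn}^{\pi}$. Because both uncertainty sets are TV balls of the same radius $R$, translating a TV ball centered at $\kpt$ onto one centered at $\kpn$ couples the inner minimizations. A standard fixed-point argument then gives
\[
\|V^{\pi}_{\Pt}-V^{\pi}_{\Phn}\|_\infty \le \frac{\gamma}{1-\gamma}\max_{s,a}\bigl|\sigma_{\mathbb B(\kpt,R)}(V)-\sigma_{\mathbb B(\kpn,R)}(V)\bigr|,
\]
where $V=V^{\pi}_{\Phn}$ and $\sigma$ denotes the support function. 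For each $q'$ in one ball, the shifted point $q'+(\kpn-\kpt)$ (suitably projected) lies in the other ball. The support-function difference is then controlled by $\sup_{\mathrm{Lip}_{d_V}(f)\le 1}|\mathbb E_{\kpt}f-\mathbb E_{\kpn}f| = W_{d_V}(\kpt,\kpn)$, since $V$ is $1$-Lipschitz with respect to $d_V(s,s')=|V(s)-V(s')|$. This Kantorovich--Rubinstein duality step is taken verbatim from the proof of Theorem~\ref{thm: error_bound_w}.

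\textbf{Step 2: change of metric.} By the definition of the mismatch factor \eqref{eqn: mismatch factor} with $\mathcal P=\Phn$, we have $d_V(s,s')\le L_{\mathcal P_n}\tilde d(s,s')$ for all $s,s'$. Pairs with $\tilde d(s,s')=0$ need separate treatment: there we require $d_V=0$ as well, which is implicit in $L_{\mathcal P_n}<\infty$. Any coupling then satisfies $\sum\rho\, d_V\le L_{\mathcal P_n}\sum\rho\,\tilde d$, so
\[
W_{d_V}(\kpt,\kpn)\le L_{\mathcal P_n}W_{\tilde d}(\kpt,\kpn).
\]

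\textbf{Step 3: triangle inequality.} We have $W_{\tilde d}(\kpt,\kpn)\le W_{\tilde d}(\kpt,\kps)+W_{\tilde d}(\kps,\kpn)\le 2\beta_1$. The first term is at most $\beta_1$ by hypothesis, and the second by feasibility of $\kpn$ in \eqref{eqn: SVAW_eqn}. The triangle inequality is valid because $\tilde d$ is a pseudometric and $W_{\tilde d}$ inherits this property.

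\textbf{Combining.} Steps 1--3 give $\|V^{\pi}_{\Pt}-V^{\pi}_{\Phn}\|_\infty\le 2\gamma L_{\mathcal P_n}\beta_1/(1-\gamma)$. Applying this to both the first and third terms of the decomposition yields the stated $4\gamma L_{\mathcal P_n}\beta_1/(1-\gamma)$.

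\textbf{Main obstacle.} The delicate point is Step 1 in the robust case ($R>0$). The bound must be expressed through $W_{d_V}$ evaluated at the value function defining $d_V$, namely $V^{\pi}_{\Phn}$. Otherwise $L_{\mathcal P_n}$ is the wrong factor, and the policy-dependence of $d_V$ must be matched for both $\pi_n$ and $\pi^\star$. Our plan is to choose $V^{\pi}_{\Phn}$ as the comparison function when unrolling the fixed point, which is what makes the mismatch factor with respect to $\Phn$ appear.
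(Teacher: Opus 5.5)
Your architecture matches the paper's proof. You use the fixed-point recursion to bound $\|V^{\pi}_{\Pt}-V^{\pi}_{\Phn}\|_\infty$ by $\frac{\gamma}{1-\gamma}$ times the worst support-function gap. You then apply a Kantorovich--Rubinstein step with constant $L_{\mathcal P_n}$; your detour through $W_{d_V}\le L_{\mathcal P_n}W_{\tilde d}$ is equivalent to the paper's remark that $V^{\pi}_{\Phn}$ is $L_{\mathcal P_n}$-Lipschitz for $\tilde d$. The triangle inequality through $\kps$ gives $2\beta_1$, and two such terms give the factor $4$.

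Your three-term decomposition uses the signs $V^{\pi_n}_{\Phn}\ge V^{\pi^\star}_{\Phn}$ and $V^{\pi^\star}_{\Pt}\ge V^{\pi_n}_{\Pt}$. It is slightly better than the paper's $|\max f-\max g|\le\max_\pi|f-g|$, because it needs Lipschitz control only at $\pi_n$ and $\pi^\star$ rather than for every policy. This matters because $d_V$ and $L_{\mathcal P_n}$ depend on the policy. You should still read $L_{\mathcal P_n}$ as the Lipschitz constant of $V^{\pi}_{\Phn}$, for $\pi\in\{\pi_n,\pi^\star\}$, with respect to the one fixed $\tilde d$ used in the IBE constraint.

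The step that fails as written is the support-function comparison in Step 1. The translate $q'+(\kpn-\kpt)$ of a minimizer need not lie in the simplex. Without a correction, the translation would prove $|\sigma_{\mathbb B(\kpt,R)}(V)-\sigma_{\mathbb B(\kpn,R)}(V)|\le|(\kpn-\kpt)^\top V|$, which is false. Take three states, $V=(1,\tfrac12,0)$, $R=0.2$, $p=(0.1,0.9,0)$ and $p'=(0.2,0.7,0.1)$. Then $(p-p')^\top V=0$, yet $\sigma_{\mathbb B(p,R)}(V)=0.4$ while $\sigma_{\mathbb B(p',R)}(V)=0.35$. The translate of the minimizer $(0,0.7,0.3)$ is $(-0.1,0.9,0.2)$. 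So ``suitably projected'' carries all the content, and you have not supplied it.

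The paper's proof is equally quick at this point. After writing $\sigma$ in dual form, it bounds the gap by $|(\kpn-\kpt)^\top V^{\pi}_{\Phn}|$, which the same example refutes, although the theorem's conclusion survives.

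A correct version of your coupling idea transports the perturbation instead of translating it. Write $q'=\kpt-m+a$ with $0\le m\le\kpt$, $a\ge 0$ and $\sum m=\sum a\le R$. Take a $d_V$-optimal coupling $\rho$ of $(\kpt,\kpn)$. Set $\hat m(y)=\sum_x\rho(x,y)\,m(x)/\kpt(x)$ and $\hat q=\kpn-\hat m+a$. Then $\hat q\ge 0$ and $\hat q\in\mathbb B(\kpn,R)$, and
\[
\hat q^\top V-q'^\top V=\sum_{x,y}\rho(x,y)\Bigl(1-\frac{m(x)}{\kpt(x)}\Bigr)\bigl(V(y)-V(x)\bigr),
\]
whose absolute value is at most $W_{d_V}(\kpt,\kpn)$. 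By symmetry this gives $|\sigma_{\mathbb B(\kpt,R)}(V)-\sigma_{\mathbb B(\kpn,R)}(V)|\le W_{d_V}(\kpt,\kpn)$.

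Alternatively, you can work in the dual form. The maximizing multiplier turns $V-\alpha$ into a clipped vector $\min(V,M)$ taken entrywise, and clipping preserves Lipschitz constants. Hence the gap is at most $\max_M|(\kpn-\kpt)^\top\min(V,M)|\le W_{d_V}(\kpt,\kpn)$.

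With either patch, your Steps 2--3 and the final combination go through unchanged.
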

Theorem~\ref{thm: error_bound_sw} bounds the evaluation error in terms of the side information \(\beta_1\), with more practical constraints \(\Phi\) that do not involve the target domain value function \(V^{\pi}_{\Pt}\).

\section{Deriving Side Information from System Knowledge} \label{Appendix: side_info}
In many robotics/control settings, the next state is generated as
$s' = f(s,a,\theta)+\xi$, where $\theta$ collects physical parameters (e.g., masses, friction, damping, actuator gains) and $\xi$ is additive noise with fixed law. Calibration, manufacturer tolerances, and system identification routinely yield \emph{bounded parameter sets} $\|\theta_t-\theta_s\|\le \varepsilon$. For any fixed $(s,a)$, consider the two next-state laws $\mathsf P_{\theta_1}^{s,a}, \mathsf P_{\theta_2}^{s,a}$ induced by $\theta_1,\theta_2$. We obtain
\[
W_1\!\big(\mathsf P_{\theta_1}^{s,a},\mathsf P_{\theta_2}^{s,a}\big) \;\le\; \mathbb{E}\,\big\|f(s,a,\theta_1)-f(s,a,\theta_2)\big\|.
\]
If \(f\) is \(L_\theta\)-Lipschitz in \(\theta\) (uniformly in \((s,a)\)), then
\[
W_1\!\big(\mathsf P_{\theta_1}^{s,a},\mathsf P_{\theta_2}^{s,a}\big) \;\le\; L_\theta\,\|\theta_1-\theta_2\|
\quad\Rightarrow\quad
W_1\!\big(\mathsf P_{\theta_t}^{s,a},\mathsf P_{\theta_s}^{s,a}\big) \;\le\; L_\theta\,\varepsilon.
\]
On discrete spaces with minimum separation $m>0$ under the ground cost, Kantorovich--Rubinstein yields
\[
\|\mathsf P_{\theta_t}^{s,a}-\mathsf P_{\theta_s}^{s,a}\|_{\mathrm{TV}} \;\le\; \tfrac{1}{m}\,W_1\!\big(\mathsf P_{\theta_t}^{s,a},\mathsf P_{\theta_s}^{s,a}\big) \;\le\; \tfrac{L_\theta}{m}\,\varepsilon,
\]
providing a non-vacuous TV and Wasserstein radius for our Distance--IBE constraints. When physical parameter bounds are unavailable, the Wasserstein radius $d_{s,a}$ can alternatively be obtained from data via KL estimation. Specifically, on a state space of bounded diameter $D$, Pinsker-type inequalities give
\[
W_1\!\big(\mathsf P_{\theta_t}^{s,a},\mathsf P_{\theta_s}^{s,a}\big) \;\le\; D\sqrt{\tfrac{1}{2}\mathrm{KL}\!\big(\mathsf P_{\theta_t}^{s,a}\|\mathsf P_{\theta_s}^{s,a}\big)},
\]
so that a minimax-optimal KL estimate \cite{zhao2020minimax} directly yields a valid Wasserstein radius, connecting sample-based divergence estimation to our Distance--IBE framework.

\section{Explicit Dependence of \(\delta_n\) on different forms of the side information \(\Phi\)} \label{Appendix: Dependence on sideinfo}
\noindent\textbf{Overview.}
Our value guarantees are indexed by the deviation
\(\delta_n \triangleq \max_{s,a}\|\widehat{\mathsf P}_n^{s,a}-\mathsf P_{\mathrm t}^{s,a}\|_{\mathrm{TV}}\).
This appendix explains how different forms of side information \(\Phi\) enter the bounds by restricting the feasible set of the constrained estimator and thereby reducing the deviation term \(\delta_n\).

\paragraph{Distance-IBE (TV / Wasserstein constraints).}
Suppose the side information specifies a source-anchored radius \(d\), e.g.,
\(W_1(\mathsf P_{\mathrm t}^{s,a},\mathsf P_{\mathrm s}^{s,a}) \le d\) or
\(\|\mathsf P_{\mathrm t}^{s,a}-\mathsf P_{\mathrm s}^{s,a}\|_{\mathrm{TV}} \le d\),
and that the estimator \(\widehat{\mathsf P}_n^{s,a}\) is constrained to the same ball around the source kernel \(\mathsf P_{\mathrm s}^{s,a}\).
Then, by the triangle inequality,
\[
\delta_n
=
\max_{s,a}\|\widehat{\mathsf P}_n^{s,a}-\mathsf P_{\mathrm t}^{s,a}\|_{\mathrm{TV}}
\le
\max_{s,a}\|\mathsf P_{\mathrm t}^{s,a}-\mathsf P_{\mathrm s}^{s,a}\|_{\mathrm{TV}}
+
\max_{s,a}\|\widehat{\mathsf P}_n^{s,a}-\mathsf P_{\mathrm s}^{s,a}\|_{\mathrm{TV}}
\le 2d.
\]
This yields an explicit, non-vacuous pre-asymptotic bound on \(\delta_n\) in terms of the side-information radius \(d\).
When side information is expressed in \(W_1\) or KL, analogous bounds follow via standard inequalities such as Kantorovich--Rubinstein and Pinsker’s inequality (e.g., on discrete \(\mathcal S\), \(\mathrm{TV}\le \tfrac{1}{m}W_1\) and \(\mathrm{TV}\le \sqrt{\tfrac{1}{2}\mathrm{KL}}\)).

\paragraph{Moment-IBE.}
A constraint of the form \(\|\mu(q)-\mu(\kps)\|\le\beta_{s,a}\) for feature map \(\phi\) induces an integral probability metric–type control:
\(
\sup_{\|g\|\le 1}\big|\mathbb E_q[g(\phi)]-\mathbb E_{\kpt}[g(\phi)]\big|
\le
\beta_{s,a} + \|\mu(\kpt)-\mu(\kps)\|.
\)
When the value function \(V\) is Lipschitz in \(\phi\), this control translates into a TV or \(W_1\) bound, thereby reducing \(\delta_n\) relative to unconstrained maximum-likelihood estimation at the same sample size \(n\).

\paragraph{Density-IBE.}
Density-ratio constraints of the form \(0 \le q \le B\,\mathsf P_{\mathrm s}\) enforce support overlap and prevent off-support mass.
For bounded \(B\), such constraints yield uniform bounds on deviations of the form
\(|q^\top v - (\mathsf P_{\mathrm t})^\top v|\) for bounded value functions \(v\),
which again translate into TV control via dual norms, tightening \(\delta_n\).

\paragraph{LDS-IBE.}
Under a \(d_0\)-dimensional parameterization and a TV-Lipschitz mapping
\(\theta \mapsto \mathsf P_\theta^{s,a}\),
the constrained maximum-likelihood estimator admits a finite-sample deviation
\(R_n = \tilde O(d_0/\sqrt n)\),
implying \(\delta_n = \tilde O(d_0/\sqrt n)\).
This rate underlies the improved robust suboptimality gap established in Theorem~7.

\medskip
\noindent
Collectively, these instantiations show how different choices of side information \(\Phi\) enter the guarantees explicitly through \(\delta_n\), and hence directly affect both the training and evaluation errors as well as the robust suboptimality gap.

\section{Proofs }\label{Appendix: Proofs}

\subsection{Proof of Theorem \ref{thm: train error bound}}
\begin{proof}
Considering any state $s$, we have that
\begin{align}\label{eqn: 14}
 |V^{\pi_{n}}_{\Phn}(s) -V^{\pi^*}_{\Pt}(s)| \leq \max_\pi |V^{\pi}_{\Phn}(s) -V^{\pi}_{\Pt}(s)|,
\end{align}
where the inequality is from the fact that $|\max f -\max g| \leq \max |f-g|$. 

We denote the \((s,a)\)-uncertainty sets \(\mathbb B_{\mathrm{TV}}\bigl(\kpt,\,R\bigr)\) and  \(\mathbb B_{\mathrm{TV}}\bigl(\kpn,\,R\bigr)\) as $(\Pt)_{s}^a$ and $(\Phn)_{s}^{a}$, respectively. 

For any fixed policy $\pi$, we have 
\begin{align}\label{eq:11}
    |V^{\pi}_{\Phn}(s)-V^{\pi}_{\Pt}(s)|&\overset{(a)}{=}|r_\pi(s)+\gamma \sigma_{(\Phn)^{\pi(s)}_s}(V^{\pi}_{\Phn})-r_\pi(s)-\gamma \sigma_{(\Pt)_{s}^a}(V^{\pi}_{\Pt})|\nonumber\\
    &=|\gamma \sigma_{(\mathcal{P}_n)^{\pi(s)}_s}(V^{\pi}_{\Phn})-\gamma \sigma_{(\Pt)_{s}^a}(V^{\pi}_{\Pt})|\nonumber\\
    &=|\gamma \sigma_{(\mathcal{P}_n)^{\pi(s)}_s}(V^{\pi}_{\Phn})-\gamma \sigma_{(\Pt)_{s}^a}(V^{\pi}_{\Phn})+\gamma \sigma_{(\Pt)_{s}^a}(V^{\pi}_{\Phn})-\gamma \sigma_{(\Pt)_{s}^a}(V^{\pi}_{\Pt})|\nonumber\\
    & \overset{(b)}{\leq}\gamma \|V^{\pi}_{\Phn}-V^{\pi}_{\Pt}\|_{\infty}+
    \gamma |\sigma_{(\mathcal{P}_n)^{\pi(s)}_s} (V^{\pi}_{\Phn})- \sigma_{(\Pt)_{s}^a}(V^{\pi}_{\Phn})|,
\end{align}
where $(a)$ is from the robust Bellman equation, and $(b)$ is from the Lipschitz smoothness of $\sigma_\mathcal{P}(\cdot)$ \cite{wang2021online}. 

Consider the second term $|\sigma_{(\mathcal{P}_n)^{\pi(s)}_s} (V^{\pi}_{\Phn})- \sigma_{(\Pt)_{s}^a}(V^{\pi}_{\Phn})|$. Note that the support function can be solved by its dual form:
\begin{align}
    \sigma_\mathcal{P}(V)=\min_{\mathsf P\in\mathcal{P}} \mathsf P^\top V =\max_{\alpha\geq 0} \{\mathsf P_0^\top (V-\alpha)-R\textbf{Span}(V-\alpha) \},
\end{align}
where $\mathsf P_0$ is the center of $\mathcal{P}$, $R$ is the radius, and \(\textbf{Span}(V) = \max_i V(i) - \min_i V(i)\), for the vector \(V=[V(1),\dots,V(S)]\). Hence, \eqref{eqn: 14} can be further bounded as 
\begin{align}\label{eq:9}
    |V^{\pi_n}_{\Phn}(s)-V^{\pi^*}_{\Pt}(s)| 
    \leq \gamma \|V^{\pi}_{\Phn}-V^{\pi}_{\Pt}\|+\gamma |(\widehat{\mathsf P}_{n}^{s,\pi(s)}-\mathsf P_{\text{t}}^{s,\pi(s)})^\top V^{\pi}_{\Phn}  |\:,
\end{align}
which is from the fact that $|\max_x f(x)-\max_x g(x)| \leq \max_x |f(x)-g(x)|$. Note that \eqref{eq:9} holds for any $s$ and any policy $\pi$, thus 
\begin{align}\label{eq:99}
    \|V^{\pi_n}_{\Phn}-V^{\pi^*}_{\Pt}\|_{\infty}
    \leq \gamma \|V^{\pi}_{\Phn}-V^{\pi}_{\Pt}\|_{\infty}+\gamma \max_{(s,a)}\|(\kpn-\kp_{\text{t}})^\top V^{\pi}_{\Phn}  \|_{1}\:.
\end{align}
Then, recursively applying \eqref{eq:99}, and using the fact \(V^{\pi}_{\Phn}(s)\leq \frac{1}{1-\gamma}\), implies that 
\begin{align}
    \|V^{\pi_n}_{\Phn}-V^{\pi^*}_{\Pt}\|_{\infty}\leq \sum^\infty_{t=1}\frac{\gamma^t}{1-\gamma}\max_{(s,a)}\|\kpn-\kp_{\text{t}}\|_{1}=2 \max_{(s,a)}\frac{\|\kpn-\kp_{\text{t}}\|_{TV}}{(1-\gamma)^2}\:.
\end{align}
\end{proof}

\subsection{Proof of Theorem \ref{thm: eval error bound}}
\begin{proof}
      Denote the optimal policy w.r.t. the uncertainty set \(\mathcal{P}_n\) by $\pi_n$, i.e.,
\begin{align}
\pi_n=\arg\max_\pi V^{\pi}_{\mathcal{P}_n}.
\end{align}
    Then, we have that
\begin{align}
   \|V^{\pi_n}_{\Pt}-V^{\pi^*}_{\Pt}\|_{\infty}\leq  \|V^{\pi_n}_{\Pt}-V^{\pi_n}_{\mathcal{P}_n}\|_{\infty} + \|V^{\pi_n}_{\mathcal{P}_n}-V^{\pi^*}_{\Pt}\|_{\infty}
   \label{eq:eval_bnd}
\end{align}.

Following the same technique in the  proof as of Theorem
\ref{thm: train error bound}, we know that 

\begin{align}
 \|V^{\pi_n}_{\mathcal{P}_n}-V^{\pi^*}_{\Pt}\|_{\infty}\leq         
\max_\pi \|V^{\pi}_{\Phn} -V^{\pi}_{\Pt}\|_{\infty}  \leq 2\max_{(s,a)}\frac{\|\kpn-\kpt\|_{TV}}{(1-\gamma)^2}
\label{eq:term1_eval_bound}
\end{align}
 and

\begin{align}
 \|V^{\pi_n}_{\Pt}-V^{\pi_n}_{\mathcal{P}_n}\|_{\infty}&\leq         
\max_\pi \|V^{\pi}_{\Phn} -V^{\pi}_{\Pt}\|_{\infty}\nonumber\\
&\leq \sum^\infty_{t=1}\frac{\gamma^t}{1-\gamma}\max_{(s,a)}\|\kpn-\kpt\|_{1}\nonumber\\
&= 2\max_{(s,a)}\frac{\|\kpn-\kpt\|_{TV}}{(1-\gamma)^2}.
\label{eq:term2_eval_bound}
\end{align}
 Combining RHS of \eqref{eq:term1_eval_bound} and \eqref{eq:term2_eval_bound} completes the proof.

\end{proof}

\subsection{Proof of Corollary \ref{cor:consistency}}
 
\begin{proof}
 The proof follows directly from the error bounds provided in Theorem \ref{thm: train error bound} and \ref{thm: eval error bound}.     
\end{proof}

\subsection{Proof of Proposition \ref{thm: CMLE in TV}}
We denote the optimal vanilla IBE by \(\kphs\), while the contrained IBE is denoted by \(\kpn\) . 
We first establish the following lemma on the convergence of the vanilla IBE in total variation.
\begin{lemma}\label{thm: MLE in TV}
Let $\kphs$ be the vanilla IBE of $\kp_{\textup{t}}$, then
\begin{equation}
    \lim_{n \to \infty} \|\kphs - \kp_{\text{t}}\|_{TV} = 0.
\end{equation}
\end{lemma}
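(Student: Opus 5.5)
The plan is to identify the vanilla IBE explicitly and then apply a finite-dimensional law of large numbers. With no side-information constraint, \eqref{eqn: s,a -est} maximizes the concave objective $\sum_{s'} N_{s,a}(s')\log q(s')$ over the simplex. We will first show, by the Lagrangian or Gibbs' inequality argument, that its unique maximizer is the empirical distribution
\[
\kphs(s') = \frac{N_{s,a}(s')}{n}, \qquad s'\in\mathcal S .
\]
The Gibbs argument runs as follows. The difference between the objective at $\hat q = N_{s,a}/n$ and at any feasible $q$ equals $n\,\mathrm{KL}(\hat q\,\|\,q)\ge 0$. Equality holds only when $q=\hat q$. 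Coordinates with $N_{s,a}(s')=0$ contribute nothing to the objective, so pushing mass there only lowers the others; hence the maximizer puts zero mass on them.

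Next, for each fixed $s'\in\mathcal S$, write $N_{s,a}(s')=\sum_{i=1}^n \mathbf 1\{x_i^{s,a}=s'\}$. This is a sum of i.i.d.\ Bernoulli variables with mean $\kpt(s')$. By the strong law of large numbers, $\kphs(s')\to\kpt(s')$ almost surely. Since $\mathcal S$ is finite with $S$ elements, a finite intersection of probability-one events gives
\[
\|\kphs-\kpt\|_{TV}=\tfrac12\sum_{s'}|\kphs(s')-\kpt(s')|\to 0 \quad\text{a.s.}
\]
This is the stated limit.

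To get a rate that will also be useful later (e.g.\ for Lemma~\ref{lemma:radius}), we will use the $\ell_1$-deviation bound for the empirical distribution. Since $\mathbb E|\hat q(s')-p(s')|\le\sqrt{p(s')(1-p(s'))/n}$, Cauchy--Schwarz gives $\mathbb E\|\kphs-\kpt\|_{TV}\le \tfrac12\sqrt{S/n}$. Applying McDiarmid's inequality to the $2/n$-bounded-difference function $\|\hat q-p\|_1$ then gives a high-probability statement.

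There is no real obstacle. The only care needed is in identifying the maximizer when some counts are zero, where the $\log 0$ convention is handled as in the Gibbs argument above. We also need to state which mode of convergence is meant; we take almost sure convergence, which also implies convergence in probability and in $L^1$, since TV is bounded by one.
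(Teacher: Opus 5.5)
Your proposal is correct and takes essentially the same route as the paper: identify the vanilla IBE as the empirical frequencies $N_{s,a}(s')/n$, apply the strong law of large numbers coordinatewise, and use finiteness of $\mathcal S$ to pass to convergence in TV. Your Gibbs-inequality argument for the form of the maximizer (which the paper only asserts) and your explicit intersection of probability-one events tighten the paper's argument. The McDiarmid rate is an optional extra that the lemma does not need.
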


\begin{proof}
By the consistency of the MLE, for a sequence $x^n = (x_1, \ldots, x_n)$ drawn i.i.d. from the discrete distribution $\kpt$, the MLE estimate of $\kpt$ is given by
\begin{equation}\label{eqn: consist MLE}
\widetilde{\mathsf P}_{n,j}^{s,a} = \frac{n_j}{n}, \quad j=1, \ldots, k
\end{equation}
where $n_j := \sum_{i=1}^n \mathbf{1}(x_i = j)$ represents the number of times state $j$ is observed in the sample.

By the Strong Law of Large Numbers (SLLN), we have that 
\[
\frac{n_j}{n} = \frac{1}{n}  \sum_{i=1}^n \mathbf{1}(x_i = j) \xrightarrow{\text{a.s.}} \kp_{\text{t},j}, \quad \forall  j\in [k].
\]
By the almost sure convergence, $\Pr(w\in \Omega: \lim_{n\rightarrow\infty} \frac{n_j}{n} =\kp_{\text{t},j}) = 1$, where $\Omega$ is the sample space. Hence, for any $\epsilon > 0$, $\exists N_j$ such that 
\begin{align}
\left|\frac{n_j}{n} - \kp_{\text{t},j}\right|\leq \frac{\epsilon}{k}, \quad \forall n\geq N_j.
\end{align}
Let $N = \max(N_1, \ldots, N_k)$. It follows that 
\begin{align}
\sum_{j = 1}^k\left|\frac{n_j}{n} - \kp_{\text{t},j}\right|\leq \epsilon, \quad \forall n\geq N.
\end{align}
By \eqref{eqn: consist MLE},
$\sum_{j = 1}^k\left|\widetilde{\mathsf P}_{n,j}^{s,a} - \kp_{\text{t},j}\right|\leq \epsilon, \quad \forall n\geq N$. Hence, $\|\kphs-\kpt\|_{TV}\rightarrow 0$, which completes the proof.
\end{proof}

Now, we are ready to prove Proposition \ref{thm: CMLE in TV}.
\begin{proof}
    We prove the results for each IBE method listed in Table \ref{tab: estimation methods}. Both the vanilla IBE and the IBE programs have a unique solution for each 
$n$, as the objective is strictly concave and the constraints are convex. 
We use the results of Lemma~\ref{thm: MLE in TV}, which states that for any $\varepsilon > 0$, $\exists N_0(\varepsilon)$ such that $\|\kphs - \kpt\|_{TV} \leq \varepsilon$, $\forall n > N_0$, to prove each IBE case. 

\noindent\textbf{Distance IBE}.\

Let $d_{s,a}^0 = \|\kps - \kpt\|_{1}= 2 \|\kps - \kpt\|_{TV}$ .
Since $\kpt$ is an interior point of the constraint set, we can choose $\varepsilon \leq d_{s,a}- \frac{d_{s,a}^0}{2}$. Hence, for sufficiently large $n$, 
 \begin{align}
    \|\kphs - \kps\|_{TV} \overset{(a)}{\leq} \|\kphs - \kpt\|_{TV} + \|\kps - \kpt\|_{TV} 
       \overset{(b)}{\leq} \varepsilon + \frac{d_{s,a}^0}{2}  
     & \overset{(c)}{\leq} d_{s,a}
\end{align}
where (a) follows from the triangular inequality, (b) by Theorem \ref{thm: MLE in TV}, and (c) from the choice of $\varepsilon$. Hence, 
$\kphs$ satisfies the distance constraint. Therefore, by the uniqueness of $\kpn$ (the solution of Equation~\eqref{eqn: s,a -est}), it follows that $\kphs$ solves the distance IBE program. Consequently, $\kpn\rightarrow\kpt$ in total variation.    

\noindent\textbf{Density IBE}.\\
We first denote the density ratio between $\kpt$ and $\kps$ by $B_{s,a}^0$, i.e.
\begin{align}\label{eqn : density}
    B_{s,a}^0 = \frac{\kpt}{\kps}
\end{align}
By Theorem \ref{thm: MLE in TV}, each entry of $\kphs$ converges to the corresponding entry of $\kpt$. For sufficiently large $n$, we have  
\begin{align} \label{eqn: one+eps}
    \left|\frac{\widetilde{\mathsf P}_{n,i}^{s,a}}{\mathsf P_{\text{t},i}^{s,a}} -1 \right| \leq \varepsilon 
    \implies \frac{\widetilde{\mathsf P}_{n,i}^{s,a}}{\mathsf P_{\text{t},i}^{s,a}} \leq 1 + \varepsilon    
\end{align}
 where $\widetilde{\mathsf P}_{n,i}^{s,a}$ and $\mathsf P_{\text{t},i}^{s,a}$ are the $i^{th}$ element of $\kphs$ and $\kpt$, respectively. 
 Then, we get that 
 \begin{align}
     \frac{\widetilde{\mathsf P}_{n,i}^{s,a}}{\mathsf P_{\text{s},i}^{s,a}}   \overset{(a)}{\leq} B_{s,a}^{0,i} (1 + \varepsilon)
     \overset{(b)}{\leq} B_{s,a}^{i}
 \end{align}
where $B_{s,a}^{0,i}$, $B_{s,a}^{i}$ and $\mathsf P_{\text{s},i}^{s,a}$ are the $i^{th}$ entry of $B_{s,a}^{0}$, $B_{s,a}$, and $\mathsf{P}_{\text{s}}^{s,a}$, respectively. Here: (a) follows from Eqs.~\eqref{eqn : density} and \eqref{eqn: one+eps}, while (b) by the choice of $\varepsilon$. We conclude that $\kphs$ satisfies the Density IBE's constraint. Since $\kpn$ is a unique solution of Equation~\eqref{eqn: s,a -est}, $\kphs$ solves the density IBE. Thus, the result follows. 

\noindent\textbf{Moment IBE}.\\
We define $\beta^{0}_{s,a}$ as the absolute difference between the embedding means of  $\kps$ and $\kpt$, i.e.
\begin{align}\label{eqn : moment}
    \beta^{0}_{s,a} = |\mu({\kpt})- \mu({\kps})|\:.
\end{align}
Now, we prove that each entry of $|\mu(\kphs)- \mu(\kpt)|$ goes to zero as $n \to \infty$. Suppose $\phi$ maps the state space (of $K$ states) to a high-dimensional feature space of dimension $M$, and consider the matrix $A$ of features of size $M\times K$, i.e.
\begin{align}
    A = \left[ \begin{array}{c}  \textbf{a}_{1}  \\ \vdots \\ \textbf{a}_{M}  \end{array} \right] = \left[ \begin{array}{ccc} a_{1,1} & \cdots & a_{1,K} \\ \vdots & \ddots & \vdots \\ a_{M,1} &  \cdots & a_{M,K} \end{array} \right]
\end{align}
where $\phi(x_{i}) = (a_{1,i}, \dots, a_{M,i})^\top, x_{i}\in \mathcal{S},i\leq K.$ 
\begin{align}
    |\mu({\kphs})- \mu({\kpt})|
    &= |A(\kphs - \kpt ) |  \nonumber \\ 
    &= \left[ \begin{array}{c} |\sum_{i=1}^K a_{1,i} (\widetilde{\mathsf P}_{n,i}^{s,a} - \mathsf P_{\text{t},i}^{s,a} ) | \\ \vdots \\ |\sum_{i=1}^K a_{M,i} (\widetilde{\mathsf P}_{n,i}^{s,a} - \mathsf P_{\text{t},i}^{s,a} ) | \end{array} \right] \nonumber \\
    &\leq \left[ \begin{array}{c}  \|\textbf{a}_{1}^{\top}  (\kphs -\kpt )\|_{1}  \\ \vdots \\ \|\textbf{a}_{M}^{\top}  (\kphs -\kpt )\|_{1} \end{array} \right] \nonumber \\
    & \overset{(*)}{\leq } 
    \left[ \begin{array}{c}  \|A\|_{\infty}  \|(\kphs -\kpt )\|_{1}  \\ \vdots \\ \|A\|_{\infty}  \|(\kphs -\kpt )\|_{1} \end{array} \right] \nonumber \\ &= 2 \|A\|_{\infty} \|\kphs - \kpt\|_{TV}  \mathbf{1}\:, 
\end{align} 
where $\mathbf{1}$ is the vector of all ones and (*) follows from H\"{o}lder's inequality. Hence, by Theorem \ref{thm: MLE in TV}, we have:
\begin{align}\label{eqn: moment contraints}
    \lim_{n\to \infty}|\mu({\kphs})- \mu({\kpt})| = 0\:.
\end{align}

For sufficiently large $n$, 
\begin{align}
|\mu({\kphs})- \mu({\kps})|&=  | A(\kphs - \kps )| \nonumber\\
&= |A(\kphs -\kpt +\kpt -\kps ) | \nonumber\\
&\overset{(a)}{\leq} |A (\kphs -\kpt)| + | A (\kpt - \kps) | \nonumber\\
&\overset{(b)}{\leq}\varepsilon + \beta^{0}_{s,a}  \nonumber \\
&\overset{(c)}{\leq} \beta_{s,a}
\end{align}
where (a) follows by the triangular inequality, (b) by using Eqs.~\eqref{eqn : moment} and \eqref{eqn: moment contraints}, and (c) by the choice $\varepsilon$. Then, $\kpn$ satisfies the moment constraints. The result follows by the uniqueness of $\kpn$.
 
\end{proof}

\subsection{Proof of Lemma \ref{lemma:radius}}
\begin{proof}
  
Define the constrained MLE 
    \[
    \widehat{\theta} = \arg\max_{\theta\in \Theta_0} \mathcal{L}(\theta)
    \]
    where $\mathcal{L}$ is the log-likelihood function under $\mathsf{P}_{\theta}$. By the consistency of the MLE, we have with probability at least $1-\delta_1$ that
    \[\|\widehat{\theta} - \theta^{*}\|\leq C_{0}\sqrt{\frac{\mathsf d_0}{n}}\]
    for some constant $C_0$.
  By Assumption \ref{assumption: Lipschitzness}, we have
  \[
  \|\mathsf{P}_{\widehat{\theta}} -\kpt\|_{TV} \leq \|\widehat{\theta} -\theta^{*} \|
  \leq L C_0\sqrt{\frac{\mathsf d_0}{n}}=: R_{n}\:.\] 

  This ensures $\mathsf{P}_{\mathrm t}\in \Phn(R_n)$ with probability at least $1-\delta_1$.  
\end{proof}

\subsection{Proof of Theorem \ref{thm: optgap}}

\begin{proof}
 The proof follows from Lemma~\ref{lemma:radius} and Theorem~\ref{thm: eval error bound}.
   
\end{proof}

\subsection{Proof of Theorem \ref{thm: error_bound_w}}

\begin{proof}
   Denote the optimal policy w.r.t the uncertainty set \(\mathcal{P}_n\) by $\pi_n$, i.e.,
\begin{align}
\pi_n=\arg\max_\pi V^{\pi}_{\mathcal{P}_n}.
\end{align}
    Then, we have that
\begin{align}\label{eq:9new}
   \|V^{\pi_n}_{\Pt}-V^{\pi^*}_{\Pt}\|_{\infty}\leq  \|V^{\pi_n}_{\Pt}-V^{\pi_n}_{\mathcal{P}_n}\|_{\infty} + \|V^{\pi_n}_{\mathcal{P}_n}-V^{\pi^*}_{\Pt}\|_{\infty}.
\end{align}

For the first term on the RHS of \eqref{eq:9new}, we follow the proof of Theorem \ref{thm: train error bound}. Therefore, we have for any \(s\in\mathcal{S}\):
\begin{align} 
    |V^{\pi_n}_{\Phn}(s)-V^{\pi^*}_{\Pt}(s)|\leq \max_\pi |V^{\pi}_{\Phn}(s) -V^{\pi}_{\Pt}(s)|. 
\end{align}

For any policy \(\pi\), we have
\begin{align}
    |V^{\pi}_{\Phn}(s) -V^{\pi}_{\Pt}(s)|\leq \gamma \|V^{\pi}_{\Phn}-V^{\pi}_{\Pt}\|+\gamma |(\widehat{\mathsf P}_{n}^{s,\pi(s)}-\mathsf P_{\text{t}}^{s,\pi(s)})^\top V^{\pi}_{\Phn}  |\:,
\end{align}

By Kantorovich duality, given two distributions \( \mathsf{P}, \mathsf{Q} \in \Delta(\mathcal{S}) \), the Wasserstein-1 distance induced by \( d_V \) is defined as
\[
W_{d_V}(\mathsf{P}, \mathsf{Q}) := \sup_{\|f\|_{\text{Lip}(d_V)} \leq 1} \left| \mathbb{E}_{\mathsf{P}}[f] - \mathbb{E}_{\mathsf{Q}}[f] \right|,
\]
where
\[
\|f\|_{\text{Lip}(d_V)} := \sup_{s \neq s'} \frac{|f(s) - f(s')|}{d_V(s, s')}.
\]
Since \( V^{\pi}_{\Phn} \) is 1-Lipschitz with respect to \( d_V \), we have
\begin{align} \label{eqn: lip}
    |(\widehat{\mathsf P}_{n}^{s,\pi(s)}-\mathsf P_{\text{t}}^{s,\pi(s)})^\top V^{\pi}_{\Phn}  | \leq W_{d_V}(\widehat{\mathsf P}_{n}^{s,\pi(s)}, \mathsf P_{\text{t}}^{s,\pi(s)}).
\end{align}

By the triangle inequality, we obtain,
\[
W_{d_V}(\widehat{\mathsf P}_{n}^{s,\pi(s)}, \mathsf P_{\text{t}}^{s,\pi(s)}) \leq W_{d_V}(\widehat{\mathsf P}_{n}^{s,\pi(s)}, \mathsf P_{\text{s}}^{s,\pi(s)}) + W_{d_V}(\mathsf P_{\text{s}}^{s,\pi(s)},\mathsf P_{\text{t}}^{s,\pi(s)}) \leq 2\beta_1.
\]
Therefore, we have, 
\[|V^{\pi}_{\Phn}(s) -V^{\pi}_{\Pt}(s)|
    \leq \gamma \|V^{\pi}_{\Phn}-V^{\pi}_{\Pt}\|_{\infty}+2\gamma\beta_1.\]
Applying the above recursively, we get 
\[
\|V^{\pi_n}_{\Phn}-V^{\pi^*}_{\Pt}\|_{\infty} 
    \leq  \frac{2\beta_1\gamma}{(1-\gamma)}.\]

Also, we have
\[\|V^{\pi_n}_{\Pt}-V^{\pi_n}_{\mathcal{P}_n}\|_{\infty} \leq         
\max_\pi \|V^{\pi}_{\Phn} -V^{\pi}_{\Pt}\|_{\infty} \leq \frac{2\beta_1\gamma}{(1-\gamma)}. \]

Combining the RHS of both terms of~\eqref{eq:9new} completes the proof. 
 
\end{proof}
  
\subsection{Proof of Theorem \ref{thm: error_bound_sw}}
\begin{proof}
Repeat the proof of Theorem~\ref{thm: error_bound_w}. The key change is that $V^{\pi}_{\Phn}$ may not be 1-Lipschitz under $\tilde{d}$, but it is $L_{\mathcal{P}_n}$-Lipschitz, i.e.,
\[| V^{\pi}_{\Phn} (s) -  V^{\pi}_{\Phn} (s')  |\le L_{\mathcal{P}_n} \cdot \tilde{d}(s, s'),\]\:

 Therefore,
\[
 |(\widehat{\mathsf P}_{n}^{s,\pi(s)}-\mathsf P_{\text{t}}^{s,\pi(s)})^\top V^{\pi}_{\Phn}  | \leq L_{\mathcal{P}_n} W_{\tilde{d}}(\widehat{\mathsf P}_{n}^{s,\pi(s)}, \mathsf P_{\text{t}}^{s,\pi(s)}).
\]
The rest of the proof proceeds identically.  
\end{proof}

\subsection{Proof of Lemma \ref{lemma: FIM}}
\begin{proof}
    To simplify notation, we denote $\kpt$ by $\kpp$.
Let $S$ be a random vector that takes one of the values from $\{e_1, \ldots, e_k\}$, where $e_j, j = 1,\ldots, k$, is a vector with a 1 in the $j$-th position and 0 elsewhere (one-hot encoding). The entries are denoted $s_j$, where $\Pr(s_j = 1) = \Pr(S = e_j) = \kpp_{j}$, so the vector $\kpp = [\kpp_1, \ldots, \kpp_k]^T$, and $\sum_{j=1}^k \kpp_{j} = 1$. To compute the Fisher Information Matrix (FIM), we define the likelihood
\[ \Pr(S|\kpp) = \prod_{j=1}^k {\kpp_{j}}^{s_j}. \]
Hence,
\[ \log \Pr(s|\kpp) = \sum_{j=1}^k s_j \log \kpp_{j}. \]

Therefore, the FIM, $I_S(\kpp) = \mathbb{E}[\nabla \log \Pr(s|\kpp) \nabla \log \Pr(s|\kpp)^T]$, is given by:
\[ I_S(\kpp) = \mathbb{E}\left( \left[ \begin{array}{c} s_1/\kpp_1 \\ \vdots \\ s_k/\kpp_{k} \end{array} \right]\left[ \begin{array}{ccc} s_1/\kpp_1 & \cdots & s_k/\kpp_k \end{array} \right]\right). \]

This simplifies to
\[ I_S(\kpp) = \mathbb{E} \left[ \begin{array}{cccc} (s_1/\kpp_1)^2 & s_1s_2/(\kpp_1 \kpp_2) & \cdots & s_1s_k/(\kpp_1\kpp_{k}) \\ \vdots & \ddots & \ddots & \vdots \\ s_ks_1/(\kpp_{k}\kpp_1) & \cdots & \cdots & (s_k/\kpp_{k})^2 \end{array} \right]. \]

Now, $\mathbb{E}[(s_i/\kpp_{i})^2] = \frac{1}{\mathsf P_{i}^{'2}} \mathbb{E}[s_i^2] = 1/\mathsf P_{i}^{'2} (1 \cdot \kpp_{i} + 0) = 1/\kpp_{i}, i = 1, \ldots, k$ and $\mathbb{E}[(s_i s_j/\kpp_{i} \kpp_{j})] = (1/\kpp_{i} \kpp_{j}) \mathbb{E}[s_i s_j] = 0$, for $i\ne j$. Thus, 
\[ I_S(\kpp) = \text{diag}\left(\frac{1}{\kpp_1}, \ldots, \frac{1}{\kpp_k}\right). \]

If we observe $n$ i.i.d. vectors $S$, we get that $J(\kpp) = n I_S(\kpp)$.

\end{proof}

\subsection{Proof of Theorem \ref{thm: Cramer-Rao}}
We use the same notation simplifications as in the proof of Lemma \ref{lemma: FIM}. 
We start by bringing in the following lemma.
\begin{lemma} [Theroem 2, \cite{marzetta1993simple}]\label{lemma: CCRB}
    Let the $k$-parameter vector $\widehat{\theta}$ be a constrained unbiased estimator of the parameter $\theta$, with $M$ constrains $f(\theta)=0$ for some real valued function $f$. Then, the CRB is given by:
    \begin{equation}
        C(\theta) = J^{-1} - J^{-1} F(F^{T} J^{-1} F)^{-1}F^{T}J^{-1}. 
    \end{equation}
    where J is the FIM and F is $k \times M$ gradient matrix w.r.t $\theta$.
\end{lemma}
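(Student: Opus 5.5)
The plan is to prove the constrained bound in the null-space (Stoica–Ng) form first, and then convert it algebraically into the stated Marzetta form. Assume the usual regularity conditions: the constraint set $\{\theta: f(\theta)=0\}$ is a smooth manifold near $\theta$; the $k\times M$ gradient $F=\nabla f(\theta)^{\top}$ has full column rank $M$; $J$ is positive definite, which holds in our application since $J=n\,\diag(1/\kpp_j)$ by Lemma~\ref{lemma: FIM}; and differentiation under the integral sign is permitted. Let $U\in\mathbb R^{k\times(k-M)}$ have orthonormal columns spanning $\ker F^{\top}$, the tangent space of the constraint manifold at $\theta$. Write $s=\nabla_\theta \log L(\theta)$ for the score, so that $\mathbb E[s]=0$ and $\mathbb E[ss^{\top}]=J$.

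\textbf{Step 1 (unbiasedness along the constraint).} Unbiasedness is only assumed on the constraint set, so $\mathbb E_{\theta}[\widehat\theta]=\theta$ only along feasible curves. For any feasible curve $\theta(t)$ with $\theta(0)=\theta$ and $\dot\theta(0)=Uv$, differentiate $\mathbb E_{\theta(t)}[\widehat\theta]=\theta(t)$ at $t=0$ and pass the derivative inside the expectation. This gives $\mathbb E[(\widehat\theta-\theta)s^{\top}]\,U v = U v$ for every $v$, that is, $\mathbb E[(\widehat\theta-\theta)s^{\top}]U=U$. This is the step I expect to be the main obstacle. It needs the interchange of derivative and expectation, and it needs every tangent direction $Uv$ to be realized by a feasible curve. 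Both follow from the regularity assumptions and the implicit function theorem. The point to stress is that no information is available in the normal directions spanned by $F$, which is exactly where the bound improves.

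\textbf{Step 2 (covariance inequality).} Set $z=\widehat\theta-\theta$ and $w=U^{\top}s$. Then $\mathbb E[zw^{\top}]=U$ by Step 1, and $\mathbb E[ww^{\top}]=U^{\top}JU$, which is positive definite. The joint second-moment matrix of $(z,w)$ is positive semidefinite. Taking its Schur complement gives
\[
\operatorname{Cov}(\widehat\theta)\succeq U\,(U^{\top}JU)^{-1}U^{\top}.
\]

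\textbf{Step 3 (algebraic identity).} It remains to show $U(U^{\top}JU)^{-1}U^{\top}=C(\theta)$. Write
\[
U(U^{\top}JU)^{-1}U^{\top}=J^{-1/2}\,\Pi_{J^{1/2}U}\,J^{-1/2},
\]
where $\Pi_X$ denotes orthogonal projection onto the range of $X$. The ranges of $J^{1/2}U$ and $J^{-1/2}F$ are orthogonal, since $(J^{1/2}U)^{\top}J^{-1/2}F=U^{\top}F=0$. Their dimensions are $k-M$ and $M$, so the two ranges together span $\mathbb R^k$. Hence
\[
\Pi_{J^{1/2}U}=I-\Pi_{J^{-1/2}F}=I-J^{-1/2}F\,(F^{\top}J^{-1}F)^{-1}F^{\top}J^{-1/2}.
\]
Conjugating by $J^{-1/2}$ gives $J^{-1}-J^{-1}F(F^{\top}J^{-1}F)^{-1}F^{\top}J^{-1}$, which is the claimed $C(\theta)$.
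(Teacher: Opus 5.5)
Your argument is correct, and it goes further than the paper. The paper gives no derivation of this lemma; it simply imports Marzetta's Theorem~2.

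Steps~1--2 establish the tangent-space (null-space) form $\operatorname{Cov}(\widehat\theta)\succeq U(U^{\top}JU)^{-1}U^{\top}$. This is the Stoica--Ng version of the constrained bound. Step~3 correctly reconciles it with the stated formula: since $U^{\top}F=0$ and the ranks of $J^{1/2}U$ and $J^{-1/2}F$ add up to $k$, the two orthogonal projections sum to the identity.

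The trade-off between the two forms is as follows. The cited form is the one the paper uses downstream, because it is written directly in terms of $J^{-1}$ and $F$ and needs no basis of $\ker F^{\top}$. However, it implicitly requires $J\succ 0$ and $F$ of full column rank so that $(F^{\top}J^{-1}F)^{-1}$ exists, and the paper's statement never says so. Your route makes these hypotheses explicit and shows that they enter only in the final algebraic step. It also yields a bound that remains valid when $J$ is singular, as long as $U^{\top}JU\succ 0$.

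A small correction on your remark about the application. There the relevant $J$ is the reduced $(k-1)$-parameter FIM from the proof of Theorem~\ref{thm: Cramer-Rao}, not $n\,\diag(1/p_j)$. That matrix is also positive definite, so your remark still applies.

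If you want to reach the displayed formula without introducing $U$, note that Step~1 is equivalent to $\mathbb E[(\widehat\theta-\theta)s^{\top}]=I+\Lambda F^{\top}$ for some unknown $k\times M$ matrix $\Lambda$. The ordinary covariance inequality then gives $\operatorname{Cov}(\widehat\theta)\succeq (I+\Lambda F^{\top})J^{-1}(I+\Lambda F^{\top})^{\top}$. The Loewner minimizer $\Lambda_0=-J^{-1}F(F^{\top}J^{-1}F)^{-1}$ makes the cross terms vanish and returns $C(\theta)$ exactly.
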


\begin{proof}
    \noindent\textbf{Regular probability constraint} ($\sum_{i}^{k}\kpp_{i} = 1 $):\par
     Due to the normalization constraint $\sum_{i}^{k}\kpp_{i} = 1 $, the probabilities $\kpp_{i}$ are dependent. To handle this, we express $\kpp_{k} = 1 - \sum_{i}^{k-1}\kpp_{i}$, and use a reduced parameter vector $\theta = [\kpp_{1}, \dots, \kpp_{k-1}]$. Computing the FIM, we get:
    \begin{align}
         J = n \left(\diag\Big(\Big[\frac{1}{\kpp_{1}}, \dots, \frac{1}{\kpp_{k-1}}\Big]\Big) + \frac{1}{\kpp_{k}}\mathbf{1}_{k-1}\mathbf{1}_{k-1}^{\top}\right)
     \end{align}
     where $\mathbf{1}_{k-1}$ is a vector of all ones of size $k-1$. Using Sherman-Morrison-Woodbury formula~\cite{Petersen2008}, we compute the the CRB matrix, i.e., $J^{-1}$:
     \begin{align}\label{eqn: RCRB}
        C(\kpp) = J^{-1} = \frac{1}{n} \left(\diag(\kpp) - \kpp(\kpp)^{\top} \right)\:.
     \end{align}

      \noindent\textbf{Moment constraint} (\:$\mathbb{E}_{\kpp}[\phi(x)] = \mu$, where $x\in\mathcal{S}$ and $\phi: \mathcal{S}\to \mathbb{R}^{m}$, and $\sum_{i}^{k}\kpp_{i} = 1 $): 
      By Lemma \ref{lemma: CCRB}, the CRB matrix $C$ is given by:
\begin{align}
    C(\kpp) = {C_{R}}^{-1}- {C_{R}}^{-1}F (F^{T} {C_{R}}^{-1} F)^{-1}F^\top {C_{R}}^{-1}\:.
\end{align}
Here, we use $C_{R}$ to denote the CRB in Equation~\eqref{eqn: RCRB}, incorporating only the regular probability constraint. 
 The constraints, $ f(\kpp) = \mathbb{E}_{\kpp}[\phi(x)] - \mu = 0 \in \mathbb{R}^{m}$, can be rewritten as:
\begin{align}
    f(\kpp) &= \sum_{i=1}^{k-1} \phi(x_i)\kpp_{i} + \phi(x_k)\kpp_{k}-\mu = 0 \nonumber \\
    &\overset{(*)}{=} \sum_{i=1}^{k-1} (\phi(x_i) -\phi(x_k))\kpp_{i} + \phi(x_k)-\mu = 0 \nonumber \\
    & = \sum_{i=1}^{k-1} a_{i}\kpp_{i} + b =0  \quad (\text{$m$ constraints})
\end{align}
 where (*) follows from the constraint $\kpp_{k} = 1-\sum_{i=1}^{k-1} \kpp_{i}$, $ a_{i} = \phi(x_i) -\phi(x_k) \in \mathbb{R}^{m}$, and $ b = \phi(x_k) -\mu\in \mathbb{R}^{m}$. 

Let the constraint set $f(\kpp) = [f_{1},\dots,f_{m}]$, then
 the gradient matrix F of size $m\times (k-1)$ is given by:
\begin{align}
      \left[\begin{array}{ccc}
                                  \frac{df_{1}}{d\kpp_1}   & \dots & \frac{df_{1}}{d\kpp_{k-1}} \\
                                    \vdots & \ddots &  \vdots \\ 
                                    \frac{df_{m}}{d\kpp_{1}}  & \dots &\frac{df_{m}}{d\kpp_{k-1}} \\
                               \end{array}\right] = 
                               \left[\begin{array}{ccc}
                                  a_{1,1}   & \dots &  a_{1,k-1}  \\
                                    \vdots & \ddots &  \vdots \\ 
                                    a_{m,1}  & \dots &a_{m,k-1} \\
                               \end{array}\right]
\end{align}
where $a_{i} = [a_{1,i}, \dots, a_{m,i}]$. 
Then,
\begin{align}
    \mathbb{S}\overset{\Delta}{=} F^{T} {C_{R}}^{-1} F &=  
    \frac{1}{n} \left( \sum_{i=1}^{k-1}a_{i}a_{i}^{\top} \kpp_{i} -   \left(\sum_{i=1}^{k-1}a_{i}\kpp_{i}\right) \left(\sum_{j=1}^{k-1}a_{j}\kpp_{j}\right)\right)\nonumber\\
    &= \frac{1}{n} \operatorname{Cov}(a)
\end{align}
where $a$ is a vector that takes the value $a_{i}$ with probability $\kpp_{i}$, and $\operatorname{Cov}(a)$ is its covariance matrix.
Also, $C_{R}^{-1}F = \frac{1}{n}(\diag(\kpp_{i}) - \kpp(\kpp)^{\top})F =\frac{1}{n} (\mathbb{P} - \kpp\overline{a}^{\top})$, 
 where $\mathbb{P}$ is the $(k-1) \times m$ matrix with rows $\kpp_{i} a_{i}$ and $\overline{a} = \sum_{i=1}^{k} a_i \kp_{\textup{t},i}$. 
 Thus, 
 \[C(\kpp) = {C_{R}}^{-1}- ({C_{R}}^{-1}F)\mathbb{S}^{-1}({C_{R}}^{-1}F)^\top\]
  and 
  \[
  \operatorname{Var}[\kpp_{i}] \geq ({C_{R}}^{-1})_{ii} - ({C_{R}}^{-1}F)_{i}^{\top} \mathbb{S}^{-1}(C_{R}^{-1}F)_{i}
  \]
 where $ ({C_{R}}^{-1})_{ii} = \frac{\kpp_{i} (1-\kpp_{i})}{n}$ and $({C_{R}}^{-1}F)_{i}^{\top} = \frac{\kpp_{i}}{n} (a_{i} - \overline{a})$ is the $i^{th}$ row of $({C_{R}}^{-1}F)$.
 Then, 
 \begin{align}
     \Delta_{i} &:= ({C_{R}}^{-1}F)_{i}^{\top} \mathbb{S}^{-1}(C_{R}^{-1}F)_{i} \nonumber\\ 
     &= \left(\frac{\kpp_{i}}{n} (a_{i} - \overline{a})\right)^{\top}\mathbb{S}^{-1}
     \left(\frac{\kpp_{i}}{n} (a_{i} - \overline{a})\right)\nonumber\\
     &= \frac{(\kpp_{i})^{2}}{n^{2}}(a_{i} - \overline{a})^{\top} \mathbb{S}^{-1} (a_{i} - \overline{a})\:.
 \end{align}
Since $\mathbb{S}$ is positive definite (assuming $\operatorname{Cov}(a)$ is full rank), $\Delta_{i}\geq 0$, indicating that the lower bound on the variance of the estimator is reduced with the incorporation  of the moment constraint. 

We remark that, if $\operatorname{Cov}(a)$ is not a full rank matrix, then $\mathbb{S}^{-1}$ can be replaced with the Moore-Penrose pseudoinverse $\mathbb{S}^\dagger$, i.e., $C(\kpp) = {C_{R}}^{-1}- ({C_{R}}^{-1}F)\mathbb{S}^\dagger({C^{R}}^{-1}F)^\top$. The matrix $\Delta := ({C_{R}}^{-1}F) \mathbb{S}^\dagger({C_{R}}^{-1}F)^\top$ is positive semi-definite (PSD), so 
\[C(\kpp) = {C_{R}}^{-1} - \Delta \preceq {C_{R}}^{-1}\:,
\]
so the constrained matrix is less than or equal to the unconstrained one in the PSD ordering, indicating that the reduction in the lower bound on the variance still holds. 

\end{proof}


\section{Algorithms } \label{Appendix:algorithms}
To estimate the optimal policy of the worst-case value function as in Equation~\eqref{eqn: robust_value}, we utilize the robust value iteration algorithm presented in Algorithm \ref{alg:valueiteration}, where $\sigma_{\mathcal{P}^a_s}(V)\triangleq \min_{\mathsf P\in\mathcal{P}^a_s}\mathsf P^\top V$ is the support function of $V$ on $\mathcal{P}^a_s$. Moreover, Algorithm \ref{alg:valueiteration} can also be used to solve for the optimal policy of the value function for the non-robust, if we set $\mathcal{P}^a_s = \{\kp\}$, in which case $\sigma_{\mathcal{P}^a_s}(V)= {\kp}^\top V$

\begin{algorithm}[H]
\caption{Robust Policy Estimation}
\label{alg:valueiteration}
\textbf{Input}: $\gamma, V_0(s)=0,\forall s , T$
\begin{algorithmic}[1] 
\FOR{$t=0,1,...,T-1$}
    \FOR{all $s \in \mathcal{S}$}
        \STATE $V_{t+1}(s) \leftarrow \underset{a \in \mathcal{A}}{\max} \left\{ r(s, a) + \gamma \sigma_{\mathcal{P}^a_s}(V_t) \right\}$
    \ENDFOR
\ENDFOR
\FOR{$s \in \mathcal{S}$}
    \STATE $\pi_T(s) \leftarrow \arg\max_{a \in \mathcal{A}} \left\{  r(s, a) + \gamma \sigma_{\mathcal{P}^a_s}(V_T) \right\}$
\ENDFOR
\STATE \textbf{return} $V_T, \pi_T$
\end{algorithmic}
\end{algorithm}

Based on a similar approach as in policy estimation, any policy $\pi$ can be evaluated on a general uncertainty set, as shown in Algorithm \ref{alg:evaluatoin}, where in the non-robust we set \(R=0\).  

\begin{algorithm}[H]
\caption{Robust Policy Evaluation}
\label{alg:evaluatoin}
\textbf{Input}: $\pi, \gamma, V_0(s)=0,\forall s ,T$ 
\begin{algorithmic}[1]
\FOR{$t=0,1,...,T-1$}
\FOR{all $s\in\mathcal{S}$}
\STATE {$V_{t+1}(s)\leftarrow \mathbb{E}_{\pi}[r(s,A)+\gamma \sigma_{\mathcal{P}^A_s}(V_t)]$}
\ENDFOR
\ENDFOR
\STATE \textbf{return} $V_{T}$
\end{algorithmic}

\end{algorithm}

\section{Description of Environments}\label{Appendix: Env description}

\subsection{Toy text environments}

\noindent\textbf{Frozen Lake environment}:
This environment is depicted in Figure \ref{fig: frozen_lake}. We consider a $4\times 4$ frozen lake grid, consisting of a start state, an end state, two hole states, and one prize state, where each square represents a state. 
The main goal is to traverse the frozen lake from the start state to the end state without falling into holes. The agent can take four actions: right, left, up and down. The agent loses `-1' if it steps on a hole, receives `+5' for stepping on the prize state, and `-0.04' for any other frozen state. Transitions to the next state occur with a certain probability upon taking an action.

\begin{table}[]
    \centering  
    \caption{A Description of the source and target domain transition kernels for toy-text environments. }
\scalebox{0.70}{  \begin{tabular}{||c|c||} \hline
       Transition kernel & Description \\ \hline 
       $\mathsf P^{s,a}_{\text{s}}(s_{intended})$  &  The agent moves to the state in the intended direction with probability $(1-\alpha) (r_s + 2 \frac{(1-r_s)}{\mathcal{|S|}})$\\ \hline
       
        $\mathsf P^{s,a}_{\text{s}}(s_{opposite})$ & The agent may move to the state opposite the intended direction with probability $\alpha (r_s + 2 \frac{(1-r_s)}{\mathcal{|S|}})$\\ \hline 
        
        $\mathsf P^{s,a}_{\text{s}}(s_{other})$ & The agent may move to all other states   with probability $ \frac{(1-r_s)}{\mathcal{|S|}}$ \\ \hline 

        $\mathsf P^{s,a}_{\text{t}}(s_{intended})$  &  The agent moves to the state in the intended direction with probability $\alpha (r_t + 2 \frac{(1-r_t)}{\mathcal{|S|}})$\\ \hline
        
        $\mathsf P^{s,a}_{\text{t}}(s_{opposite})$ & The agent may move to the state opposite the intended direction with probability $(1 - \alpha) (r_t + 2 \frac{(1-r_t)}{\mathcal{|S|}})$\\ \hline 
        
        $\mathsf P^{s,a}_{\text{t}}(s_{other})$ & The agent may move to all other states   with probability $ \frac{(1-r_t)}{\mathcal{|S|}}$ \\ \hline

    \end{tabular}}
  
    \label{tab:toy_text kernels}
\end{table}

\noindent\textbf{Cliff Walking environment}:
The goal of the agent in the cliff walking environment is to reach the end (prize) state, starting from the first state while avoiding the cliff. The environment consists of 38 states: a start state, an end state, and 36 other states. The agent incurs a penalty of `-100' if it steps on the cliff and receives `+50' upon reaching the end state. For all other states, the agent receives a reward of `-1'. A schematic of this environment is depicted in Figure \ref{fig: Cliff_walking}.  

\noindent\textbf{Taxi environment}:
In the taxi environment, the goal for the agent (taxi) is to navigate through the grid using movement actions (up, down, left, right), locate the passenger, pick up the passenger, and later drop off the passenger at the destination using pick-up and drop-off actions, respectively. In our experiment, we consider a $6\times 6$ grid with fixed pick-up and drop-off locations, as shown in Figure \ref{fig: Taxi}. The passenger can be located either at the pick-up or drop-off locations, or in the taxi, resulting in a total of 108 states and 6 possible actions. The agent is penalized `-1' for each movement action taken. Wrongful pick-up and drop-off actions incur a penalty of `-10'. The agent receives a reward of `+20' for successfully picking up the passenger from the designated location and `+50' for successfully dropping off the passenger, signifying task completion.

\begin{figure}
  \centering

  \begin{subfigure}{0.4\textwidth}
    \includegraphics[width=\linewidth]{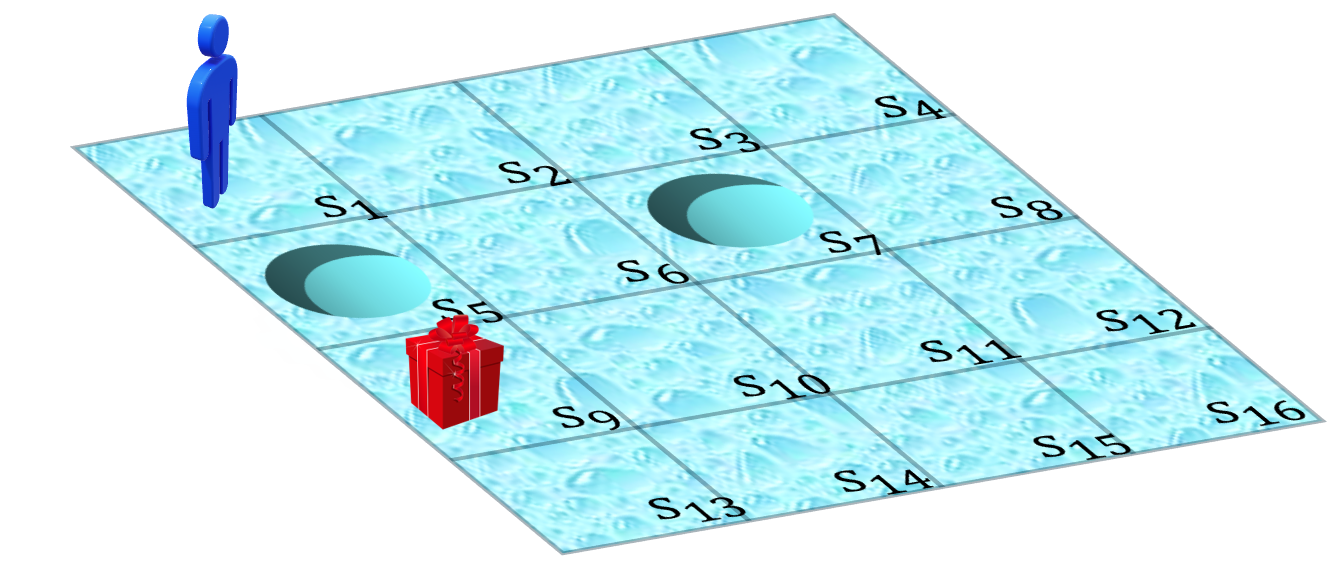}
    \caption{}
    \label{fig: frozen_lake}
  \end{subfigure}
  \hfill 
  \begin{subfigure}{0.4\textwidth}
    \includegraphics[width=\linewidth]{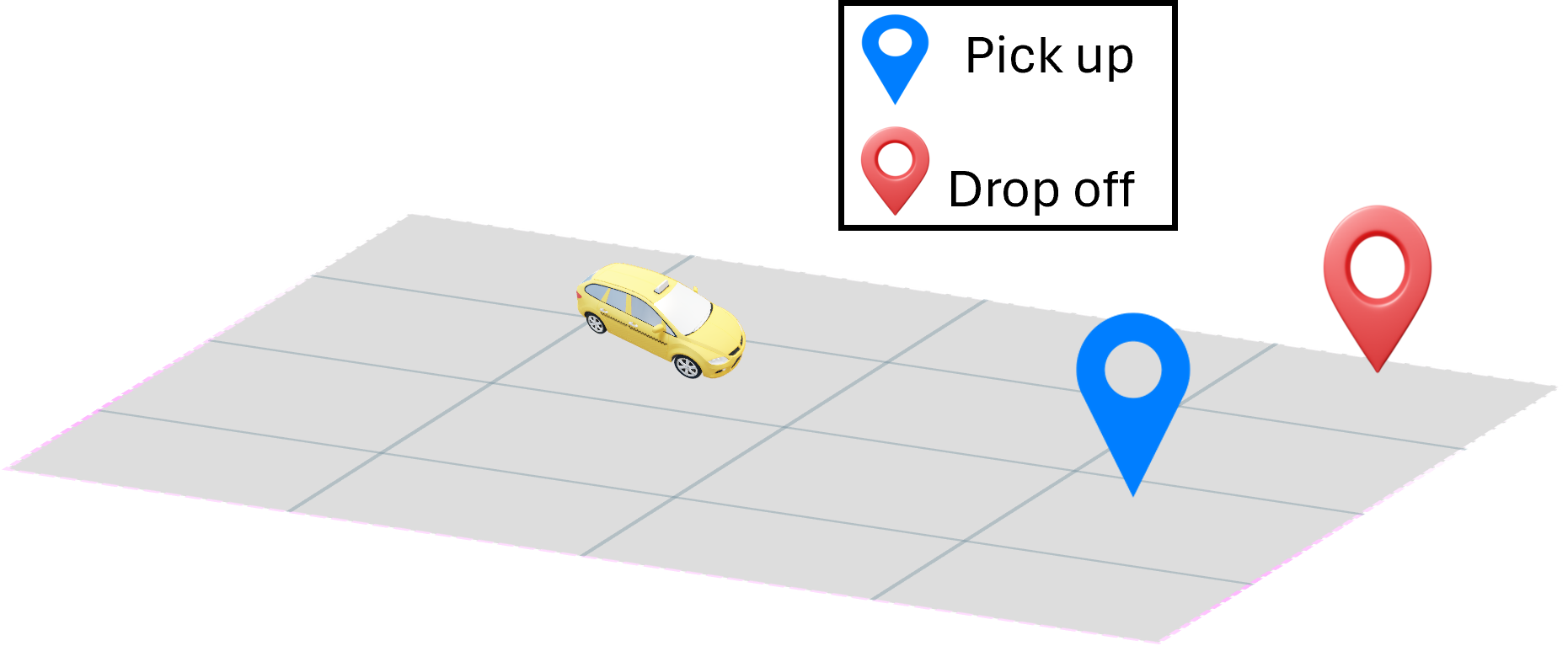}
    \caption{}
    \label{fig: Taxi}
  \end{subfigure}
    \hfill
  \begin{subfigure}{0.6\textwidth}
    \includegraphics[width=\linewidth]{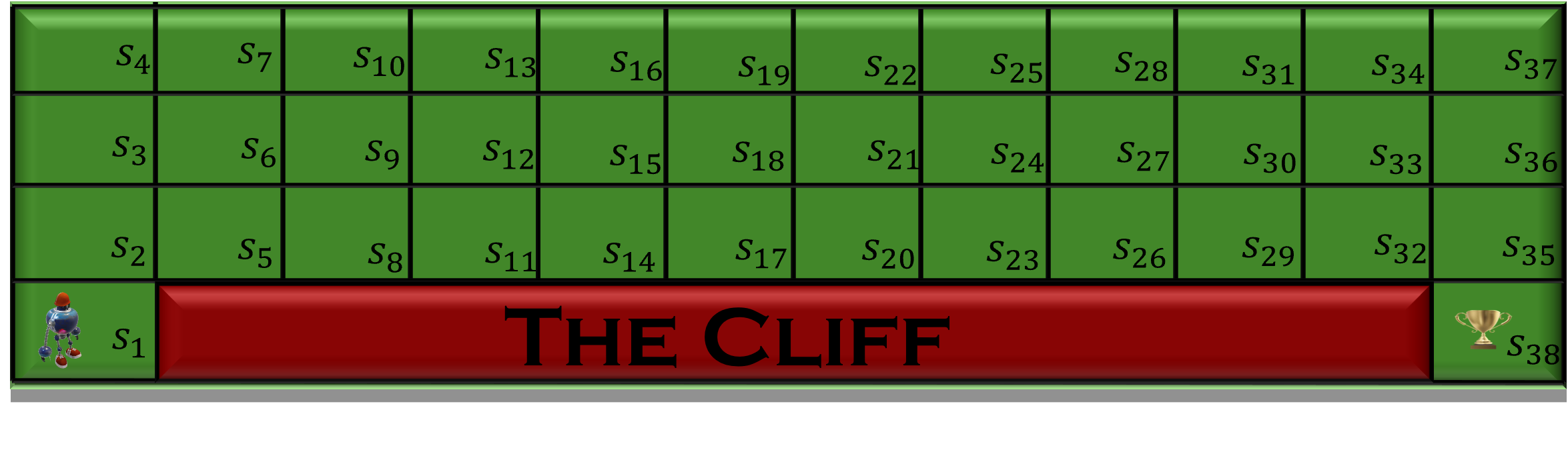}
    \caption{ }
    \label{fig: Cliff_walking}
  \end{subfigure}
 \caption{\small{A depiction OpenAI Gym toy text environments:\textbf{(a)} Frozen Lake environment  \textbf{(b)} Taxi environment\textbf{(c)} Cliff Walking environment }  }
  \label{fig: Toy-text env}
\end{figure}

A detailed description of the transition kernels for both the source and target domains is provided in Table \ref{tab:toy_text kernels}. We define $s_{intended}$ as the next state the agent is expected to move to when it takes action $a$ starting from state $s$. Conversely, $s_{opposite}$ refers to the state opposite to $s_{intended}$. For example, consider the frozen lake environment in the source domain. If the agent is at state $s_{2}$ and takes action $a$ = `right,' it will move to $s_{intended} = s_{3}$ with probability $\mathsf{P}^{s,a}_{\text{s}}(s_{intended})$, to $s_{opposite} = s_{1}$ with probability $\mathsf{P}^{s,a}_{\text{s}}(s_{opposite})$, and to any other state $s_{other}$ with probability $\mathsf{P}^{s,a}_{\text{s}}(s_{other})$. For the frozen lake environment, we use $r_s = 0.3$, $r_t = 0.8$, and $\alpha = 0.7$. For cliff walking, we set $r_s = 0.8$, $r_t = 0.2$, and $\alpha = 0.3$. For the taxi environment, we choose $r_s = 0.4$, $r_t = 0.8$, and $\alpha = 0.2$. To ensure validity, we normalize each $\mathsf{P}^{s,a}_{\text{s}}$ and $\mathsf{P}^{s,a}_{\text{t}}$ so that they sum up to 1.

\subsection{Classic control problems}
We consider three classical control problems from OpenAIGym: Cart pole, Acrobot, and Pendulum. We have made some modifications to the environments to suit our problem setup.\par

\noindent\textbf{Cart Pole environment}:
The goal is to balance a pole attached upright to the cart by applying forces to the left or right of the cart \cite{barto1983neuronlike}. The action space consists of two actions: right and left. The observation space is continuous and consists of a tuple of four quantities: Cart Position, Cart Velocity, Pole Angle, Pole Angular Velocity. Their corresponding ranges are $([-4.8,4.8], (-\infty,\infty), [-24^{o},24^{o}], (-\infty,\infty) ) $. We discretize the observation space, with each combination of values representing a distinct state. The new ranges for each quantity are $([-4.8,4.8], [-0.5,0.5], [-24^{o},24^{o}], [-5,5])$, and the number of discrete values is  $(4, 4, 5, 3)$, respectively. A reward of `25' is received if the pole angle is 0 and the cart velocity is between -0.17 and +0.17. An additional reward of `25' is granted if the cart position is between -1.6 and 1.6. 
When the pole angle is between $(-12^{o},12^{o})$ and the cart velocity is between -0.17 and +0.17, a reward of `10' is received. For all other cases, no rewards are received. The Cart Pole environment is depicted in Figure \ref{fig: Cart pole}. \par

\noindent\textbf{Cart Pole environment (LDS scenario)}: For each $(s,a)$, the source and target transition kernels are softmaxes over next–state features $\phi(s')\in\mathbb{R}^4$ (constructed from \textit{Cart Position, Cart Velocity, Pole Angle, Pole Angular Velocity}), the source and target transition kernels for each \((s,a)\in \mathcal{S}\times \mathcal{A}\) are modeled as follows: 
\[
\mathsf{P}_s(s'\mid s,a)=\frac{\exp\{\theta^{(s,a)}_s{}^\top\phi(s')\}}{\sum_{\bar s}\exp\{\theta^{(s,a)}_s{}^\top\phi(\bar s)\}},
\qquad
\mathsf{P}_t(s'\mid s,a)=\frac{\exp\{\theta^{(s,a)}_t{}^\top\phi(s')\}}{\sum_{\bar s}\exp\{\theta^{(s,a)}_t{}^\top\phi(\bar s)\}}.
\]
With $\theta^{(s,a)}_s=[\gamma_{sa};\,\alpha^{(s)}_{sa}]$ and $\theta^{(s,a)}_t=[\gamma_{sa};\,\alpha^{(t)}_{sa}]$ with the first $\mathsf{d}-\mathsf{d}_0$ coordinates shared ($\gamma_{sa}\in\mathbb{R}^{\mathsf{d}-\mathsf{d}_0}$) and the last $\mathsf{d}_0=2$ coordinates private ($\alpha^{(s)}_{sa},\alpha^{(t)}_{sa}\in\mathbb{R}^{2}$); hence the source–target shift in the logits (and thus in the kernels) lies in a $2$-dimensional subspace. 
\begin{table}
    \centering
        \caption{\small{A Description of the source and target domain transition kernels for classic control environments.} }
     
 \scalebox{0.85}{\begin{tabular}{||c|c||} \hline
    Transition kernel & Description \\ \hline
         $\mathsf P^{s,a}_{\text{s}}(s_{random_{1}})$  &  The agent moves to a random  state $s_{random_{1}}$ with probability $\alpha r_s$\\ \hline 
          $\mathsf P^{s,a}_{\text{s}}(s_{random_{2}})$& The agent moves to a random  state $s_{random_{1}}$ with probability $(1-\alpha) r_s$ \\ \hline
           $\mathsf P^{s,a}_{\text{s}}(s_{other})$& The agent may move to all other states with probability with probability $\frac{r_s}{|\mathcal{S}|} $. \\ \hline
           
          $\mathsf P^{s,a}_{\text{t}}(s_{random_{1}})$  &  The agent moves to a random  state $s_{random_{1}}$ with probability $(1-\alpha) r_t$\\ \hline 
          $\mathsf P^{s,a}_{\text{t}}(s_{random_{2}})$& The agent moves to a random  state $s_{random_{1}}$ with probability $\alpha r_t$ \\ \hline
          
           $\mathsf P^{s,a}_{\text{t}}(s_{other})$& The agent may move to all other states with probability with probability $\frac{r_t}{|\mathcal{S}|} $. \\ \hline
    \end{tabular}}   
    \label{tab:classic control kernels}
\end{table}

\noindent\textbf{Acrobot environment}:
The Acrobot problem, introduced in \cite{NIPS1995_8f1d4362}, features a chain with two links, one fixed and one free end, as depicted in Figure \ref{fig: Acrobot}.
The objective is to swing the free end of the chain above a certain height, starting from a downward hanging position. The joint between the two links is actuated, allowing for the application of torque to achieve the desired swing. The available actions involve applying a torque to the actuated joint, with options of $\{1, 0, -1\}$ Newton-meter (Nm). Observations in this environment consist of tuples of 6 quantities: $ (\cos\theta_{1}, \sin\theta_{1}, \cos\theta_{2}, \sin\theta_{2}, V_{\theta_{1}}, V_{\theta_{2}})$. Here, $\theta_{1}$ and $\theta_{2}$ represent the angles of the first link w.r.t. the normal direction and the angle of the second link relative to the first link respectively. The terms $V_{\theta_{1}}$ and $V_{\theta_{2}}$ denote the angular velocities of $\theta_{1}$ $\theta_{2}$, respectively.

For  the observation space, we use a tuple of four quantities $ (\cos\theta_{1}, \cos\theta_{2},  V_{\theta_{1}}, V_{\theta_{2}})$, with a range of \[([-1,1], [-1,1], [-4\pi,4\pi],[-9\pi, 9\pi])\], and the number of discrete values $(6, 6, 2, 2)$, respectively. For the reward, we use the quantity $\phi = -\cos(\theta_{1})-\cos(\theta_{1} + \theta_{2})$ to determine different reward levels. Specifically, for $\phi$ falling within predefined ranges ($\phi \geq 1$,  $0.5\leq \phi \leq 1$,  $0.25\leq \phi \leq 0.5$, $0\leq \phi \leq 0.25$), rewards of $20$, $15$, $10$, $5$ are granted, respectively.

\noindent\textbf{Pendulum environment}:
The Pendulum problem is another classic control environment that focuses on the dynamic control of an inverted pendulum. Illustrated in Figure \ref{fig: Pend}, the system comprises a pendulum affixed at one end to a fixed point. The objective is to apply torque and stabilize the pendulum's center of gravity directly above the pivot, swinging it into an upright position. The action space is continuous, within the range of $[-2, 2]$ Nm. Observations consist of arrays of 3 quantities representing the $x$ and $y$ coordinates of the free end of the pendulum, and the angular velocity, with ranges of $[-1,1], [-1,1]$, and $[-8,8]$, respectively. In our experiment, we discretize the action space into five torque levels $ \{-2, -1, 0, 1, 2\}$ Nm. The observation space is segmented into 240 unique states by discretizing the pendulum angle $[-\pi,\pi]$ rad into $12$ segments and the angular velocity ranging $[-10, 10]$ rad/s into $20$ segments. Rewards are structured to prioritize stabilizing the pendulum near the upright position. Maintaining the pendulum within $[-1,1] $ of vertical and sustaining a low angular velocity of $[-1,1]$ earns 100 points. A reward of 50 points is granted for keeping the angle within $[-0.5,0.5] $, and $10$ points are awarded for all other states.\par

Table \ref{tab:classic control kernels} provides a description of the transition kernels for both the source and target domains in the classic control problems. For each $(s, a)$ pair, we generate two random states, $s_{random_{1}}$ and $s_{random_{2}}$. Taking the source domain as an example, the agent transitions to $s_{random_{1}}$ and $s_{random_{2}}$ with probabilities $\mathsf{P}^{s,a}_{\text{s}}(s_{random_{1}})$ and $\mathsf{P}^{s,a}_{\text{s}}(s_{random_{2}})$, respectively. For all other states, $s_{other}$, the agent transitions with probability $\mathsf{P}^{s,a}_{\text{s}}(s_{other})$. 
We set $r_s = 0.6$, $r_t = 0.7$, and $\alpha = 0.2$ for all environments. To ensure validity, each $\mathsf{P}^{s,a}_{\text{s}}$ and $\mathsf{P}^{s,a}_{\text{t}}$ is normalized to sum to 1.\par

\begin{figure}
  \centering
  \begin{subfigure}{0.25\textwidth}
    \includegraphics[width=\linewidth]{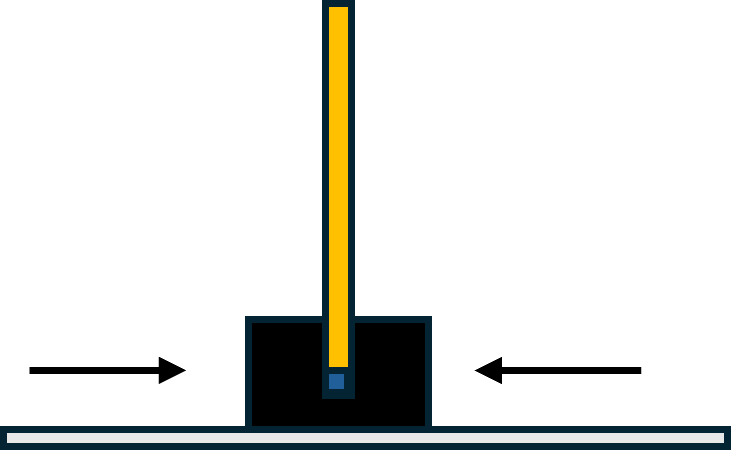}
    \caption{}
    \label{fig: Cart pole}
  \end{subfigure}
  \hspace{1cm}
  \begin{subfigure}{0.15\textwidth}
    \includegraphics[width=\linewidth]{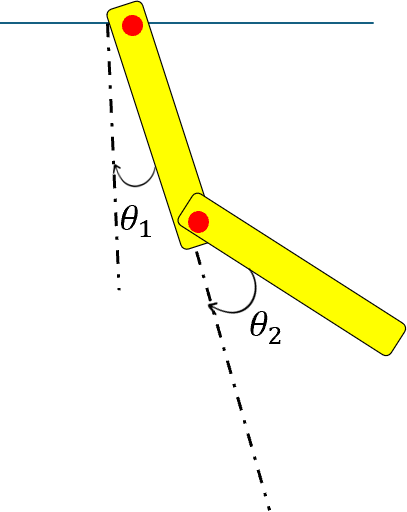}
    \caption{}
    \label{fig: Acrobot}
  \end{subfigure}
    \hspace{1cm}
  \begin{subfigure}{0.15\textwidth}
    \includegraphics[width=\linewidth]{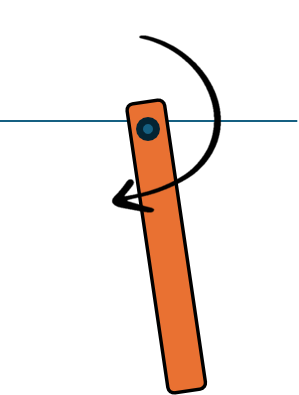}
    \caption{ }
    \label{fig: Pend}
  \end{subfigure}
 \caption{\small{A depiction OpenAI Gym classic control problems: \textbf{(a)} Cart Pole. \textbf{(b)} Acrobot. \textbf{(c)} Pendulum.   }  }
  \label{fig: classic problems}
  
\end{figure}
\section{Additional Experiments }\label{Appendix: Additional exp}
 \subsection{Convergence of the estimators}\label{Appendix: convergence_exp}

  We experimentally verify the convergence of our estimators to the target domain transition kernels, as theoretically established in Proposition \ref{thm: CMLE in TV}. Figure \ref{fig: overall_distance} shows the average distance over all $(s,a)$ pairs between our IBE estimates $\kph$  and the true target transition  kernels $\kpt$, i.e., $\frac{1}{|\mathcal{S}||\mathcal{A}|}\sum_{(s,a)}\|\widehat{\mathsf{P}}^{s,a} - \mathsf{P}^{s,a}_\text{t}\|_{TV}$, averaged over $20$ runs, versus the sample size for different toy text environments, including Frozen Lake, Cliff Walking, and Taxi. We also report the average distance between the source and target domain transition kernels. The results indicate that the IBE converges asymptotically to the target domain kernels as $N\to \infty$.  \par 

The results demonstrate that our estimators can converge to the true target transition kernels under different types of side of information.

\begin{figure}
  \centering
  \begin{subfigure}{0.32\textwidth}
    \includegraphics[width=\linewidth]{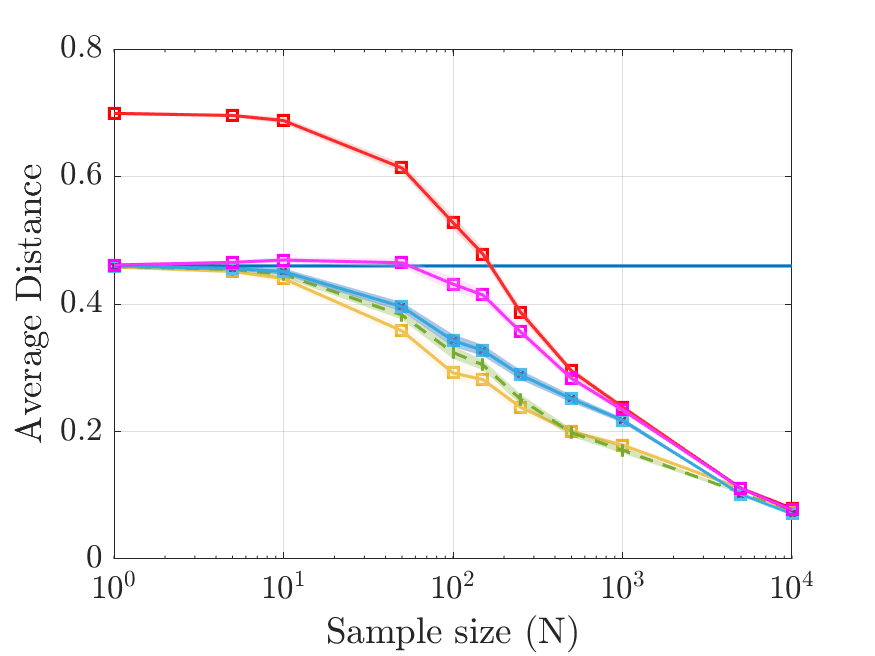}
    \caption{Frozen Lake.}
    \label{fig: frozen_lake_distance}
  \end{subfigure}
    \hfill
  \begin{subfigure}{0.32\textwidth}
    \includegraphics[width=\linewidth]{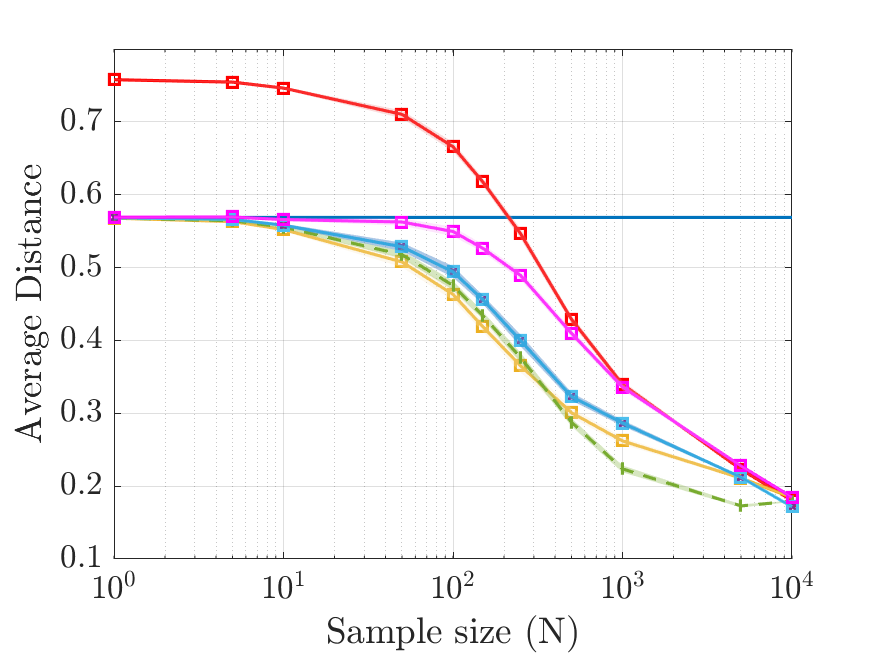}
    \caption{ Cliff Walking. }
    \label{fig: Walking_cliff_ditance}
  \end{subfigure}
    \hfill
  \begin{subfigure}{0.32\textwidth}
    \includegraphics[width=\linewidth]{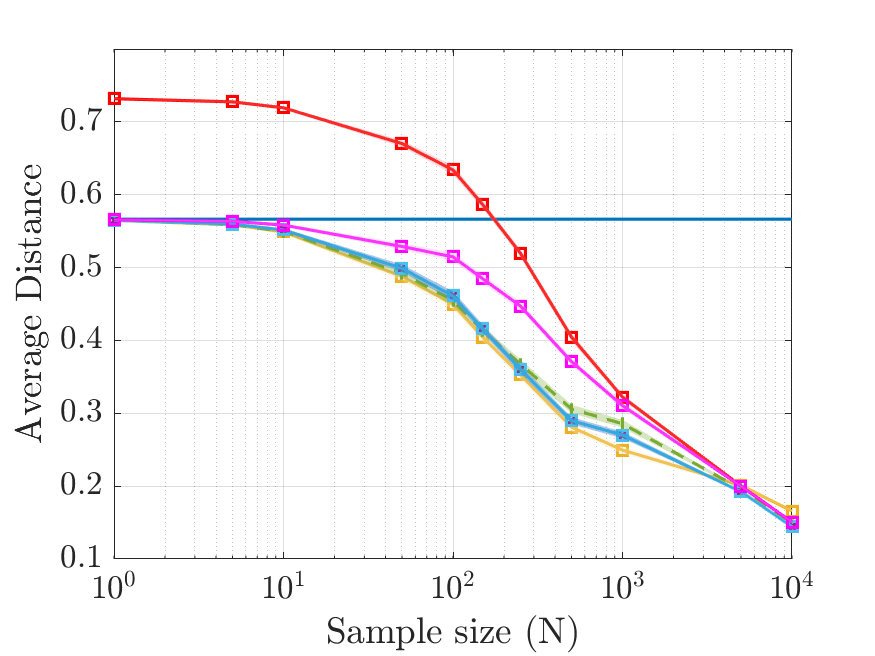}
    \caption{Taxi.}
    \label{fig: Taxi_ditance}
  \end{subfigure}
  \includegraphics[width=.85\textwidth]{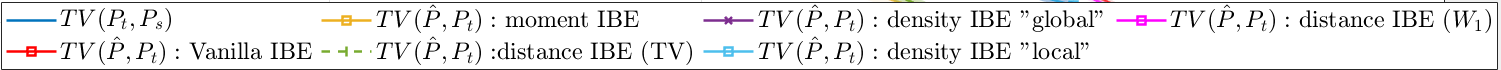}
 \caption{\small{Average total variation distance between the estimated and the target domain transition kernels over $20$ runs as a function of sample size for the toy text environments.}}
  \label{fig: overall_distance}
\end{figure}

\subsection{ Non-robust setting }\label{Appendix: non_robust_exp}
In this section, we provide the performance results for the omitted environments for the non-robust setting. Under this setting, we estimation the IBE policies using Algorithm \ref{alg:valueiteration}, where \(R=0\). Then, the estimated policies, including the IBE policies and the state-of-the-art (SOTA) ones, are evaluated using Algorithm~\ref{alg:evaluatoin}, using the same radius. 
Figures \ref{fig: cliff_walking_non_robust}, \ref{fig: Taxi_non_robust}, \ref{fig: Acrobot_non_robust}, \ref{fig: Frozen_lake_non_robust}, and \ref{fig: Pend_non_robust} demonstrate the target task performance, the average value function, i.e., $\frac{1}{|\mathcal{S}|}\sum_{s\in \mathcal{S}} V^{\pi}(s)$, as a function of the sample size $N$ for the non-robust scenario. Each figure shows the IBE with different side information (left panel) and a comparison with SOTA methods (right panel).  \par

\begin{figure}
  \centering
  \begin{subfigure}{0.35\textwidth}
    \includegraphics[width=\linewidth]{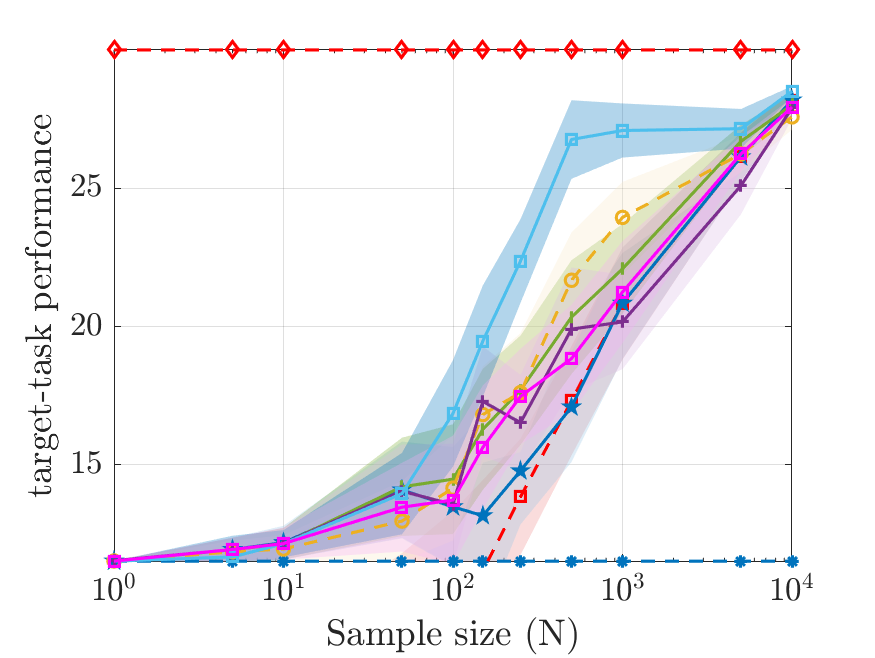}
    \caption{IBE comparison.}
    \label{fig: cliff_walking_IBE}
  \end{subfigure}
  \hspace{-5mm}
  \begin{subfigure}{0.35\textwidth}
    \includegraphics[width=\linewidth]{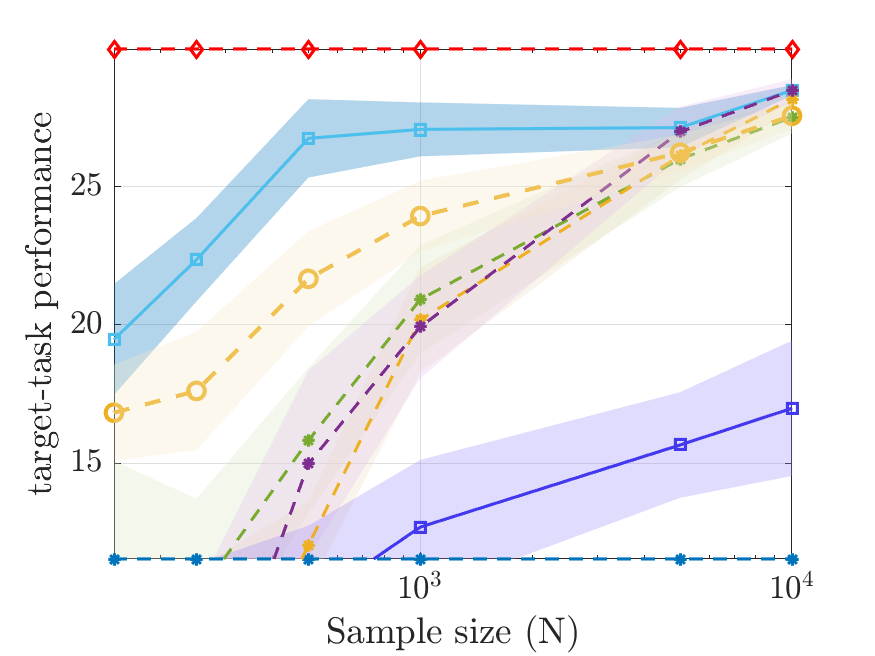}
    \caption{SOTA.}
    \label{fig: cliff_walking_STOA}
  \end{subfigure}
 
  \includegraphics[width=.85\textwidth]{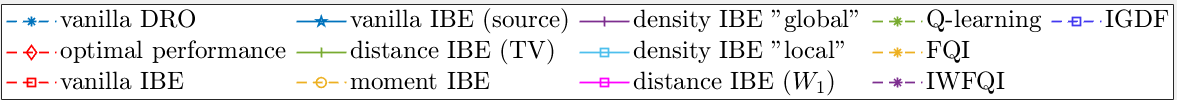}
 \caption{\small{Target domain performance for the non-robust setting averaged over $20$ runs as a function of sample size for Cliff Walking environment, and the corresponding $95\%$ confidence intervals.}}
  \label{fig: cliff_walking_non_robust}
\end{figure}

\begin{figure}
  \centering
  \begin{subfigure}{0.35\textwidth}
    \includegraphics[width=\linewidth]{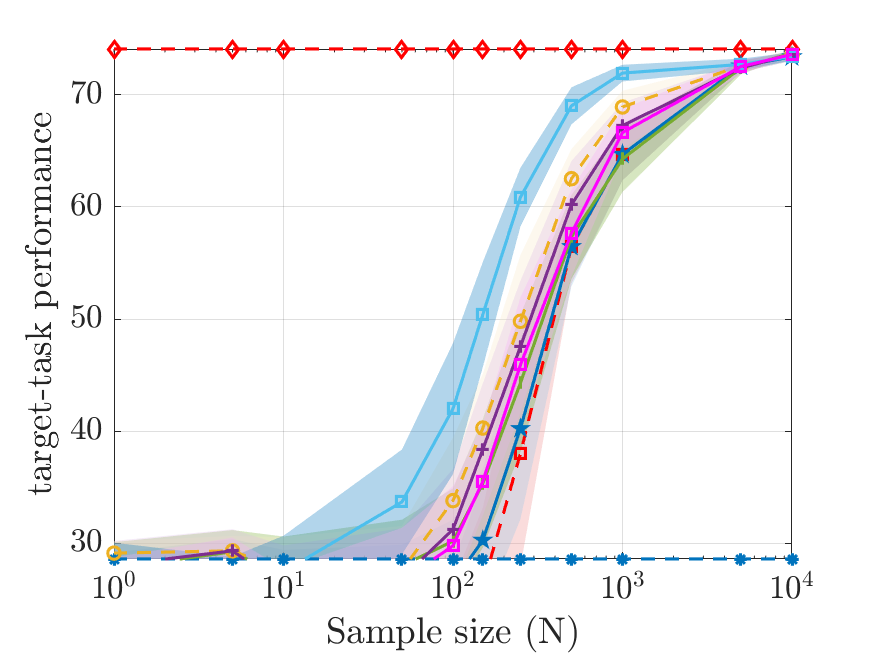}
    \caption{IBE comparison.}
    \label{fig: Taxi_IBE}
  \end{subfigure}
  \hspace{-5mm}
  \begin{subfigure}{0.35\textwidth}
    \includegraphics[width=\linewidth]{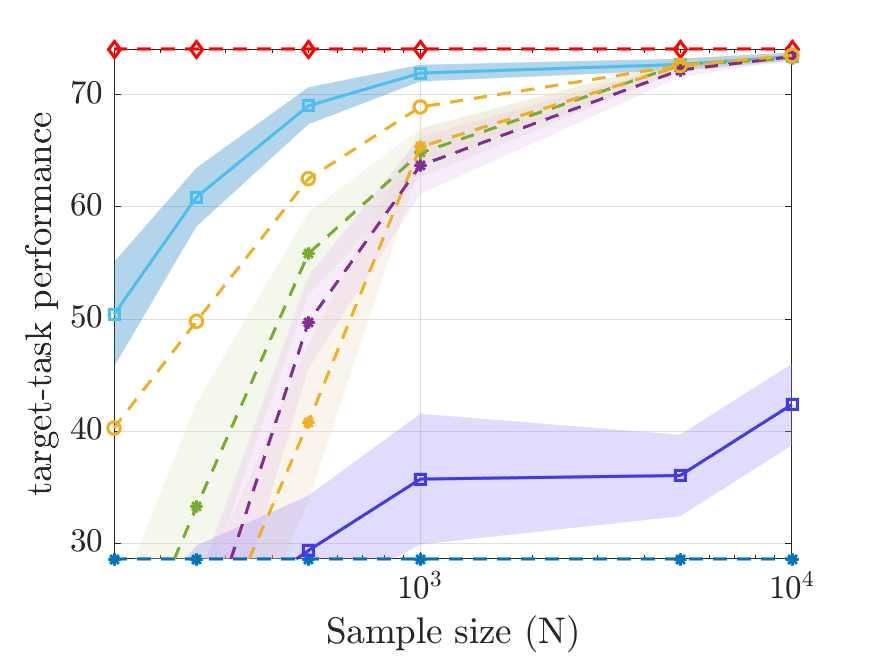}
    \caption{SOTA.}
    \label{fig: Taxi_STOA}
  \end{subfigure}
 
  \includegraphics[width=.85\textwidth]{Figures/NR_legends.PNG}
 \caption{\small{Target domain performance for the non-robust setting averaged over $20$ runs as a function of sample size  for Taxi environment, and the corresponding $95\%$ confidence intervals.}}
  \label{fig: Taxi_non_robust}
\end{figure}

\begin{figure}
  \centering
  \begin{subfigure}{0.35\textwidth}
    \includegraphics[width=\linewidth]{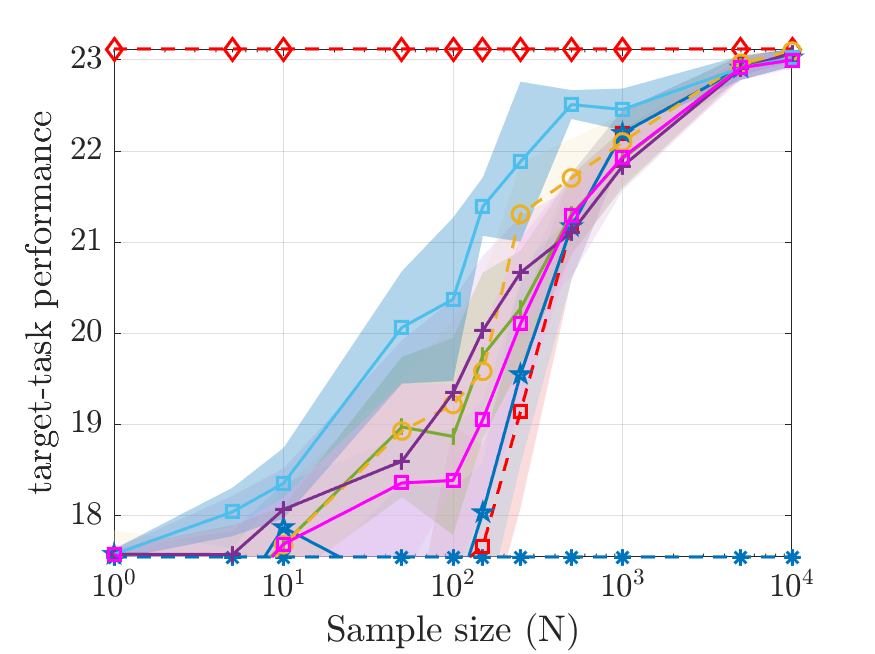}
    \caption{IBE comparison.}
    \label{fig: Cart_Pole_IBE}
  \end{subfigure}
  \hspace{-5mm}
  \begin{subfigure}{0.35\textwidth}
    \includegraphics[width=\linewidth]{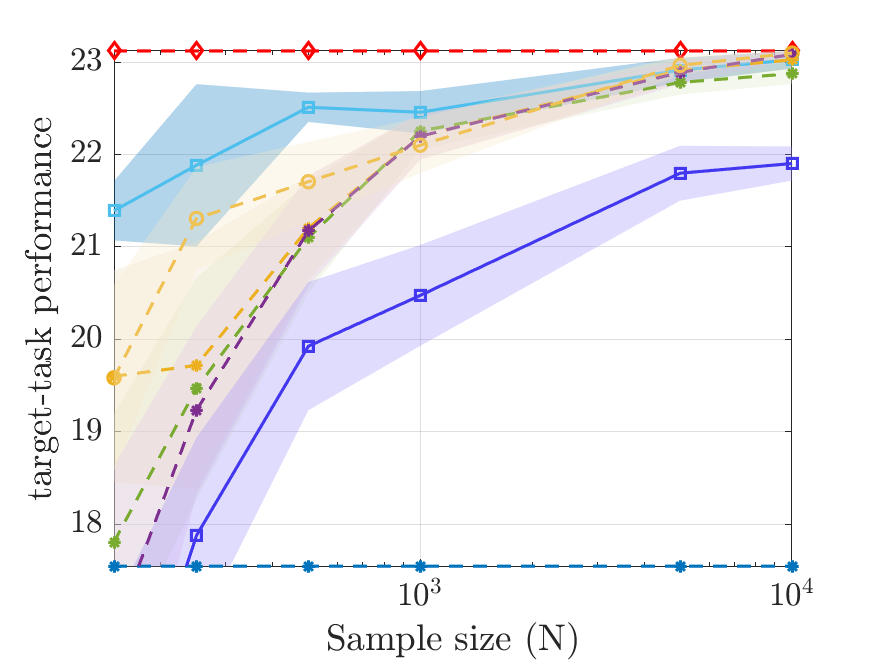}
    \caption{SOTA.}
    \label{fig: Cart_Pole_STOA}
  \end{subfigure}

  \includegraphics[width=.85\textwidth]{Figures/NR_legends.PNG}
 \caption{\small{Target domain performance for the non-robust setting averaged over $20$ runs as a function of sample size for Frozen Lake environment, and the corresponding $95\%$ confidence intervals.}}
  \label{fig: Frozen_lake_non_robust}
\end{figure}

\begin{figure}
  \centering
  \begin{subfigure}{0.35\textwidth}
    \includegraphics[width=\linewidth]{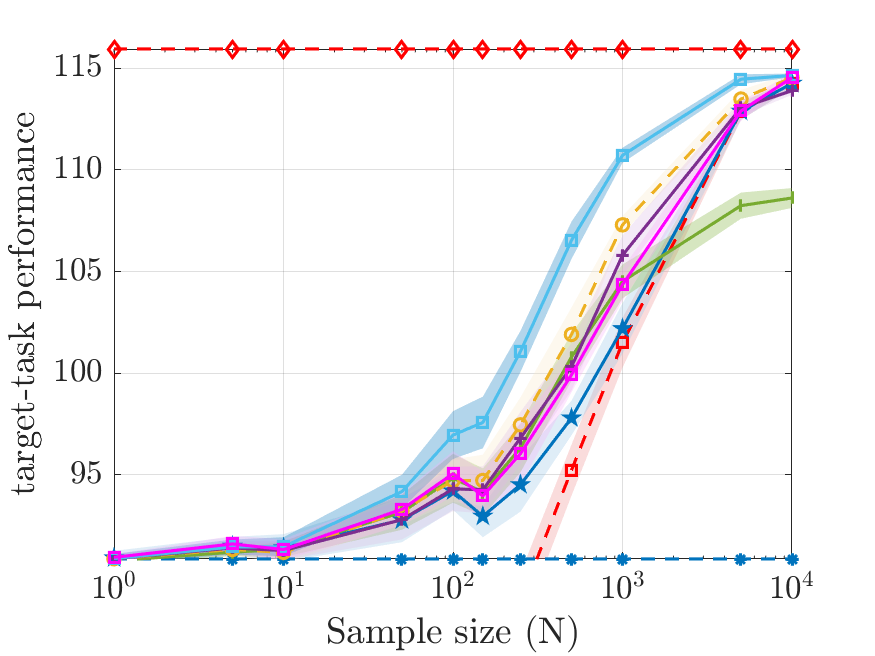}
    \caption{IBE comparison.}
    \label{fig: Acrobot_IBE}
  \end{subfigure}
  \hspace{-5mm}
  \begin{subfigure}{0.35\textwidth}
    \includegraphics[width=\linewidth]{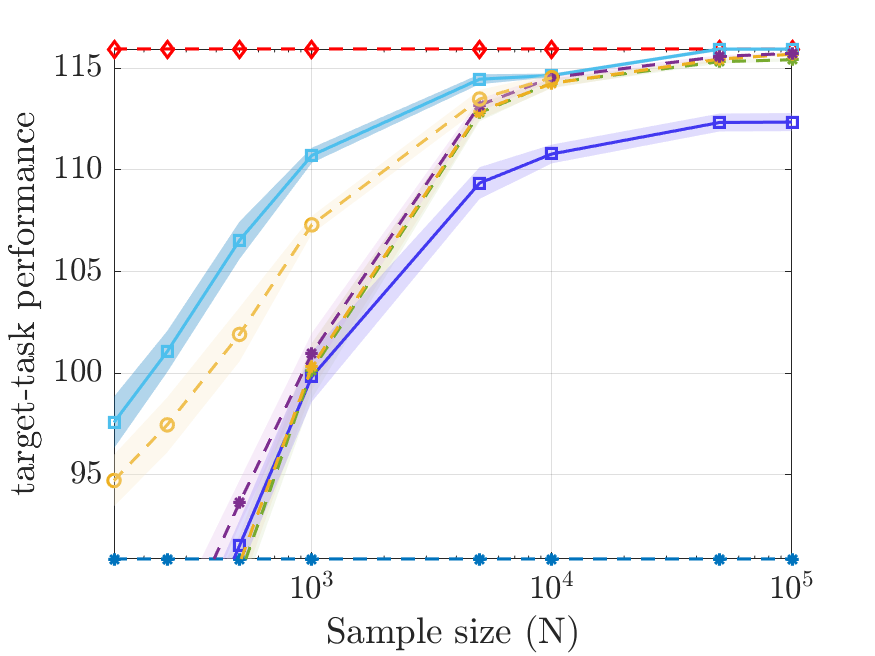}
    \caption{SOTA.}
    \label{fig: Acrobot_STOA}
  \end{subfigure}

  \includegraphics[width=.85\textwidth]{Figures/NR_legends.PNG}
 \caption{\small{Target domain performance for the non-robust setting averaged over $20$ runs as a function of sample size for Acrobot environment, and the corresponding $95\%$ confidence intervals.}}
  \label{fig: Acrobot_non_robust}
\end{figure}

\begin{figure}
  \centering
  \begin{subfigure}{0.35\textwidth}
    \includegraphics[width=\linewidth]{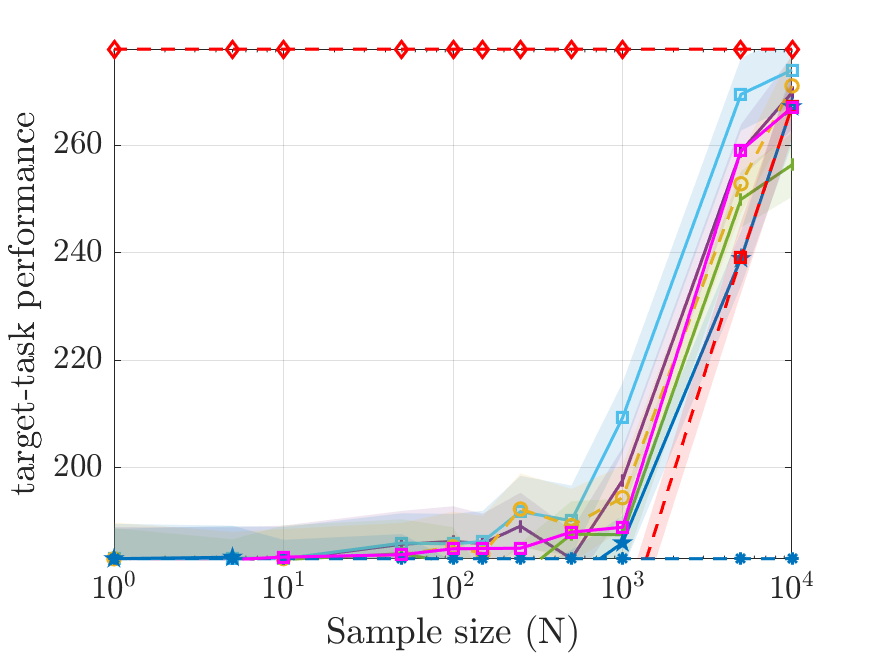}
    \caption{IBE comparison.}
    \label{fig: Pend_IBE}
  \end{subfigure}
  \hspace{-5mm}
  \begin{subfigure}{0.35\textwidth}
    \includegraphics[width=\linewidth]{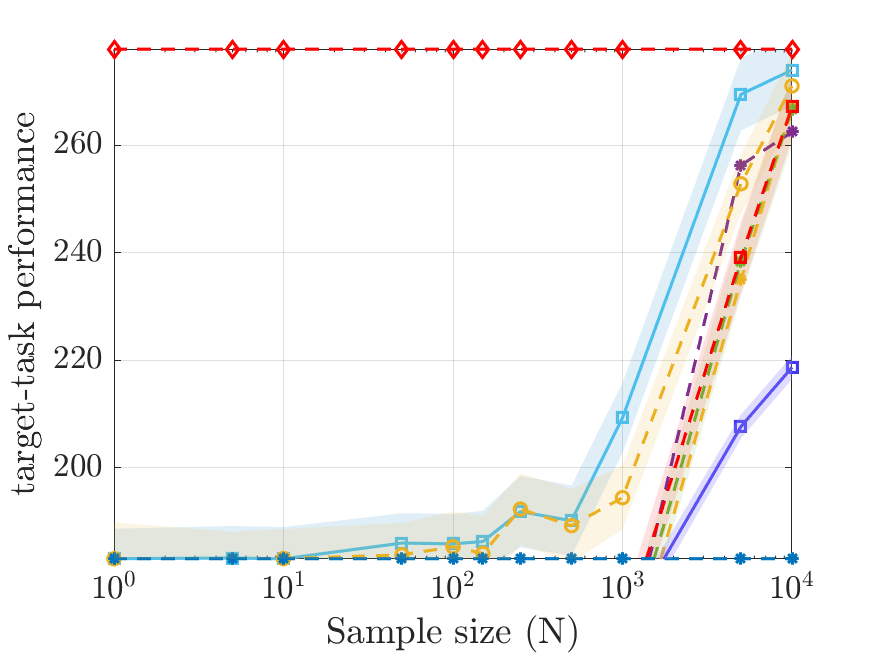}
    \caption{SOTA.}
    \label{fig: Pend_STOA}
  \end{subfigure}

  \includegraphics[width=.85\textwidth]{Figures/NR_legends.PNG}
 \caption{\small{Target domain performance for the non-robust setting averaged over $20$ runs as a function of sample size for Pendulum environment, and the corresponding $95\%$ confidence intervals.}}
  \label{fig: Pend_non_robust}
\end{figure}

\subsection{ Robust setting }\label{Appendix: robust_exp}

In this part, we present the IBE performance on the target domain for the robust scenario. Algorithm~\ref{alg:valueiteration} is used to estimate the IBE polices, where we choose \(R= 0.1\). All polices, including the IBE polices and the state-of-art (SOTA)  polices are evaluated on the target domain under uncertainty using Algorithm~\ref{alg:evaluatoin}, using the radius \(R=0.1\).
Figures \ref{fig: cliff_walking_robust}, \ref{fig: Taxi_robust},  \ref{fig: Cart_Pole_robust}, \ref{fig: Acrobot_robust}, and \ref{fig: Pend_robust} show the target task performance, namely, the average value function, i.e., $\frac{1}{|\mathcal{S}|}\sum_{s\in \mathcal{S}} V^{\pi}(s)$, as a function of the sample size $N$ for the robust scenario. Each figure shows the IBE with different side information (left panel) and a comparison with the SOTA methods (right panel).\par
\begin{figure}
  \centering
  \begin{subfigure}{0.35\textwidth}
    \includegraphics[width=\linewidth]{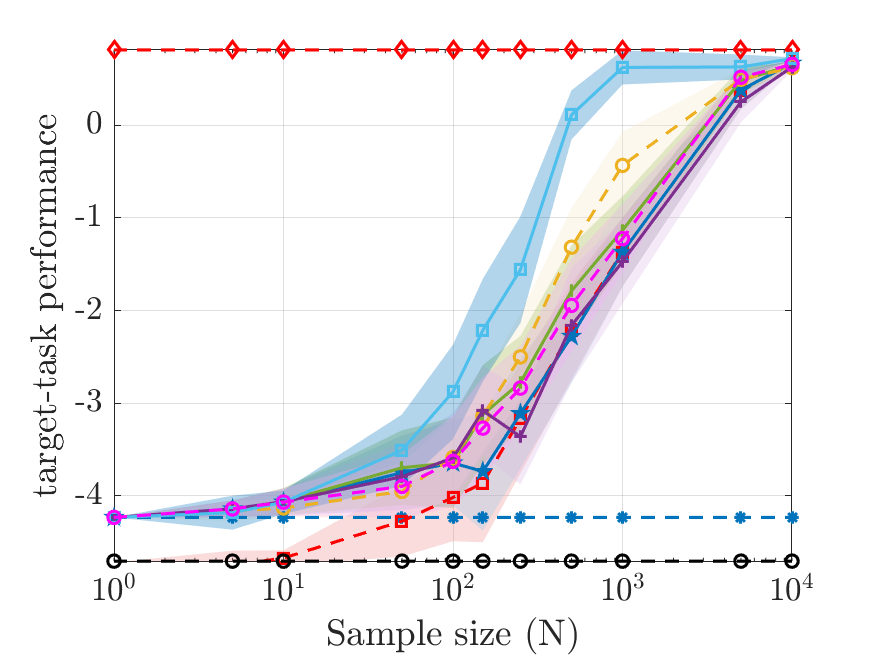}
    \caption{IBE comparison.}
   
  \end{subfigure}
  \hspace{-5mm}
  \begin{subfigure}{0.35\textwidth}
    \includegraphics[width=\linewidth]{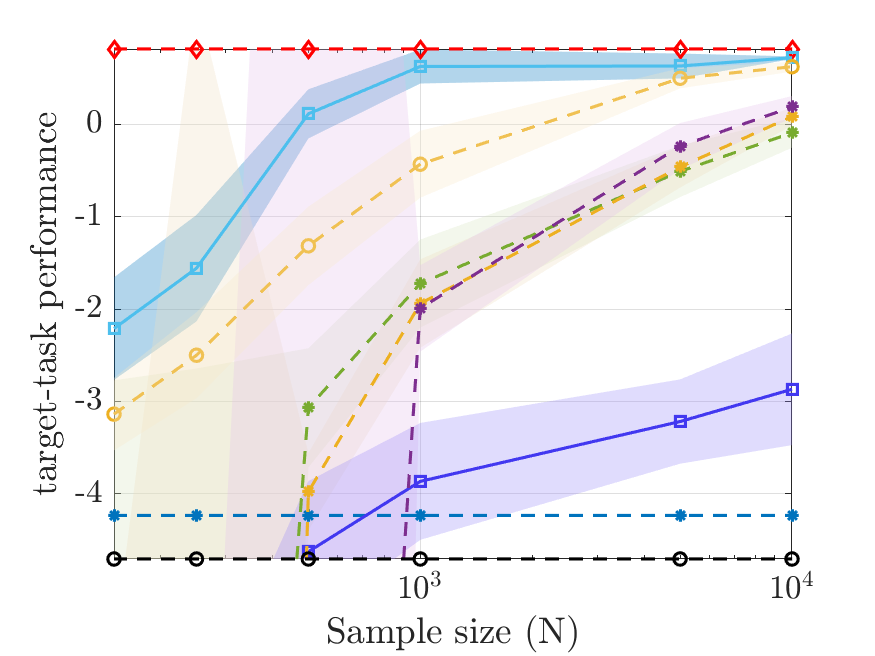}
    \caption{SOTA.}
    
  \end{subfigure}

  \includegraphics[width=.85\textwidth]{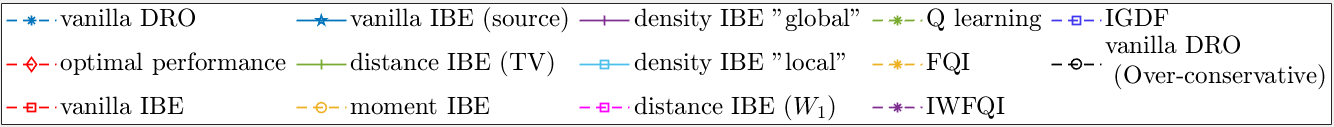}
 \caption{\small{Target domain performance for the robust setting averaged over $20$ runs as a function of sample size for Cliff Walking environment, and the corresponding $95\%$ confidence intervals.}}
  \label{fig: cliff_walking_robust}
\end{figure}

\begin{figure}
  \centering
  \begin{subfigure}{0.35\textwidth}
    \includegraphics[width=\linewidth]{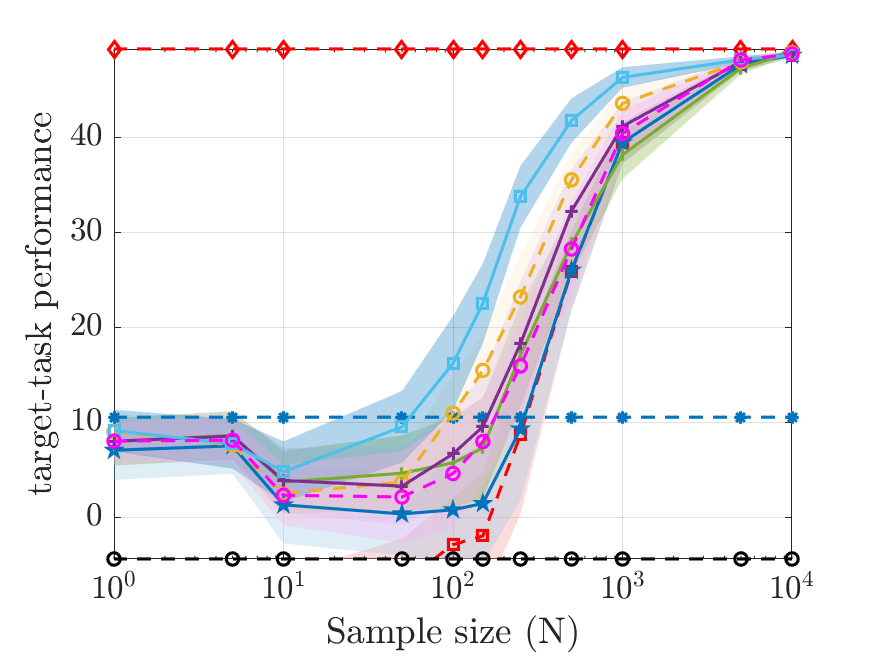}
    \caption{IBE comparison.}

  \end{subfigure}
  \hspace{-5mm}
  \begin{subfigure}{0.35\textwidth}
    \includegraphics[width=\linewidth]{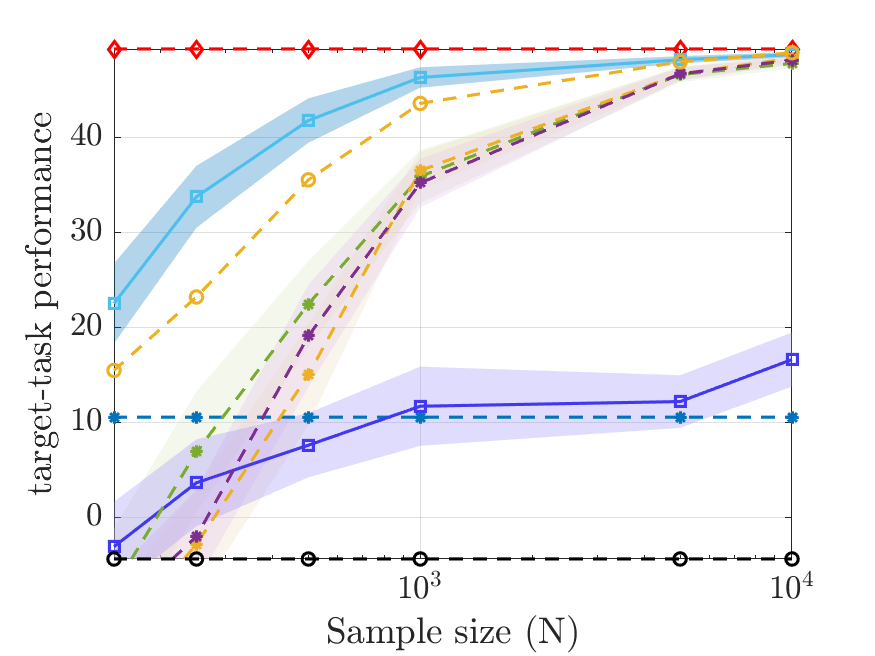}
    \caption{SOTA.}
  \end{subfigure}
 
      \includegraphics[width=.85\textwidth]{Figures/R_legends_new.PNG}
 \caption{\small{Target domain performance for the robust setting averaged over $20$ runs as a function of sample size for Taxi environment, and the corresponding $95\%$ confidence intervals.}}
  \label{fig: Taxi_robust}
\end{figure}

\begin{figure}
  \centering
  \begin{subfigure}{0.35\textwidth}
    \includegraphics[width=\linewidth]{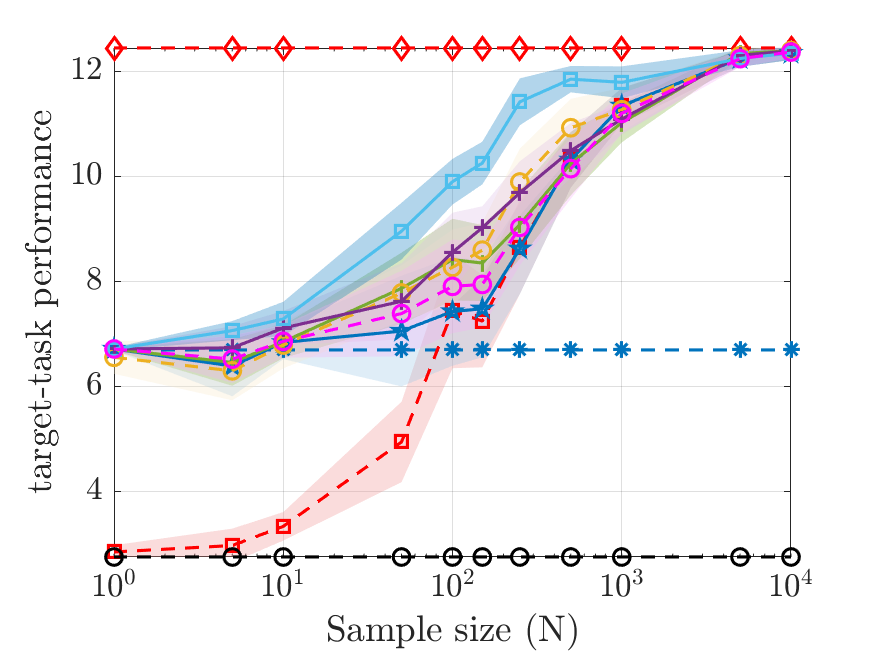}
    \caption{IBE comparison.}
  
  \end{subfigure}
  \hspace{-5mm}
  \begin{subfigure}{0.35\textwidth}
    \includegraphics[width=\linewidth]{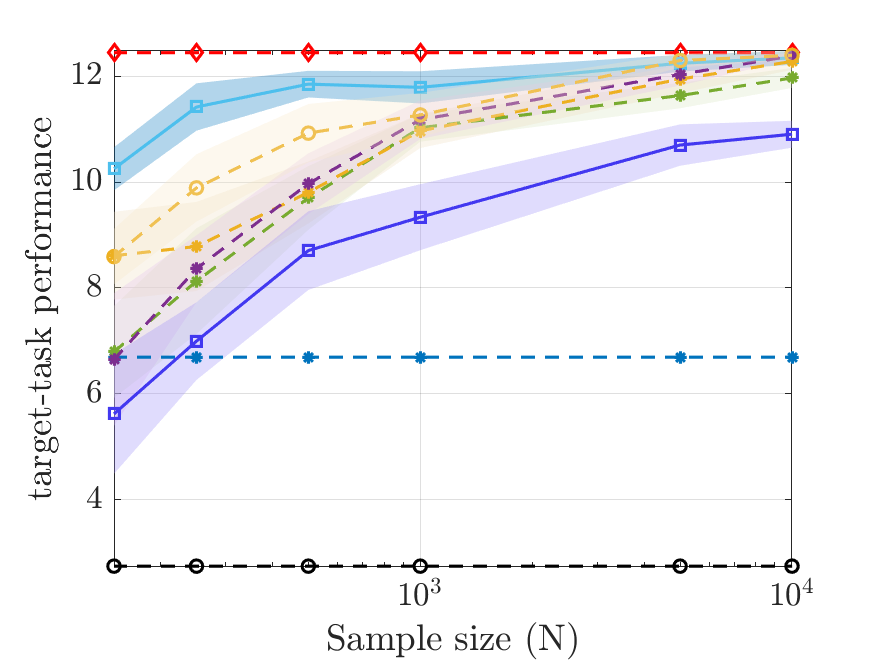}
    \caption{SOTA.}
  \end{subfigure}
 
  \includegraphics[width=.85\textwidth]{Figures/R_legends_new.PNG}
 \caption{\small{Target domain performance for the robust setting averaged over $20$ runs as a function of sample size for Frozen Lake environment, and the corresponding $95\%$ confidence intervals.}}
  \label{fig: Cart_Pole_robust}
\end{figure}

\begin{figure}
  \centering
  \begin{subfigure}{0.35\textwidth}
    \includegraphics[width=\linewidth]{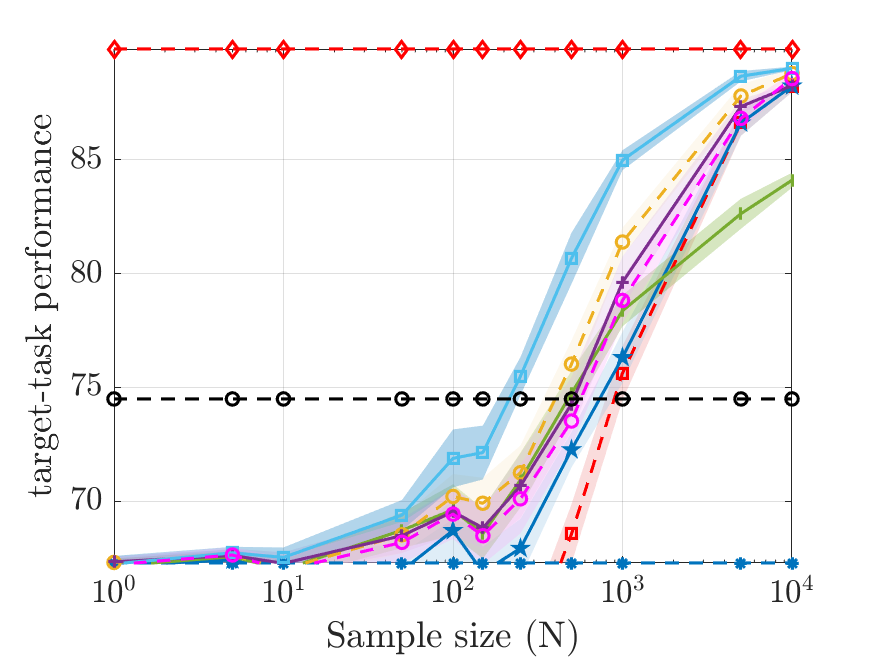}
    \caption{IBE comparison.}

  \end{subfigure}
  \hspace{-5mm}
  \begin{subfigure}{0.35\textwidth}
    \includegraphics[width=\linewidth]{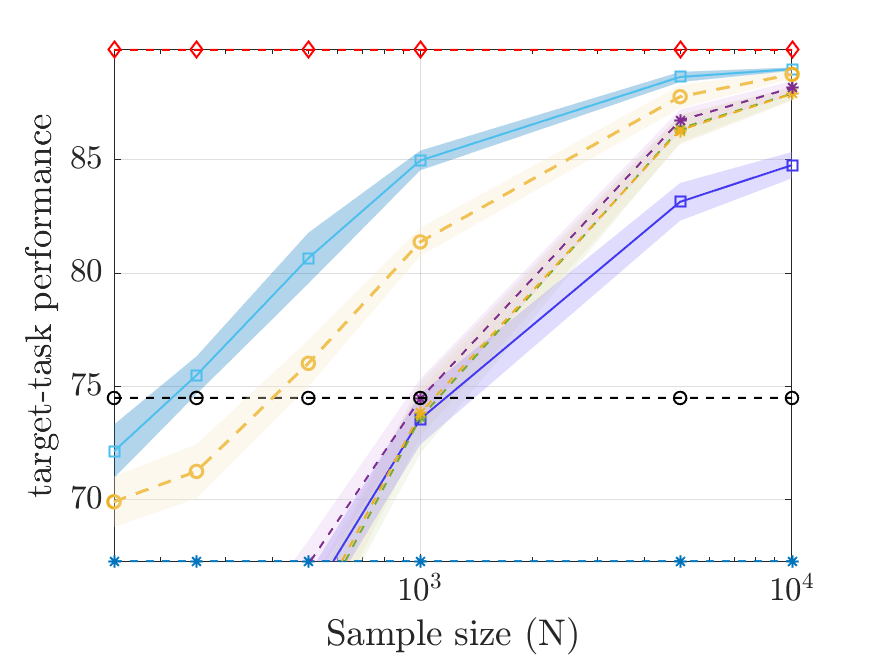}
    \caption{SOTA.}
  
  \end{subfigure}

  \includegraphics[width=.85\textwidth]{Figures/R_legends_new.PNG}
    
 \caption{\small{Target domain performance for the robust setting averaged over $20$ runs as a function of sample size  for Acrobot environment, and the corresponding $95\%$ confidence intervals.}}
  \label{fig: Acrobot_robust}
\end{figure}

\begin{figure}
  \centering
  \begin{subfigure}{0.35\textwidth}
    \includegraphics[width=\linewidth]{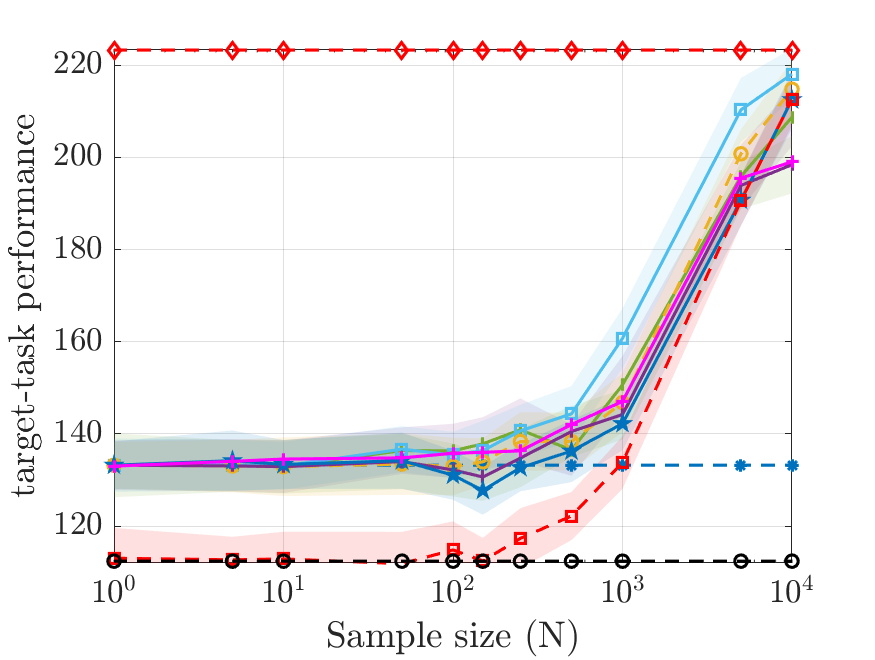}
    \caption{IBE comparison.}

  \end{subfigure}
  \hspace{-5mm}
  \begin{subfigure}{0.35\textwidth}
    \includegraphics[width=\linewidth]{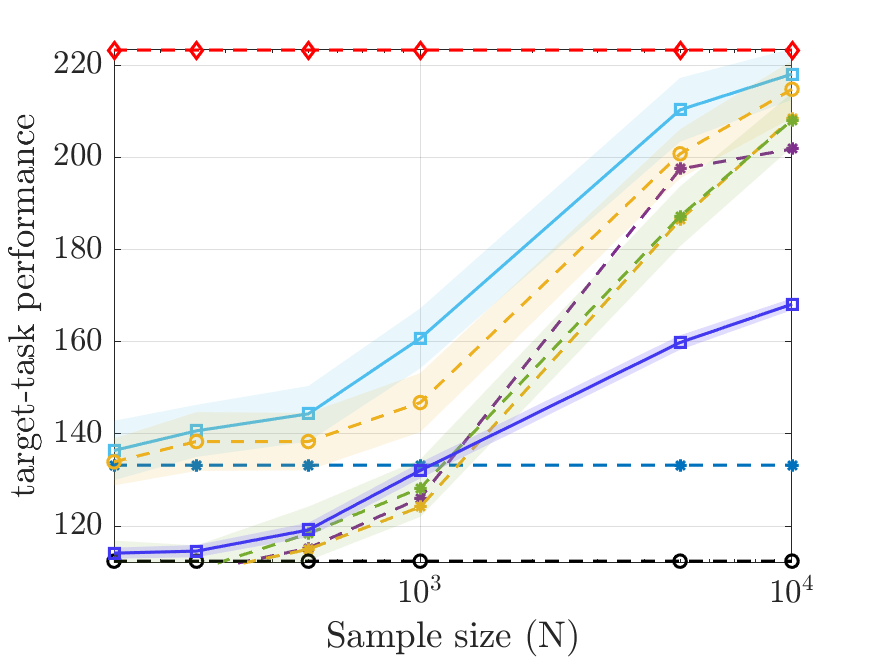}
    \caption{SOTA.}
  
  \end{subfigure}

  \includegraphics[width=.85\textwidth]{Figures/R_legends_new.PNG}
    
 \caption{\small{Target domain performance for the robust setting averaged over $20$ runs as a function of sample size  for Pendulum environment, and the corresponding $95\%$ confidence intervals.}}
  \label{fig: Pend_robust}
\end{figure}

\subsection{Evaluation error: Experimental verification}\label{Appendix: eval_error_exp}

In this experiment, we verify the theoretical results of the evaluation error bound presented in Theorem \ref{thm: eval error bound} as well as the convergence results of Corollary \ref{cor:consistency}. We consider the frozen lake environment, following the same settings and sampling procedure as in section \ref{section: numerical experiments}. The policy is estimated by Algorithm \ref{alg:valueiteration} using the transition kernels estimated using the approaches outlined in Table \ref{tab: estimation methods}. Then, each policy is evaluated on the target domain environment. We consider both non-robust and robust scenarios. Figure \ref{fig:ERROR_BOUND} and \ref{fig:ERROR_BOUND_R}  show the evaluation (EV) error (LHS of Equation \eqref{eqn: eval_err}) and the bound (RHS of Equation \eqref{eqn: eval_err}) as a function of the sample size for each estimation approach for the non-robust and robust settings, respectively.  As observed, the evaluation error is always upper bounded by the computed bound for all sample sizes. We also observe that the error and the bound decrease as the sample size increases which verifies the convergence results in Corollary \ref{cor:consistency}.\par
\begin{figure}
    \centering
\includegraphics[width=1\linewidth]{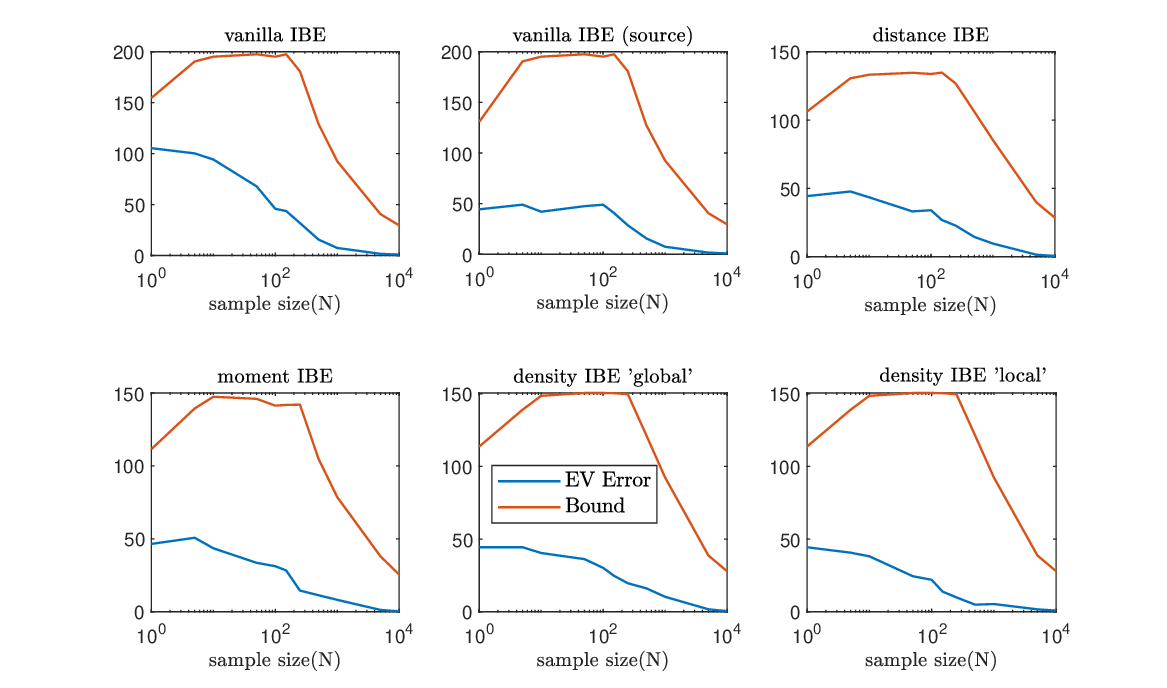}
    \caption{\small{The Evaluation (EV) Error and the bound for each MLE method as a function of the sample size for the non-robust scenario}}
    \label{fig:ERROR_BOUND}
\end{figure}

\begin{figure}
    \centering
\includegraphics[width=1\linewidth]{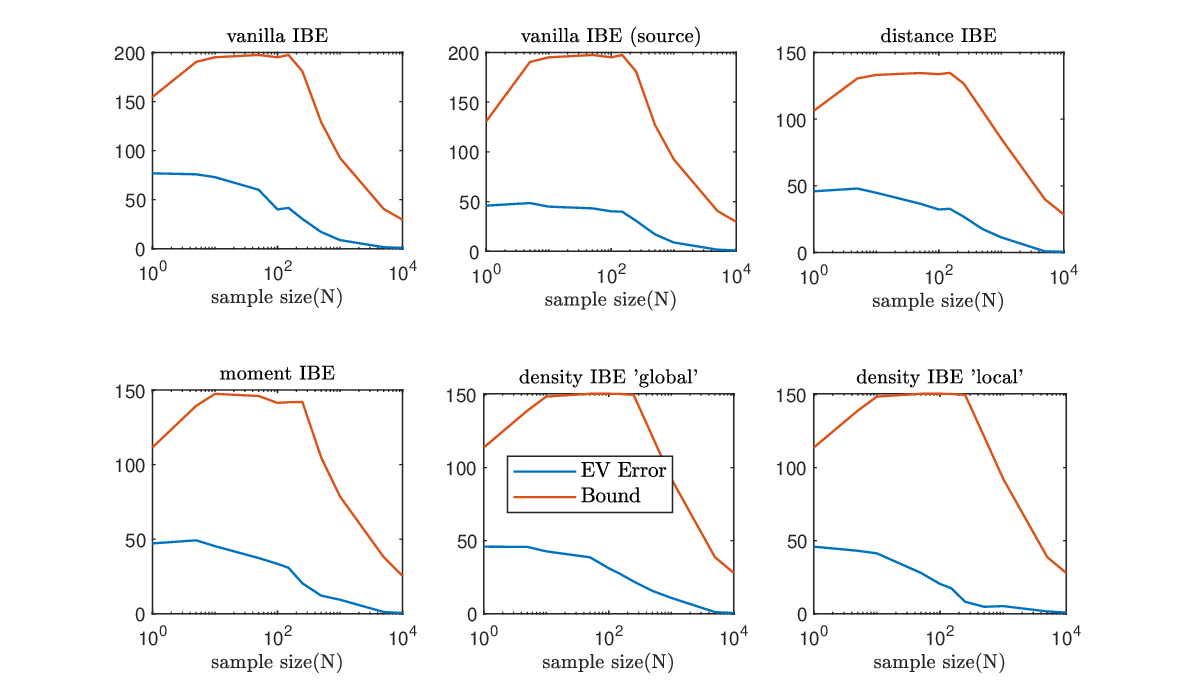}
    \caption{\small{The Evaluation (EV) Error and the bound for each IBE method as a function of the sample size for the robust setting.}}
    \label{fig:ERROR_BOUND_R}
\end{figure}

\subsection{The gain of prior information}\label{Appendix: cramer_rao_exp}
In this experiment, we investigate the information gain from prior information about the estimated kernels. In particular, given a $k\times k$ FIM $J(\kpt)$, we seek to measure the amount of information that $n$ i.i.d observations provide about the kernel $\kpt$, given some prior information. Therefore, we consider the following optimization problem:
\begin{align} \label{FIM prog}
    \min_{q\in \Delta(\mathcal{S})} \operatorname{trace}(J(q)) \quad \textup{s.t.}\quad \mathbb{E}^{q}[x^{j}] \leq c_j, 1\leq j\leq M.  
\end{align}
Here, we assume knowledge of the bounds \(c_j\) first $M$ moments. Note that we examine the worst-case scenario for the information the FIM carries by considering the minimum. We compare two scenarios: (i) `Without Constraints', in which we omit the expectation constraints, and (ii) `With Constraints', where the constraints are included. Figure \ref{fig:FIM} shows the optimal value of the $\operatorname{trace}(J(\kpn))$ (\textit{left}) and its inverse (\textit{right}) on a logarithmic scale. As shown, including the constraints significantly increases the amount of information about the estimator leading to a much better estimate. \par

\begin{figure}
    \centering

\includegraphics[width=0.7\linewidth]{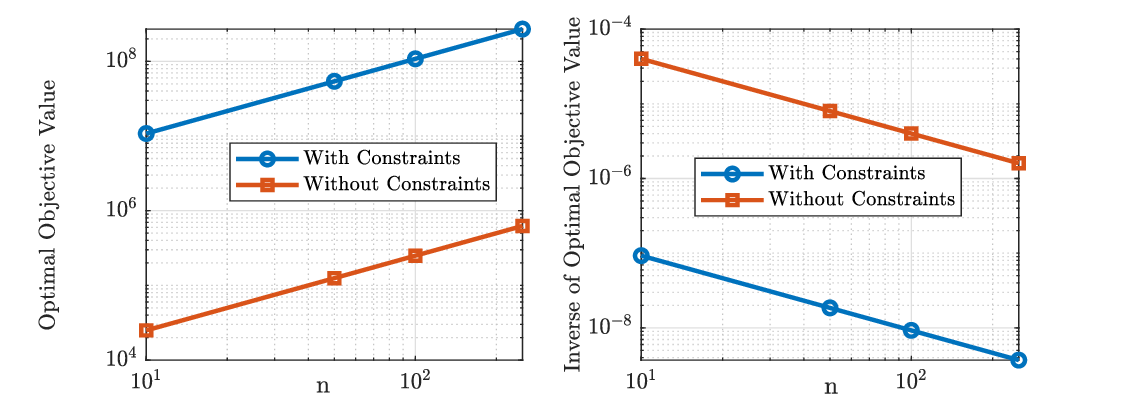}

    \caption{\small{The trace of the FIM as a function of the sample size $n$ (left) and its inverse (right).}}
    \label{fig:FIM}
\end{figure}

\end{document}